\documentclass[11pt]{article}



\usepackage[utf8]{inputenc}
\usepackage{soul}
\usepackage{makecell}
\RequirePackage{amsthm,amsmath,amsfonts,amssymb}
\RequirePackage[colorlinks,citecolor=blue,urlcolor=blue]{hyperref}
\RequirePackage{graphicx,subcaption,makecell,float}
\RequirePackage{xcolor, xspace}
\RequirePackage{bbm,mathtools,enumitem}
\usepackage[top=1in, bottom=1in, left=1in, right=1in]{geometry}
\usepackage{bm}
\usepackage{microtype}
\usepackage{graphicx, subcaption}
\usepackage{booktabs} 
\usepackage{etoolbox}
\usepackage{multirow}

\usepackage{xcolor}
\usepackage{bbm}

\usepackage{amsmath}
\usepackage{breqn}

\usepackage{algorithmic}
\usepackage{algorithm}

\usepackage{lipsum}

\usepackage{caption}
\usepackage{subcaption}

\usepackage[backend=biber,
style=alphabetic
]{biblatex}
\newtheorem{example}{Example}
\newtheorem{theorem}{Theorem}
\newtheorem{lemma}{Lemma}
\newtheorem{prop}{Proposition}
\newtheorem{assumption}{Assumption}
\newtheorem{definition}{Definition}
\newtheorem{corollary}{Corollary}

\newcommand{\Exp}{\mathbb{E}}

\newcommand{\Hilb}{\mathcal{H}}
\newcommand{\A}{\mathcal{A}}
\newcommand{\State}{\mathcal{S}}

\newcommand{\Bee}{V}
\newcommand{\R}{\mathbb{R}}
\newcommand{\Prob}{\mathbb{P}}

\newcommand{\coeff}{\beta}

\newcommand{\holde}{\beta_0}
\newcommand{\maxholde}{\beta_{max}}
\newcommand{\holder}{\beta_0}
\newcommand{\disc}{\gamma}
\newcommand{\grad}{\tilde{\bm{g}}}

\allowdisplaybreaks

\newcommand{\p}{\phi}

\newcommand\inner[2]{\left\langle #1, #2 \right\rangle}

\newcommand{\norm}[1]{\left\lVert#1\right\rVert}
\newcommand{\bracket}[1]{\left(#1\right)}
\newcommand{\sq}[1]{\left[#1\right]}
\newcommand{\abs}[1]{\left|#1\right|}

\addbibresource{refs.bib}
\usepackage{setspace}
\title{Convergence and Optimality of Policy Gradient Methods in Weakly Smooth Settings}

%

 \author{
  Matthew S. Zhang
  \thanks{
  Department of Computer Science at
  the University of Toronto, and Vector Institute, \texttt{matthew.zhang@mail.utoronto.ca}
  }
  \and Murat A. Erdogdu\thanks{
  Department of Computer Science and Department of Statistical Sciences at
  the University of Toronto, and Vector Institute, \texttt{erdogdu@cs.toronto.edu}
 }
 \and Animesh Garg\thanks{
  Department of Computer Science at
  the University of Toronto, and Vector Institute, \texttt{garg@cs.toronto.edu}
 }
}
\begin{document}

\maketitle

\begin{abstract}
Policy gradient methods have been frequently applied to problems in control and reinforcement learning with great success, yet existing convergence analysis still relies on non-intuitive, impractical and often opaque conditions. In particular, existing rates are achieved in limited settings, under strict regularity conditions. In this work, we establish explicit convergence rates of policy gradient methods, extending the convergence regime to weakly smooth policy classes with $L_2$ integrable gradient. We provide intuitive examples to illustrate the insight behind these new conditions. Notably, our analysis also shows that convergence rates are achievable for both the standard policy gradient and the natural policy gradient algorithms under these assumptions. Lastly we provide performance guarantees for the converged policies.
\end{abstract}

\section{Introduction}

Modern Reinforcement Learning (RL) has solved challenges in diverse fields such as finance, healthcare, and robotics \cite{deng2016deep, yu2019reinforcement, kober2013reinforcement}. Nonetheless, the theory behind these methods remains poorly understood, with convergence results being limited to narrow classes of problems. Classical approaches to RL theory focus on tabular problems where discrete techniques can be applied (see \cite{agarwal2020optimality, sidford2018near}). However, most practical problems exist in continuous, high-dimensional domains \cite{doya2000reinforcement}, and may even be infinite-dimensional or non-compact.

Theoretical results in continuous domains do not effectively characterize practical algorithms. Value-based estimators have obtained strong results in some regimes such as linear MDPs, both in on- and off-line settings \cite{cai2019provably, yang2019sample}. In contrast to value-based methods, direct policy estimators possess numerous advantages, in that they are theoretically insensitive to perturbations in the problem parameters, and typically satisfy better smoothness assumptions. Nonetheless, bounds for direct parameterizations of the policy have been less successful. They either restrict the cardinality or size of the space \cite{agarwal2020optimality}, or apply strong assumptions on the policy and MDP \cite{liu2019neural, xu2020improving, liu2020improved}. This conflicts with practical results, where convergence often occurs without boundedness or smoothness preconditions on the function approximator.

Consequently, in this paper, we analyse two key questions: (i) how can we \textit{relax existing conditions on MDPs} while retaining guarantees for convergence, (ii) how can we bound \textit{the performance of the policies} under these conditions. Arguably, the convergence of gradient algorithms needs to rely on some constraints of the function class. Prior work has relied on assumptions of (a) smoothness and (b) absolute boundedness of the gradient. However, these conditions are overly restrictive and exclude many useful function approximators.

\textbf{Summary of Contributions.} We make contributions with respect to each of these assumptions: (a) strong smoothness is relaxed to weak smoothness (Hölder conditions); (b) absolute boundedness of the gradient is relaxed to $L_2$ integrability under the visitation distribution. While this is an important theoretical development, it also expands the scope of practical convergence results. We include many practical examples of MDPs and policies that satisfy our criteria, with applications to exploration and safety in reinforcement learning. To the best of our knowledge, ours is the first study to consider this setting.

Under these assumptions, we find (Corollary \ref{cor:convergence}) that with an appropriate learning rate, the gradients satisfy $\frac{1}{T} \sum_{t=1}^T \Exp \sq{\norm{\nabla J(\theta_{t-1})}^{2}} \leq \epsilon$ for both the standard and natural policy gradient for large enough $T, B$, where $T$ is the number of iterations and $B$ is the batch size. 
We also show (Corollary \ref{cor:opt}) that natural policy gradient satisfies the following bound:
\begin{align}
\hspace{-0.12in} \min_{t=0,1,\ldots {T-1}} J(\theta_*) - \Exp\sq{J(\theta_t)} \leq \epsilon + \mathcal{O} \left(\frac{\sqrt{E_{\Pi}} }{1-\gamma} \right),    
\end{align}
for large enough $T, B$,
where $E_\Pi$ can be tuned by choosing an appropriately regular policy class and $\theta_*$ is the maximizer of $J$. $E_\Pi$ is formally defined in \S5. Under a strong additional assumption, standard policy gradient also satisfies $\min_{t = 0,1 \ldots {T-1}} J(\theta_*) - \Exp \sq{J(\theta_t)} \leq \epsilon$ for large enough $T, B$.
In the strictly smooth limit these results have previously been discovered \cite{agarwal2020pc, xu2020improving, zou2019finite}, although our results hold for a wider range of functions and MDPs.

The remainder of the paper is structured as follows: in \S2 we cover the mathematical formulation of MDPs; in \S3 we introduce the policy gradient algorithm as well as our assumptions. In \S4, we list several candidate policies that satisfy our assumptions, and demonstrate their utility in a variety of contexts. \S5 then states our main results; \S6 summarizes works related to optimization and RL theory.

\section{Background}
\subsection{Markov Decision Processes}
Let a state-space be denoted by $\mathcal{S}$, and an action-space by $\mathcal{A}$. Let a transition measure $P(\cdot|s,a)$ and a reward measure $R(\cdot|s,a)$ be probability measures on $\mathcal{S}$ and $\R$ respectively, both conditioned on variables $(s,a) \in \mathcal{S} \times \mathcal{A}$. A Markov Decision Process $\mathcal{M}$ is formally defined as a tuple $(\mathcal{S}, \mathcal{A}, P, R, \gamma)$, where $\gamma \in [0,1)$ is the discount factor. By abuse of notation, we use the same notation for a measure and its density, unless otherwise specified. Let $\norm{z} = \norm{z}_2$ the $2$-norm for vectors, $\norm{z}_{op}$ the operator norm for matrices, and $\norm{p-q}_{TV} \triangleq \int \abs{p(x)-q(x)} dx$ the total variation distance for signed measures. Hereafter we assume that the absolute magnitude of the rewards are bounded, i.e. $R(\cdot|s,a)$ only has support on $[-\alpha, \alpha]$ for some $\alpha \geq 0$, and all $s,a$.

\textbf{Policies:} For a given state $s \in \mathcal{S}$, we denote a stochastic policy with $\pi(\cdot|s)$, which is a probability distribution over $\mathcal{A}$. 

\textbf{Trajectories:} To generate trajectories, we start from an initial state distribution $\rho$, and then at each time $t \in \mathbb{N}$, we sample an action from the policy: $a_t \sim \pi(\cdot | s_t)$. Subsequently a state and reward are queried as $s_{t+1} \sim P(\cdot|a_t, s_t), r_t \sim R(\cdot|a_t, s_t)$, and the process continues indefinitely. Consequently $\pi, \rho, \mathcal{M}$ together parameterize a probability distribution on the set of trajectories. Letting $\rho$ be fixed, we write this as $\{(s_t, a_t), t = 0, 1, 2, \ldots\} \sim \text{MDP}$ following $\pi$.

\textbf{Value Functions:} We can define the value function as: $V_\pi(s) \triangleq$ $\Exp[\sum_{t=0}^{\infty} \gamma^t r_t|s_0=s],$
and the Q-function as: $Q_\pi(s, a) \triangleq \Exp[\sum_{t=0}^\infty \gamma^t r_t|s_0=s, a_0=a]$.
Note that both expectations are taken over trajectories following the policy $\pi$. If $\abs{r_t} \leq \alpha$ almost surely, both functions are bounded by $[-\alpha/(1-\gamma),\alpha/(1-\gamma)]$ almost surely as well. We can also define the advantage function $A_\pi(s,a) \triangleq Q_\pi(s,a) - \Bee_\pi(s).$ 

\textbf{Discounted Visitation:}It will be useful to define the sum of time-discounted visitation probability densities through the following: $d_{\pi}^{s}(s,a) = (1-\gamma) \sum_{t=0}^{\infty} \gamma^t p_t(s,a|s_0=s)$ where $p_t(\cdot|s_0=s)$ is the conditional probability density of $s, a$ being sampled at time $t$ from the MDP following $\pi$, given the initial state $s_0=s$. We overload notation and write $d_\pi^\rho(s,a) = \int d_\pi^{s'}(s,a) \rho(s') \, ds'.$ 
This defines a probability density function on $\State \times \A$. We also write $H_{\theta}^{\rho}(s) = \int d_\theta^\rho(s,a) \, da$ for the state-component of the visitation distribution.

\textbf{Reinforcement Learning}
A reinforcement learning agent is one which produces a sequence of policies $\pi_t$ based on the queried states $s_t, r_t$, which seeks to iteratively maximize the value function: $J(\pi) = \int V_\pi(s) \rho(s) \, ds$. The existence of an optimum in the space of stochastic functions has been shown as a classical result \cite{bellman1954theory}.

\section{Algorithms}
\subsection{Policy Class}

In this work, we limit our discussion to exponential policy classes which are continuously differentiable. In particular, we denote the distribution of an exponential policy, parameterized by a variable $\theta \in \Theta \subseteq \R^N$, such that $\pi_{\nu_\theta}(a|s) = \frac{\exp(\nu_\theta(s,a))}{\int_{\A} \exp(\nu_\theta(s,a)) \, da}$ where $\nu_\theta: \State \times \A \mapsto \R$.
We require that the integral $\int_{\A} \exp(\nu_\theta(s, a)) \, da < \infty$ is finite for all $\theta \in \Theta, s \in \State$, and that $\nu_\theta(s,a)$ is differentiable in $\theta$ for all $s, a$. Let us define $\pi_\theta \triangleq \pi_{\nu_\theta}$, $J(\theta) \triangleq J(\pi_\theta)$ and use $\theta$ instead of $\pi_\theta$ in subscripts where there is no confusion. Let us denote the score function as $\psi_{\theta}(s,a) = \nabla_\theta \log \pi_\theta(a|s)$. Then the gradient can be written as $\nabla J(\theta) = \Exp_{s,a \sim d_{\theta}^\rho}[Q_{\theta}(s,a) \psi_{\theta}(s,a)]$. 
While successful tabular approaches rely on explicit computation of each softmax probability, this is not feasible for most MDPs where the action space is infinite and possibly uncountable. Typically some form of well-chosen function class is required to address this issue. In this work, we consider all softmax functions that satisfy the following smoothness properties:
\begin{assumption}
    (Smoothness of Policy Class) Consider policies $\pi_\theta \propto \exp(\nu_\theta)$. We require that $\pi$ obeys the following two smoothness conditions:
    \begin{align}
        &\hspace{-0.13in}\int_{\A} \pi_{\theta}(a|s) \log \frac{\pi_{\theta}(a|s)}{\pi_{\theta+\eta}(a|s)} \, da \, \leq C_{\nu,1} \norm{\eta}^{\beta_1} \label{eq:KL}, \\
        \label{eq:TV}
        &\hspace{-0.13in} \norm{\psi_{\theta}(s,a) - \psi_{\theta+\eta}(s,a)} \!\leq\! C_{\nu,2} \norm{\eta}^{\beta_2} \!\!\!\!,
    \end{align}
    where the constants $C_{\nu,1}, C_{\nu,2} \geq 0$, $\beta_1 \in [1, 2], \beta_2 \in (0,1]$ are valid for all $\theta, s,a$. Consequently we define $\holde = \min(\beta_1/4, \beta_2)$ as the dominant order of smoothness. 
    \label{as:smooth}
\end{assumption}
    It will also be useful in our analysis to define $\maxholde = \max(\beta_1/4, \beta_2)$.
    We note that \eqref{eq:KL} is a Hölder condition on the Kullback–Leibler (KL) divergence of the policies, while \eqref{eq:TV} is a Hölder requirement on the score function.
    
    \textbf{Remarks:} $\beta_1 < 2, \beta_2 < 1$ are weakly smooth cases. This is a weaker assumption than traditional assumptions on Lipschitz smoothness; particularly, it allows for slow tail decay. 
    It is also possible to relax this assumption to local conditions (i.e. only holding when $\norm{\eta} \leq C$) with Lipschitz behaviour at larger scales. 

    We introduce an additional assumption on the second moment of the score function:
    \begin{assumption}
       (Boundedness of Moments) Assume that the score function is absolutely bounded in $L_2$ across all policies i.e. that the following holds for all $\theta$
       \begin{align}
           &\int_{\State} \int_{\A} \norm{\psi_{\theta}(s,a)}^2 d_{\theta}^\rho(s,a) da \, ds \leq \psi_{\infty},
       \end{align}
       for any $\theta$ in our parameter space, where $\psi_{\infty} < \infty$ is a constant independent of $\theta$.
       \label{as:moment}
       
    \end{assumption}

\textbf{Remarks:} Higher order integrability assumptions are possible. In fact, if $\norm{\psi_\theta} \leq \sqrt{\psi_\infty}$ holds $d_\theta^\rho$-almost surely, we recover the standard bounded gradient assumption found in other works, e.g. \cite{xu2020improving, liu2020improved}.

   Finally, we require the following standard assumption (see e.g. \cite{xu2020improving, zou2019finite}) which we use to show smoothness of the objective function. 
    \begin{assumption}
        \label{as:erg}
        (Ergodicity) We have for all states $s_0 \in \State$:
        \begin{equation*}
            \norm{\Prob^n_{\theta}(\cdot|s_0) - \rho_*(\cdot)}_{TV} \leq C_0 \delta^n,
        \end{equation*}
        where $\Prob^n_{\theta}$ is the $n$-step state transition kernel following $\pi_\theta$, $\rho_*$ is the invariant state distribution, $C_0 \geq 0, \delta < 1$ are constants independent of $s_0, \theta$.
    \end{assumption}

\subsection{Policy Gradient}
\begin{algorithm}[!th]
\begin{algorithmic}[1]
    \STATE Initial parameter $\theta_0$.
    \FOR{Step $t = 1, \ldots, T$}
       \FOR{$i=1, \ldots B$}
            \STATE Let $j \sim \text{Geom}(1-\gamma)$, $h \sim \text{Geom}(1-\gamma^{1/2})$, $\tau = j + h$.
            \STATE Sample $\left(s_{0}, a_{0}, \ldots s_{\tau}, a_{\tau} \right) \sim \text{MDP}$ following $\pi_{\theta_{t-1}}$.
            \STATE $s_{t,i} \gets s_{j}, a_{t,i} \gets a_{j}$.
            \STATE $v_{t,i} \gets \sum_{u=j}^\tau \gamma^{(u-j)/2} r_{u}$, $r_{u} \sim R(\cdot|s_{u}, a_{u})$. 
        \ENDFOR
        \STATE Choose $h_t$ specified in our learning rates section.
        \STATE  $\theta_{t} \gets \theta_{t-1} + \frac{h_t }{B} \sum_{i=1}^B v_{t,i} \psi_{\theta_{t-1}}(s_{t,i}, a_{t,i})$.
    \ENDFOR
    \STATE Return $\theta_T$.
\end{algorithmic}
\caption{Policy Gradient for Hölder Smooth Objectives}
\end{algorithm}

\begin{algorithm}[!th]
\begin{algorithmic}[1]
    \STATE Initial parameter $\theta_0$, stability parameter $\xi \in (0,1]$.
    \FOR{Step $t = 1, \ldots, T$}
        \FOR{$i=1, \ldots B$}
            \STATE Let $j \sim \text{Geom}(1-\gamma)$, $h \sim \text{Geom}(1-\gamma^{1/2})$, $\tau = j + h$.
            \STATE Sample $\left(s_{0}, a_{0}, \ldots s_{\tau}, a_{\tau} \right) \sim \text{MDP}$ following $\pi_{\theta_{t-1}}$.
            \STATE $s_{t,i} \gets s_{j}, a_{t,i} \gets a_{j}$.
            \STATE $v_{t,i} \gets \sum_{u=j}^\tau \gamma^{(u-j)/2} r_{u}$, $r_{u} \sim R(\cdot | s_{u}, a_{u})$.
        \ENDFOR        
        \FOR{$i=1, \ldots B$}
            \STATE Let $j \sim \text{Geom}(1-\gamma)$.
            \STATE Sample $\left(s'_{0}, a'_{0}, \ldots s'_{j}, a'_{j} \right) \sim \text{MDP}$ following $\pi_{\theta_{t-1}}$.
            \STATE $s'_{t,i} \gets s'_{j}, a'_{t,i} \gets a'_{j}$.
        \ENDFOR
        \STATE Choose $h_t$ specified in our learning rates section.
        \STATE $K_t \gets \frac{1}{B} \sum_{i=1}^B \psi_{\theta_{t-1}}(s'_{t,i}, a'_{t,i}) \psi_{\theta_{t-1}}^\top(s'_{t,i}, a'_{t,i})$.
        \STATE $\theta_{t} \gets \theta_{t-1} +  \frac{h_t }{B} (K_t + \xi I)^{-1}  \sum_{i=1}^B v_{t,i} \psi_{\theta_{t-1}}(s_{t,i}, a_{t,i})$.
    \ENDFOR
    \STATE Return $\theta_T$.
\end{algorithmic}
\caption{Natural Policy Gradient for Hölder Smooth Objectives}
\end{algorithm}

Given these assumptions on the policy class, we can apply direct policy ascent on the space of parameters in order to get the gradient update
\begin{equation}
    \theta_t = \theta_{t-1} + h_t \nabla_{\theta} J(\theta_{t-1}),
\end{equation}
where $h_t \in \R$ is an adaptive step size. Alternatively, natural policy gradient (NPG), first introduced by \cite{kakade2001natural}, is a parameter invariant method that applies the following update
\begin{equation}
    \theta_t = \theta_{t-1} + h_t K^{\dagger}(\theta_{t-1}) \nabla_{\theta} J(\theta_{t-1}),
\end{equation}
    where $K(\theta) = \Exp_{s,a \sim d_{\theta}^\rho} \sq{\psi_\theta(s,a) \psi_{\theta}(s,a)^\top}$. Here $(\cdot)^{\dagger}$ is the matrix pseudo-inverse. The advantage of this method is that the optimization landscape becomes well-behaved.
    
    Since the true loss function and Fisher information matrix are not available to us, we estimate each of them through sampling. In particular, we use the following minibatch estimators for $\nabla J$ and $K$:\small
    \begin{align}
        &\widehat{\nabla J(\theta_{t-1})} = \frac{1}{B} \sum_{i=1}^B v_{t,i} \psi_{\theta_{t-1}}(s_{t,i}, a_{t,i}), \\
        &K_t = \frac{1}{B} \sum_{i=1}^B \psi_{\theta_{t-1}}(s_{t,i}, a_{t,i}) \psi_{\theta_{t-1}}^\top(s_{t,i}, a_{t,i}),
    \end{align}\normalsize
    and we use $(K_t + \xi I)^{-1}$ to approximate the inverse of the Fisher matrix, where $\xi > 0$ is a hyperparameter that guarantees the estimator is numerically stable, $v_{t,i}$ is an unbiased estimator for $Q_{\theta_{t-1}}(s,a)$ given in Algorithms 1-2 wherein we follow \cite[Algorithm 1]{zhang2020global}, and to sample from the occupancy measure $d_\theta^\rho$, we sample trajectories following \cite[Algorithm 1]{agarwal2020optimality}. This procedure is summarized in our Algorithms.
    
    \subsection{Learning Rates}
    In the sequel, we consider the following learning rates: \textbf{(i)} constant $h_t = \lambda$, \textbf{(ii)} dependent on the total number of steps $h_t = \lambda T^{\frac{\holde-1}{\holde+1}}$, \textbf{(iii)} decaying $h_t = \lambda t^{-q}, q \in [0,1)$. We also state our Theorems \ref{th:con}-\ref{th:opt} more generally for any step size sequence $h_t$. 

\section{Applications}

    \begin{figure}
    \centering
        \begin{subfigure}{0.3\linewidth}
            \centering
            \includegraphics[width=\textwidth]{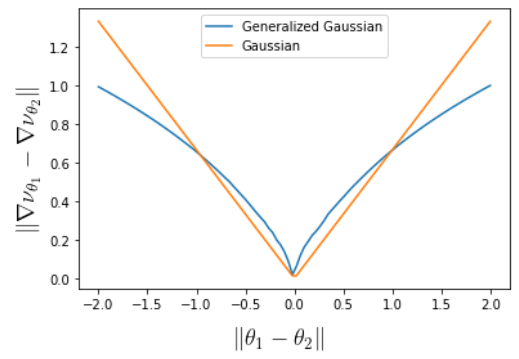}
            \label{fig:potential}
         \end{subfigure}
        \begin{subfigure}{0.3\linewidth}
            \centering
            \includegraphics[width=\textwidth]{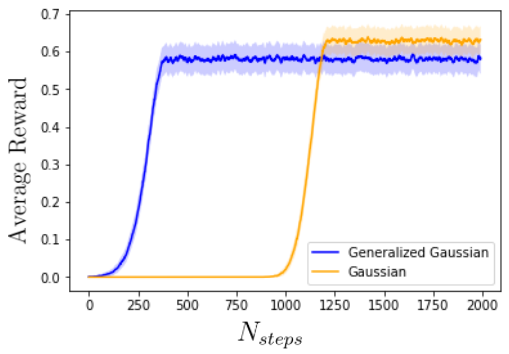}
            \label{fig:explore}
        \end{subfigure} 
        \begin{subfigure}{0.3\linewidth}
            \centering
            \includegraphics[width=\textwidth]{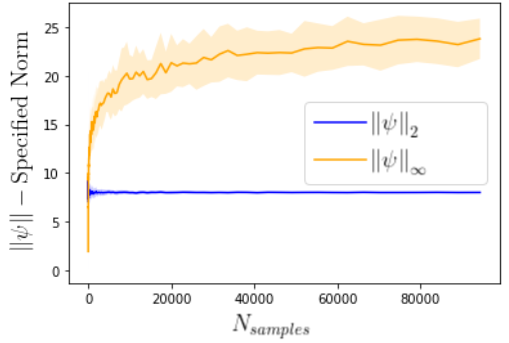}
            \label{fig:l2}
        \end{subfigure}
    \caption{(a) \textbf{Tail Growth}: Comparing the growth of $\psi_\theta$ in one-dimension for Gaussian policies versus the Generalized Gaussian (Example \ref{ex:ggp}) with $\alpha = 0.5$, for the $[0,0]$ state in the MountainCar environment. (b) \textbf{Exploration Performance}: Comparing the performance of Generalized Gaussian and the standard Gaussian policy, with $\alpha=0.5$, for the reward function found in Equation \eqref{eq:example}, $\abs{\theta^* - \theta} = 3.9$. The Generalized Gaussian significantly outperforms during the exploration phase. The result is similar for both PG and NPG. (c) \textbf{Gradient Norm Growth}: Comparing the growth of Example \ref{ex:l2} using as estimate of the $L_2$ norm described by Assumption \ref{as:moment}, versus $\max_{n} \norm{\psi_{\theta}(s_n,a_n)}$ with growing number of samples. While our criterion is stable, the $\max$ diverges logarithmically.}
    \label{fig:generalized_gaussian}
    \end{figure}
    
    We note two prominant applications of our assumptions: (i) an application of Assumption \ref{as:smooth} to exploration has been explicitly shown in \cite{chou2017improving}, (ii) Assumption \ref{as:moment} has been shown to apply to Safe RL via the work of \cite{papini2019smoothing}. Some additional examples will serve to illustrate these points below.
    
    For ease of demonstration, we consider policies and environments which independently satisfy Assumptions \ref{as:smooth}-\ref{as:moment} and Assumption \ref{as:erg} respectively, so long as the other component is sufficiently regular. The following policies illustrate why we might value weak smoothness:
    \begin{example}
        (Generalized Gaussian Policy) If we choose the parameter $\kappa \in (1,2]$, we can choose the generalized Gaussian distribution to parameterize our policy:
        \begin{equation}
            \nu_\theta(a|s) = -\abs{\inner{\p(s,a)}{\theta}}^{\kappa}.
        \end{equation}
        See Figure \ref{fig:generalized_gaussian}(a) for a visualization of the smoothness of this policy.
        \label{ex:ggp}
    \end{example}
    This distribution is covered by our assumptions; in contrast, previous works only permitted the strictly Gaussian distribution, where $\kappa = 2$. In particular, the tails of this distribution decay much more slowly than the tails of the Gaussian distribution, which has applications to exploration-based strategies. Indeed, let us consider the following single-state exploration problem with the following (deterministic) reward
    \begin{align}
        r(a_t) = \left(1 - (a_t - \theta^*)^2 \right) \mathbbm{1}_{\abs{a_t - \theta^*} \leq 1},
        \label{eq:example}
    \end{align}
     with policies $\nu_\theta(a) = -\abs{a - \theta}^\kappa$ for $\kappa = 2$ (a Gaussian policy) and $\kappa \in (1,2]$ (a generalized Gaussian). $\theta^* \in \R$ is an unknown target. If $\theta^*$ is far from our initial parameter, the agent will receive no gradient information so long as it does not sample actions from the region of interest $[\theta^*-1, \theta^*+1]$. For a policy with exponent $\kappa$, this occurs with probability
    \begin{align*}
        &\pi_{\kappa,\theta_0}(a_t \in [\theta^*-1, \theta^*+1]) \\
        &\quad =\frac{1}{2\Gamma(\kappa+1/\kappa)} \int_{\theta^*-1}^{\theta^*+1} \exp (-\abs{a-\theta_0}^\kappa) da,
    \end{align*}
    where $\pi_{\kappa,\theta_0}$ is the policy measure under the generalized Gaussian with exponent $\kappa$ and parameter $\theta_0$.
    If $\mathcal{U} = [\theta^* - 1, \theta^* + 1]$
    \begin{align*}
        &\pi_{\kappa,\theta_0}(a_t \in \mathcal{U}) - \pi_{2,\theta_0}(a_t \in \mathcal{U}) \\
        &\quad \geq \frac{1}{2\Gamma(\kappa+1/\kappa)} \int_{\theta^*-1}^{\theta^*+1} \exp (-\abs{a-\theta_0}^\kappa) \\
        &\ \ \ \ - \exp (-\abs{a-\theta_0}^2 + \log 2) da,
    \end{align*}
    which is $>0$ by simply comparing the terms in the exponents, when $\kappa \ll 2$ and $\abs{\theta^* - \theta_0} \gg 0$. This difference in probability can improve sample efficiency by many orders of magnitude. The empirical performance of the two policies is found in Figure \ref{fig:generalized_gaussian}(b), with a large improvement in number of samples needed to discover the correct action. This example can be easily generalized to more complex bandits/MDPs.

    Another example shows the richness of the weakly smooth assumption:
    \begin{example}
        ($p$-Harmonic minimizers) It is known \cite{coscia1999holder, lindqvist2017notes} that local minimizers $\nu$ to the $p$-Harmonic functional, for $p(x): \mathbb{R}^d \mapsto \mathbb{R}$
        \begin{equation}
            \mathcal{F}(\nu) \triangleq \int_{\mathbb{R}^d} \norm{\nabla \nu}^{p(x)} \, dx,
        \end{equation}
        are weakly smooth of some order $L(p) < 1$ when $p(x) > 1$. 
    \end{example}
    One can also restrict the integration above to a compact set. Consequently, these can serve as interesting potential functions. Note that we can add any bounded potential with  bounded and Lipschitz gradient to such functions while preserving Hölder regularity. Weak smoothness has also been shown for many other elliptic families of PDEs \cite{hoeg2020regularity,sciunzi2014regularity}, which may also motivate some candidate policies. 
    
    To illustrate the distinction of Assumption \ref{as:moment} from standard $\norm{\cdot}_\infty$ bounds, consider the following policy class:
    \begin{example}
     \label{ex:l2}
    (Safe Policies) Consider the following potential for $\theta \in [-1,1], \norm{\p^*} \leq 1$:
        \begin{equation}
            \nu_\theta(s,a) = -\theta \log \norm{\p(s,a) - \p^*}.
        \end{equation}
    \end{example}
    Under uniform dynamics and a uniform distribution of $\p(s,a)$ on a ball of radius $1$ around the origin, this family satisfies Assumption \ref{as:moment}, but not the standard assumption of absolute boundedness $\sup_{s,a} \norm{\psi_\theta(s,a)}_\infty < \infty$ (see Figure \ref{fig:generalized_gaussian}(c) for an empirical demonstration). This policy explicits avoids the state-action region around $\p^*$; this can arise practically when considering safety or instability constraints in RL.  

    \section{Main Results}
    In the sequel, define $E_{\Pi} \hspace{-3pt}= \max_{\theta \in \Theta} \hspace{-3pt} \Exp_{s,a \sim {d_{\theta}^\rho}} \hspace{-3pt} \sq{\norm{\psi_{\theta}(s,a)^\top K(\theta)^\dagger \nabla J(\theta) \hspace{-3pt} - \hspace{-3pt} A_{\theta}(s,a)}^2}$ and the quantity $D_\infty = 1 + \sup_{\theta_1, \theta_2 \in \Theta} \norm{\frac{ d_{\theta_1}^\rho}{d_{\theta_2}^\rho}}_\infty$.
    
    \begin{theorem}
        \label{th:con}
        Under Assumptions \ref{as:smooth}-\ref{as:erg}, \textbf{Policy Gradient} and \textbf{Natural Policy Gradient} achieves the following convergence:
        \small
    \begin{align*}
        \sum_{t=1}^{T} h_t \Exp \sq{\norm{\nabla J(\theta_t)}^2} &\leq C_{k,1} \left(J_* - J(\theta_{0})\right) + \frac{ C_{k,2}}{1-\gamma} \sum_{t=1}^T  \Biggl(h_t^{\frac{\beta_1}{4} +1}\left(\Exp \sq{\norm{\nabla J(\theta_t)}^{\frac{\beta_1}{4}+1}} + \left(\frac{\sigma}{(1-\gamma)\sqrt{B}} \right)^{\frac{\beta_1}{4}+1}\right) \\
        &\qquad + h_t^{\beta_2 + 1}\left(\Exp\sq{\norm{\nabla J(\theta_t)}^{\beta_2 + 1}} + \left(\frac{\sigma}{(1-\gamma)\sqrt{B}} \right)^{\beta_2 + 1} \right) \Biggr),
    \end{align*}
    \normalsize
        where the $k$ in $C_{k,1}, C_{k,2}$ is either $PG, NPG$ depending on whether policy gradient or natural policy gradient is considered. The exact values of these constants are defined in the Appendix. They do not depend on $\epsilon, \gamma$ for either algorithm. $\sigma = 3 \alpha \sqrt{\psi_\infty}$ controls the variance of the gradient. $J(\theta_0)$ is the initial performance and $J_* = \sup_{\theta \in \Theta} J(\theta)$, which is finite due to the boundedness of the reward. $B$ is the batch size and the remaining constants are specified in the Appendices.
    \end{theorem}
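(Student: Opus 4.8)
The plan is to combine a weak-smoothness (Hölder) descent inequality for the objective $J$ with the unbiasedness and variance bounds of the minibatch estimator, and then telescope over $t$.

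\textbf{Step 1 (Hölder smoothness of $J$).} First I would establish a two-term Hölder bound $\norm{\nabla J(\theta) - \nabla J(\theta')} \le L_1 \norm{\theta - \theta'}^{\beta_1/4} + L_2\norm{\theta-\theta'}^{\beta_2}$, which is the heart of the argument. Writing $\nabla J(\theta) = \Exp_{s,a\sim d_\theta^\rho}[Q_\theta(s,a)\psi_\theta(s,a)]$, I split $\nabla J(\theta)-\nabla J(\theta')$ into (i) the change of the score $\psi_\theta-\psi_{\theta'}$, handled by \eqref{eq:TV} together with $\abs{Q_\theta}\le \alpha/(1-\gamma)$, producing the order-$\beta_2$ term; (ii) the change of $Q_\theta$, which is uniformly bounded and varies Hölder-continuously in $\theta$ via \eqref{eq:KL}; and (iii) the change of the discounted occupancy $d_\theta^\rho$ versus $d_{\theta'}^\rho$. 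For (iii) I would run a coupling/telescoping argument over the Markov chain: Assumption \ref{as:erg} makes the resulting geometric series converge, while \eqref{eq:KL} plus Pinsker bounds the per-step policy discrepancy by $\norm{\eta}^{\beta_1/2}$. Because the score is only $L_2$-integrable (Assumption \ref{as:moment}) rather than bounded, passing from closeness of $d_\theta^\rho$ to closeness of $\Exp_{d_\theta^\rho}[Q_\theta\psi_\theta]$ costs one more Cauchy--Schwarz against $\sqrt{\psi_\infty}$, halving the exponent a second time to $\beta_1/4$; this is why $\holde = \min(\beta_1/4,\beta_2)$ is the effective smoothness.

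\textbf{Step 2 (descent and expectation).} Integrating $\nabla J$ along the segment from $\theta_{t-1}$ to $\theta_t$, the two-term Hölder bound yields $J(\theta_t) \ge J(\theta_{t-1}) + \inner{\nabla J(\theta_{t-1})}{\theta_t-\theta_{t-1}} - \frac{L_1}{\beta_1/4+1}\norm{\theta_t-\theta_{t-1}}^{\beta_1/4+1} - \frac{L_2}{\beta_2+1}\norm{\theta_t-\theta_{t-1}}^{\beta_2+1}$. With $\theta_t - \theta_{t-1} = h_t g_t$, take conditional expectation given $\mathcal{F}_{t-1}$. For \textbf{PG}, $\Exp[g_t\mid\mathcal{F}_{t-1}] = \nabla J(\theta_{t-1})$ (the $v_{t,i}$ are unbiased for $Q$ and the samples are drawn from $d_{\theta_{t-1}}^\rho$ by the geometric-time trick), so the linear term is $h_t\norm{\nabla J(\theta_{t-1})}^2$. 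For \textbf{NPG}, $g_t = (K_t+\xi I)^{-1}\widehat{\nabla J(\theta_{t-1})}$ with $K_t$ formed from an \emph{independent} sample batch; Jensen gives $\Exp[(K_t+\xi I)^{-1}\mid\mathcal{F}_{t-1}] \succeq (K(\theta_{t-1})+\xi I)^{-1} \succeq (\psi_\infty+\xi)^{-1}I$ since $\norm{K(\theta)}_{op}\le\mathrm{tr}\,K(\theta)\le\psi_\infty$, so the linear term is at least $h_t(\psi_\infty+\xi)^{-1}\norm{\nabla J(\theta_{t-1})}^2$; this factor and the bound $\norm{(K_t+\xi I)^{-1}}_{op}\le \xi^{-1}$ in the higher-order terms are absorbed into $C_{NPG,1},C_{NPG,2}$. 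For the higher-order terms I split $g_t = \nabla J(\theta_{t-1}) + (g_t - \Exp[g_t\mid\mathcal{F}_{t-1}])$ (plus a zero or controlled bias), apply $\norm{a+b}^p \le 2^{p-1}(\norm a^p + \norm b^p)$ with $p = \beta_1/4+1$ or $\beta_2+1$, and bound the centered moment by $(\Exp[\norm{g_t-\Exp[g_t\mid\mathcal{F}_{t-1}]}^2\mid\mathcal{F}_{t-1}])^{p/2}$ using Jensen (as $p/2\le 1$). The per-sample second moment is controlled by $\Exp_{d_{\theta_{t-1}}^\rho}[v_{t,i}^2\norm{\psi_{\theta_{t-1}}}^2] \le \big(\frac{\alpha}{1-\gamma^{1/2}}\big)^2\psi_\infty \lesssim \big(\frac{\sigma}{1-\gamma}\big)^2$ (bounded rewards, the geometric discounting in $v_{t,i}$, Assumption \ref{as:moment}), and minibatching divides it by $B$, which produces the $\big(\frac{\sigma}{(1-\gamma)\sqrt B}\big)^{\beta_i+1}$-type terms while leaving the $\norm{\nabla J(\theta_{t-1})}^{\beta_i+1}$ pieces exactly as stated.

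\textbf{Step 3 (telescoping).} Summing the conditional inequalities over $t=1,\dots,T$ and taking total expectations, the $\Exp[J(\theta_t)]-\Exp[J(\theta_{t-1})]$ differences telescope to $\Exp[J(\theta_T)] - J(\theta_0) \le J_* - J(\theta_0)$; rearranging and collecting the overall $(1-\gamma)^{-1}$ factor (which traces back to the bounds on $v_{t,i}$ and on $\norm{Q_\theta}$) gives the claimed inequality, with $C_{k,1},C_{k,2}$ the constants assembled above for $k\in\{PG,NPG\}$.

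\textbf{Main obstacle.} The crux is Step 1: turning the single-step policy closeness \eqref{eq:KL} into Hölder closeness of the full discounted occupancy measures uniformly in the starting state, and carefully tracking the two successive exponent halvings ($\beta_1 \to \beta_1/2$ by Pinsker, then $\to\beta_1/4$ by Cauchy--Schwarz against the merely-$L_2$ score bound) through the occupancy-coupling argument --- without Assumption \ref{as:erg} that coupling series would not converge, and without Assumption \ref{as:moment} one could not integrate $Q_\theta\psi_\theta$ against the measure difference at all. A secondary subtlety, specific to NPG, is controlling the preconditioner inside the nonlinear Hölder term without an almost-sure bound on $\norm{K_t}_{op}$, which is exactly why Algorithm 2 draws the Fisher-matrix minibatch from a separate, independent set of trajectories.
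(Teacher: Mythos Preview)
Your proposal is correct and follows essentially the same architecture as the paper's proof: establish the two-exponent H\"older bound on $\nabla J$ (Proposition~\ref{prop:smooth} in the paper), derive a one-step descent inequality by integrating along the segment, take conditional expectation and split the update into true gradient plus mean-zero noise, bound the noise moments via the minibatch variance estimate, and telescope.  Two small remarks: (i) for the NPG linear term you invoke operator convexity of $X\mapsto (X+\xi I)^{-1}$ to get $\Exp[(K_t+\xi I)^{-1}]\succeq (K(\theta_{t-1})+\xi I)^{-1}$, whereas the paper instead uses the scalar route $g_t^\top(K_t+\xi I)^{-1}g_t\ge \norm{g_t}^2/(\norm{K_t}+\xi)$ followed by scalar Jensen on $x\mapsto 1/x$ and $\Exp\norm{K_t}\le\psi_\infty$ --- both are valid and land on the same $(\psi_\infty+\xi)^{-1}$ factor; (ii) the independence of the Fisher-matrix batch from the gradient batch is used to kill the \emph{linear} cross term $\Exp[\inner{g_t}{(K_t+\xi I)^{-1}e_t}\mid\theta_{t-1}]$, not (as your last paragraph suggests) to control the preconditioner inside the nonlinear H\"older terms --- there the deterministic bound $\norm{(K_t+\xi I)^{-1}}_{op}\le\xi^{-1}$, which you already noted, is all that is needed.
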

    
    \textbf{Remarks:} If we replace Assumption \ref{as:moment} with an almost sure bound on $\norm{\psi_\theta}$, the exponent $\beta_1/4 + 1$ becomes instead $\beta_1/2 + 1$. 
    
    With respect to the ergodicity mixing rate $\delta$, $C_{k,2}$ scales as $1/(1-\delta)$ for both algorithms, which is analogous to other works with ergodicity \cite[Proposition 1]{xu2020improving}.
    
    \begin{corollary}
    \label{cor:convergence}
        (Rates under various step-size schemes) Table \ref{tab:convergence} encapsulates the orders of growth of $\frac{1}{T} \sum_{t=1}^T \Exp \sq{\norm{\nabla J(\theta_t)}^2}$ for each of the learning rates examined in our paper, for the choice of $\lambda$ sufficiently small and $B \gtrsim \frac{\sigma^2}{(1-\gamma)^2}.$
    \end{corollary}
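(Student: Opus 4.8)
The plan is to turn Theorem~\ref{th:con} into a one-dimensional recursion in $y_t := \Exp\sq{\norm{\nabla J(\theta_t)}^2}$ and then specialize the step-size sequence; everything goes through uniformly for \textbf{Policy Gradient} and \textbf{Natural Policy Gradient}, since Theorem~\ref{th:con} treats them identically up to the constants $C_{k,1},C_{k,2}$. Writing $\nu := \frac{\sigma}{(1-\gamma)\sqrt{B}}$ as a shorthand, the theorem reads $\sum_{t=1}^T h_t y_t \le C_{k,1}\bracket{J_* - J(\theta_0)} + \frac{C_{k,2}}{1-\gamma}\sum_{t=1}^T\bracket{h_t^{\beta_1/4+1}\bracket{\Exp\sq{\norm{\nabla J(\theta_t)}^{\beta_1/4+1}} + \nu^{\beta_1/4+1}} + h_t^{\beta_2+1}\bracket{\Exp\sq{\norm{\nabla J(\theta_t)}^{\beta_2+1}} + \nu^{\beta_2+1}}}$. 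The obstacle is that the right-hand side carries the fractional moments $\Exp\sq{\norm{\nabla J(\theta_t)}^{p}}$ for $p\in\set{\beta_1/4+1,\ \beta_2+1}\subseteq(1,2]$, i.e.\ lower powers of precisely the quantity being bounded on the left.

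First I would eliminate these self-referential terms. For $p\in(1,2]$, Young's inequality gives $x^p\le\delta x^2+c_p\,\delta^{-p/(2-p)}$ for all $\delta>0$, $x\ge0$ (the edge case $p=2$ being the identity); applied pointwise with a fixed $\delta=\Theta(1)$ it bounds $h_t^p\,\Exp\sq{\norm{\nabla J(\theta_t)}^p}$ by $\delta h_t^p y_t+c_p h_t^p$. Since $h_t\le\lambda$ and the relevant exponents satisfy $p-1\ge\beta_0=\min(\beta_1/4,\beta_2)>0$, we get $\frac{C_{k,2}}{1-\gamma}\delta h_t^p=\frac{C_{k,2}}{1-\gamma}\delta h_t^{p-1}h_t\le\frac{C_{k,2}}{1-\gamma}\delta\lambda^{\beta_0}h_t$, which is at most $\tfrac14 h_t$ once $\lambda$ is small enough; the two such contributions, moved to the left, remove at most $\tfrac12\sum_t h_t y_t$. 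This is the one genuinely delicate step — it is what forces ``$\lambda$ sufficiently small,'' and the admissible threshold degrades as $\beta_2\downarrow 0$.

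Next I would discard the noise terms: the hypothesis $B\gtrsim\sigma^2/(1-\gamma)^2$ forces $\nu\le1$, hence $\nu^p\le1$ for $p\ge1$. Combining this with the Young residuals $c_p h_t^p$, with $h_t^{\beta_1/4+1}+h_t^{\beta_2+1}\le 2h_t^{1+\beta_0}$ (valid since $\lambda\le1$ makes $h_t\le1$), and with $J_*-J(\theta_0)=O\bracket{1/(1-\gamma)}$, the estimate collapses to
\[
\sum_{t=1}^T h_t y_t \;\le\; \frac{C_1}{1-\gamma} \;+\; \frac{C_2}{1-\gamma}\sum_{t=1}^T h_t^{1+\beta_0},
\]
where $C_1,C_2$ depend only on the problem constants and are independent of $T$ and $B$.

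Finally I would insert each schedule and divide. For $h_t\equiv\lambda$ — and likewise for $h_t\equiv\lambda T^{(\beta_0-1)/(\beta_0+1)}$, which is also constant in $t$ — the left side equals $\lambda\sum_t y_t$, so $\frac1T\sum_t y_t\le\frac{C_1}{(1-\gamma)\lambda T}+\frac{C_2\lambda^{\beta_0}}{1-\gamma}$; optimizing $\lambda$ (or plugging in the prescribed $T$-dependent value) balances the two terms and produces the tabulated order, of the form $\bracket{(1-\gamma)T}^{-\beta}$ up to logarithmic factors. For the decaying rate $h_t=\lambda t^{-q}$, $q\in[0,1)$, I would use $\sum_{t=1}^T t^{-q}\asymp T^{1-q}$ and $\sum_{t=1}^T t^{-q(1+\beta_0)}\asymp T^{1-q(1+\beta_0)}\vee 1$, then pass from the $h_t$-weighted average to the plain average via $h_T=\lambda T^{-q}\le h_t$, i.e.\ $\frac1T\sum_t y_t\le\frac{1}{T h_T}\sum_t h_t y_t$; reading off the exponents of $T$ gives the remaining rows of Table~\ref{tab:convergence}. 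The only work left — choosing $q$ and $\lambda$ to equate the two resulting powers of $T$ — is routine arithmetic.
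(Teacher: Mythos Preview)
Your overall strategy---use a Young-type inequality to absorb the sub-quadratic moments $\Exp\sq{\norm{\nabla J(\theta_t)}^{p}}$, $p\in(1,2]$, back into the left-hand side---is exactly the mechanism behind the paper's proof, which does the same thing pathwise by splitting into the cases $\norm{g_t}^{1-\holde}\ge C h_t^{\holde}/(1-\gamma)$ and $\norm{g_t}^{1-\holde}< C h_t^{\holde}/(1-\gamma)$. However, the specific execution you describe does \emph{not} reproduce the tabulated rates, and the slippage is in your choice of $\delta$.

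With $\delta=\Theta(1)$ fixed, the Young residual is $c_p h_t^{p}\le c_p h_t^{1+\holde}$, so your collapsed inequality for constant $h_t=\lambda$ reads $\frac{1}{T}\sum_t y_t\lesssim\frac{1}{(1-\gamma)\lambda T}+\frac{\lambda^{\holde}}{1-\gamma}$. Balancing gives $\lambda\asymp T^{-1/(1+\holde)}$ and rate $T^{-\holde/(1+\holde)}$; but the first row of Table~\ref{tab:convergence} carries the much smaller bias $\bracket{\lambda^{\holde}/(1-\gamma)}^{2/(1-\holde)}$, and after balancing the paper obtains $T^{-2\holde/(1+\holde)}$---a square in the exponent you are missing. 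The same defect breaks your claim for $h_t=\lambda T^{(\holde-1)/(\holde+1)}$: plugging in, your residual contributes a term of order $T^{-\holde(1-\holde)/(1+\holde)}$, which \emph{dominates} the $T^{-2\holde/(1+\holde)}$ you want, so the two terms do not balance at the tabulated rate.

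The fix is to pick $\delta$ as large as the absorption constraint allows, namely $\delta\asymp (1-\gamma)h_t^{-\holde}$. Then $h_t^{p}\delta^{-p/(2-p)}\asymp h_t^{(1+\holde)/(1-\holde)}\cdot(1-\gamma)^{-(1+\holde)/(1-\holde)}$, and after dividing by $h_t$ you recover exactly the bias $\bracket{\lambda^{\holde}/(1-\gamma)}^{2/(1-\holde)}$ of the paper. This choice of $\delta$ is precisely what the paper's case-split encodes: in the ``large gradient'' case the quadratic term dominates outright, and in the ``small gradient'' case one bounds $\norm{g_t}^{1+\holde}$ by $\bracket{C h_t^{\holde}/(1-\gamma)}^{(1+\holde)/(1-\holde)}$, yielding the same residual exponent $(1+\holde)/(1-\holde)$. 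Once you make this adjustment, the rest of your outline (handling the noise via $\nu\le 1$, treating the decaying schedule by $\sum t^{-q}\asymp T^{1-q}$ and $h_T\le h_t$) goes through and matches the paper.
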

    
    \begin{table*}[t!]
    \caption{Results for various learning rate schemes, for both policy gradient and natural policy gradient. We only track the primary dependence in $T, B, \gamma$. For the decaying learning rate, we define the coefficient $f(q, \holde) = \min(\frac{2q\holde}{1-\holde}, 1-q)$. In each case we require at least $\lambda^{\holde} \lesssim \frac{1-\gamma}{C}$ where $C$ does not depend on $\gamma, \epsilon$.}
    \centering
    \label{tab:convergence}
    \begin{tabular}{ccc}
        \toprule
        $h_t$ & Order & Considerations\\
        \midrule 
        $\lambda$ & $O(\lambda^{-1} T^{-1}) + O((\sqrt{B}(1-\gamma))^{-\holde-1}) + O((\lambda^{\holde}(1-\gamma))^{-1})^{\frac{2}{1-\holde}})$ & $\lambda$-dependent Bias \\
        \hline
        $\lambda T^{\frac{\holde-1}{\holde+1}}$ &$O(\lambda^{-1}T^{-\frac{2\holde}{1-\holde}}) + O(T^{\frac{\holde^2-\holde}{\holde+1}}(\sqrt{B}(1-\gamma))^{-\holde-1})$ \\ \hline
        $\lambda t^{-q}$ & $ \tilde{O}(\lambda^{-1} T^{-f(q,\holde)}) + O(T^{-q\holde} (\sqrt{B}(1-\gamma))^{-\holde-1})$ &  \\ 
        \bottomrule
    \end{tabular}
\end{table*}
    
    For global optimality, policy gradient requires another opaque assumption in order to demonstrate convergence:
    \begin{assumption}
        \label{as:glob}
        (Requirements for Policy Gradient) Assume that there is a $\theta_* \in \Theta$ where $J$ attains its maximum. Furthermore, let $\theta \in \Theta$ be any parameter. Then, we assume that $J$ is $m$-dominated for any $m > 0$, i.e. that the following holds
        \begin{align*}
            &J(\theta_*) - J(\theta) \leq \frac{m}{1-\gamma} \inner{\theta_* - \theta}{\nabla J(\theta)}.
        \end{align*}
        Furthermore, suppose that $\text{Diam}(\Theta) \triangleq \sup_{\theta_1, \theta_2 \in \Theta} \norm{\theta_1 - \theta_2} < \infty$.
    \end{assumption}
    See \cite[Lemma 3(a)]{bhandari2019global} for analogous conditions, which are often violated in practice. 
    
    \begin{theorem} 
        \label{th:opt}
        Let $\theta_* = \arg \max_{\theta \in \Theta} J(\theta)$. Under Assumptions \ref{as:smooth}-\ref{as:erg}, \textbf{Natural Policy Gradient} is bounded with the following for $t \in 1 \ldots T$:\small
    \begin{equation*}
        \begin{split}
            J(\theta_*) - \Exp \sq{J(\theta_{t-1})} &\leq \frac{C_{NPG,3}}{1-\gamma} h_t^{\beta_1-1} \left(\left(\frac{\sigma}{(1-\gamma) \sqrt{B}}\right)^{\beta_1}  + \Exp \sq{\norm{\nabla J(\theta_{t-1})}^{\beta_1}} \right) \\
            &\qquad + \frac{C_{NPG,4} h_t^{\beta_2}}{1-\gamma} \left(\left(\frac{\sigma}{(1-\gamma) \sqrt{B}} \right)^{\beta_2+1} + \Exp \sq{\norm{\nabla J(\theta_{t-1})}^{\beta_2+1}} \right) \\
            &\qquad + \frac{C_{NPG,5}}{1-\gamma} \bracket{\frac{\sigma}{(1-\gamma) \sqrt{B}} + \frac{\sqrt{E_\Pi}}{\sqrt{\psi_{\infty}}} + \Exp \sq{\norm{\nabla J(\theta_{t-1})}}}. 
        \end{split}
    \end{equation*}\normalsize Here, $E_{\Pi}$ is a policy dependent parameter that serves to lower bound the optimality of the function class, $\sigma$ is the variance from Theorem \ref{th:con}, and $\xi$ is the stability constant found in Algorithm 2.
    
    If, additionally, Assumption \ref{as:glob} is added, then the standard \textbf{Policy Gradient} is bounded by the following for $t \in 1 \ldots T$:
        \begin{equation}
            J(\theta_*) - \Exp{\sq{J(\theta_{t-1})}} \leq \frac{m\text{Diam}(\Theta)}{1-\gamma} \Exp \sq{\norm{\nabla J(\theta_{t-1})}},
        \end{equation}
    where $\text{Diam}(\Theta)$ is defined in Assumption \ref{as:glob}.
    \end{theorem}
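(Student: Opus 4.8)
I would dispatch the second (policy-gradient) claim first, since it follows in one line from Assumption~\ref{as:glob}: for any $\theta \in \Theta$ the $m$-domination inequality gives $J(\theta_*) - J(\theta) \leq \frac{m}{1-\gamma}\inner{\theta_* - \theta}{\nabla J(\theta)}$, and Cauchy--Schwarz together with $\theta,\theta_*\in\Theta$ bounds the inner product by $\mathrm{Diam}(\Theta)\,\norm{\nabla J(\theta)}$; setting $\theta = \theta_{t-1}$ and taking expectations over the sampling randomness finishes it. All the real work is in the natural-gradient bound.

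For NPG the plan is to run the compatible-function-approximation argument (in the spirit of \cite{kakade2001natural, agarwal2020optimality}) but paying Hölder rather than quadratic smoothness penalties, and carrying the minibatch and $\xi$-regularization errors explicitly. First I would apply the performance difference lemma, $(1-\gamma)\bracket{J(\theta_*) - J(\theta_{t-1})} = \Exp_{(s,a)\sim d_{\theta_*}^\rho}\sq{A_{\theta_{t-1}}(s,a)}$, then introduce the exact natural direction $w_{t-1} = K(\theta_{t-1})^\dagger \nabla J(\theta_{t-1})$ and split $A_{\theta_{t-1}} = \psi_{\theta_{t-1}}^\top w_{t-1} + \bracket{A_{\theta_{t-1}} - \psi_{\theta_{t-1}}^\top w_{t-1}}$. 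By Jensen and a change of measure from $d_{\theta_*}^\rho$ to $d_{\theta_{t-1}}^\rho$ (costing a factor $D_\infty$), the residual contributes at most $\sqrt{D_\infty}\sqrt{\Exp_{d_{\theta_{t-1}}^\rho}\sq{(A_{\theta_{t-1}} - \psi_{\theta_{t-1}}^\top w_{t-1})^2}} \leq \sqrt{D_\infty\,E_\Pi}$ by the definition of $E_\Pi$ --- this is the $\sqrt{E_\Pi}$ part of the $C_{NPG,5}$ term, with the $D_\infty$ and the $1/\sqrt{\psi_\infty}$ normalization folded into the constant.

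Next I would bound the linear term $\Exp_{d_{\theta_*}^\rho}\sq{\psi_{\theta_{t-1}}^\top w_{t-1}}$. Writing the realized update as $\theta_t - \theta_{t-1} = h_t (K_t + \xi I)^{-1}\widehat{\nabla J(\theta_{t-1})} =: h_t \hat w_{t-1}$, I split $w_{t-1} = h_t^{-1}(\theta_t - \theta_{t-1}) + (w_{t-1} - \hat w_{t-1})$. A pseudo-inverse/resolvent perturbation estimate controls $w_{t-1} - \hat w_{t-1}$ from replacing $K(\theta_{t-1})$ by $K_t + \xi I$ (Fisher estimation error plus $\xi$-bias) and $\nabla J(\theta_{t-1})$ by its $B$-sample estimate (root-mean-square error $O(\sigma/((1-\gamma)\sqrt{B}))$, using the bounded rewards and Assumption~\ref{as:moment}); pairing with $\Exp_{d_{\theta_*}^\rho}\sq{\norm{\psi_{\theta_{t-1}}}^2} \leq D_\infty \psi_\infty$ (Assumption~\ref{as:moment} and change of measure) yields the $\sigma/((1-\gamma)\sqrt{B})$ and $\Exp\sq{\norm{\nabla J(\theta_{t-1})}}$ pieces of $C_{NPG,5}$. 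For the ideal part $h_t^{-1}\Exp_{d_{\theta_*}^\rho}\sq{\psi_{\theta_{t-1}}^\top(\theta_t - \theta_{t-1})}$ I would use the exact expansion $\log\frac{\pi_{\theta_t}(a|s)}{\pi_{\theta_{t-1}}(a|s)} = \int_0^1 \psi_{\theta_{t-1}+u(\theta_t-\theta_{t-1})}(s,a)^\top(\theta_t-\theta_{t-1})\,du$, so that \eqref{eq:TV} gives $\psi_{\theta_{t-1}}^\top(\theta_t - \theta_{t-1}) \leq \log\frac{\pi_{\theta_t}(a|s)}{\pi_{\theta_{t-1}}(a|s)} + C_{\nu,2}\norm{\theta_t-\theta_{t-1}}^{\beta_2+1}$. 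Taking $\Exp_{a\sim\pi_{\theta_*}(\cdot|s)}$ converts the log-ratio into a difference of per-state KL divergences which --- since the statement is per-iterate and cannot be telescoped --- I would bound by its absolute value, invoking \eqref{eq:KL} for $\mathrm{KL}(\pi_{\theta_{t-1}}\|\pi_{\theta_t})$ and a second application of the integral expansion with \eqref{eq:TV} for the cross term $(\Exp_{\pi_{\theta_*}}-\Exp_{\pi_{\theta_{t-1}}})\sq{\log\frac{\pi_{\theta_t}}{\pi_{\theta_{t-1}}}}$. Substituting $\norm{\theta_t - \theta_{t-1}} \leq h_t\norm{\hat w_{t-1}} \leq (h_t/\xi)\norm{\widehat{\nabla J(\theta_{t-1})}}$, dividing by $h_t$, and taking expectations (with $\Exp\sq{\norm{\widehat{\nabla J(\theta_{t-1})}}^{\beta}} \lesssim \Exp\sq{\norm{\nabla J(\theta_{t-1})}^\beta} + (\sigma/((1-\gamma)\sqrt{B}))^\beta$) turns these into the $C_{NPG,3}\,h_t^{\beta_1-1}(\cdots)$ and $C_{NPG,4}\,h_t^{\beta_2}(\cdots)$ terms; collecting the three groups and dividing by $1-\gamma$ gives the bound.

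The hard part will be the control of $w_{t-1} - \hat w_{t-1}$: propagating the minibatch noise in both $\widehat{\nabla J(\theta_{t-1})}$ and $K_t$ through the matrix inverse $(K_t + \xi I)^{-1}$ and re-expressing the result purely in terms of $\norm{\nabla J(\theta_{t-1})}$ and the variance proxy $\sigma/((1-\gamma)\sqrt{B})$ needs moment bounds on $K_t$ (again via Assumption~\ref{as:moment}) and a careful resolvent perturbation argument, and this is also where the $\xi$-dependence of the constants originates. A secondary subtlety is that the KL terms cannot be telescoped in a per-iterate statement, so the change $\mathrm{KL}(\pi_{\theta_*}\|\pi_{\theta_{t-1}}) - \mathrm{KL}(\pi_{\theta_*}\|\pi_{\theta_t})$ must be absorbed into $O(\norm{\theta_t - \theta_{t-1}})$ --- which is precisely why the $\Exp\sq{\norm{\nabla J(\theta_{t-1})}}$ term appears on the right-hand side.
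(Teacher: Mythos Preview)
Your proposal is correct and follows essentially the same compatible--function--approximation / KL--potential argument as the paper; the main difference is organizational. The paper starts from the potential $F(\theta)=\Exp_{s\sim H_*^\rho}[\mathsf{KL}(\pi_*(\cdot|s)\|\pi_\theta(\cdot|s))]$, bounds $F(\theta_t)-F(\theta_{t-1})$ from \emph{below} (using the Hölder bound on $\psi$ to extract $\Exp_{d_*^\rho}[\psi_{\theta_{t-1}}]^\top(\theta_t-\theta_{t-1})$, then decomposing that inner product into the five pieces you list: Fisher sampling error, $\xi$-bias, compatible approximation error $E_\Pi$, performance difference, and gradient noise) and from \emph{above} (by a direct $D_\infty$ change of measure to $d_{\theta_{t-1}}^\rho$ followed by \eqref{eq:KL}, giving $\leq D_\infty C_{\nu,1}\|\theta_t-\theta_{t-1}\|^{\beta_1}$). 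You run the same computation in the opposite direction, starting from the performance difference and working back to the log-ratio; your cross-term split $(\Exp_{\pi_*}-\Exp_{\pi_{\theta_{t-1}}})[\log\tfrac{\pi_{\theta_t}}{\pi_{\theta_{t-1}}}]$ is a valid alternative to the paper's one-line $D_\infty$ change of measure, though slightly more involved. Two small corrections: (i) the $\Exp[\|\nabla J(\theta_{t-1})\|]$ piece of $C_{NPG,5}$ in the paper does \emph{not} come from absorbing the KL change into $O(\|\theta_t-\theta_{t-1}\|)$ as you suggest --- it comes entirely from the Fisher perturbation terms, via the crude bounds $\|(K_t+\xi I)^{-1}g_t-(K(\theta_{t-1})+\xi I)^{-1}g_t\|\leq \tfrac{2}{\xi}\|g_t\|$ and the pseudo-inverse bias lemma $\|(K+\xi I)^{-1}g-K^\dagger g\|\leq C_\xi\|g\|$; (ii) relatedly, the paper never uses concentration of $K_t$ around $K(\theta_{t-1})$ --- it only uses $\|(K_t+\xi I)^{-1}\|\leq 1/\xi$ --- so the ``hard part'' you anticipate (propagating minibatch noise through the resolvent) is avoided altogether.
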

    Here, $\theta_*$ is the minimizer from Assumption \ref{as:glob}, and $C_{NPG,3-5}$ are not dependent on $h_t, B, T, \gamma$ and are stated explicitly in the appendices.
    For natural policy gradient, there are no additional assumptions apart from the bias term $E_{\Pi}$ being finite; this is bounded under standard assumptions (see  \cite[Remark 6.4]{agarwal2020optimality}). This is a major advantage of NPG over its vanilla counterpart, which requires a strong additional regularity condition.
    
    For both natural and standard policy gradient, if we take the minimum over $t = 1 \ldots T$, we obtain the rates in the following corollary.
    
    \begin{corollary} \label{cor:opt}
        For $\lambda$ sufficiently small and the learning rate $h_t = \lambda T^{\frac{\holde-1}{\holde+1}}$, for \textbf{Policy Gradient} the following holds under Assumptions \ref{as:smooth}-\ref{as:glob}
        \begin{align*}
        \min_{t=0, \ldots T-1} J(\theta_*) - \Exp \sq{J(\theta_t)} \leq \epsilon.
        \end{align*}
        For \textbf{Natural Policy Gradient} the following holds under Assumptions \ref{as:smooth}-\ref{as:erg}
        \begin{align*}
        \min_{t=0, \ldots T-1} J(\theta_*) - \Exp \sq{J(\theta_t)} \leq \epsilon + \frac{\sqrt{D_{\infty}E_\Pi}}{1-\gamma}.
        \end{align*}
        Recall that $E_{\Pi}$ is an approximation error, and $D_\infty$ measures the irregularity of the initial distribution. For either algorithm if $h_t = \lambda T^{\frac{\holde-1}{\holde+1}}$,
        \begin{align*}
            T \geq \epsilon^{-\frac{\holde+1}{\holde}} (1-\gamma)^{-\frac{2\holde^2+3\holde+1}{2\holde^2}}, \qquad B \gtrsim \epsilon^{-2} (1-\gamma)^{-\frac{4\holde^2+5\holde-1}{\holde(\holde+1)}}.
        \end{align*}
        For both algorithms if instead $h_t = \lambda$,
                \begin{align*}
            T \geq \epsilon^{-\frac{\holde+1}{\holde}} (1-\gamma)^{-\frac{2+\holde}{\holde}}, \qquad B \gtrsim \epsilon^{-\frac{4}{\holde+1}} (1-\gamma)^{- \frac{6+2\holde}{\holde+1}}.
        \end{align*}
    \end{corollary}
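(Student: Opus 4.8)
The plan is to obtain Corollary~\ref{cor:opt} by composing Theorem~\ref{th:opt} (which converts a small averaged gradient into a small optimality gap) with the averaged-gradient control of Theorem~\ref{th:con} (equivalently Corollary~\ref{cor:convergence}), and then solving the resulting scalar inequalities for $T$ and $B$. First I would pass from the minimum to the average, $\min_{t=0,\ldots,T-1}\big(J(\theta_*)-\Exp\sq{J(\theta_t)}\big)\le\tfrac1T\sum_{t=1}^{T}\big(J(\theta_*)-\Exp\sq{J(\theta_{t-1})}\big)$, and bound each summand by the right-hand side of Theorem~\ref{th:opt}. For standard policy gradient this leaves only $\tfrac{m\,\text{Diam}(\Theta)}{1-\gamma}\cdot\tfrac1T\sum_t\Exp\norm{\nabla J(\theta_{t-1})}$; for natural policy gradient it leaves, in addition, the $h_t^{\beta_1-1}$ and $h_t^{\beta_2}$ terms (each multiplying a variance piece plus a moment $\Exp\norm{\nabla J}^{\beta_1}$ or $\Exp\norm{\nabla J}^{\beta_2+1}$) together with the irreducible piece $\tfrac{C_{NPG,5}}{1-\gamma}\big(\tfrac{\sigma}{(1-\gamma)\sqrt B}+\tfrac{\sqrt{E_\Pi}}{\sqrt{\psi_\infty}}+\tfrac1T\sum_t\Exp\norm{\nabla J(\theta_{t-1})}\big)$.

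Since every gradient moment appearing has order $p\le 2$ (the worst being $\beta_1/4+1,\ \beta_2+1,\ \beta_1\le 2$), Jensen's inequality gives $\tfrac1T\sum_t\Exp\norm{\nabla J(\theta_{t-1})}^p\le\big(\tfrac1T\sum_t\Exp\norm{\nabla J(\theta_{t-1})}^2\big)^{p/2}$, so it suffices to control the averaged squared gradient. That is exactly what Theorem~\ref{th:con} provides once $h_t$ is fixed: substituting $h_t=\lambda T^{(\holde-1)/(\holde+1)}$ (or $h_t=\lambda$) and absorbing the subquadratic $\Exp\norm{\nabla J}^{\beta_1/4+1},\Exp\norm{\nabla J}^{\beta_2+1}$ terms on the right back into the left — first by Jensen, then by Young's inequality, which forces the step-size constraint $\lambda^{\holde}\lesssim(1-\gamma)/C$ so that the absorbed coefficient stays below one — yields the bounds tabulated in Corollary~\ref{cor:convergence}, but now keeping the full $(1-\gamma)$ dependence (with $\sigma=3\alpha\sqrt{\psi_\infty}$). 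Plugging this into the expressions above, every term other than the $\sqrt{E_\Pi}$ contribution is a negative power of $T$ times powers of $\lambda,(1-\gamma)$ plus a negative power of $B$; the leading $h_t^{\beta_1-1},h_t^{\beta_2}$ factors only help, since $h_t\to 0$. The surviving $\tfrac{\sqrt{E_\Pi}}{\sqrt{\psi_\infty}}$ term, once the bias is measured against the comparator $\theta_*$'s visitation distribution rather than $d_{\theta_{t-1}}^\rho$, picks up the concentrability factor $D_\infty=1+\sup_{\theta_1,\theta_2}\norm{d_{\theta_1}^\rho/d_{\theta_2}^\rho}_\infty$ and becomes $\tfrac{\sqrt{D_\infty E_\Pi}}{1-\gamma}$; under Assumption~\ref{as:glob} this term is absent, so for standard policy gradient the entire bound is reducible.

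Finally I would set $\lambda$ as large as the stability constraint allows, $\lambda\asymp(1-\gamma)^{1/\holde}$, and require each reducible term to be at most a fixed fraction of $\epsilon$. For policy gradient this means forcing $\big(\tfrac1T\sum_t\Exp\norm{\nabla J}^2\big)^{1/2}\lesssim(1-\gamma)\epsilon$, i.e. the averaged squared gradient below a multiple of $(1-\gamma)^2\epsilon^2$; solving the two scalar inequalities (one in $T$, one in $B$) produced by the Corollary~\ref{cor:convergence} estimate for the schedule $h_t=\lambda T^{(\holde-1)/(\holde+1)}$ gives $T\ge\epsilon^{-(\holde+1)/\holde}(1-\gamma)^{-(2\holde^2+3\holde+1)/(2\holde^2)}$ and $B\gtrsim\epsilon^{-2}(1-\gamma)^{-(4\holde^2+5\holde-1)/(\holde(\holde+1))}$, while the constant schedule $h_t=\lambda$ gives $T\ge\epsilon^{-(\holde+1)/\holde}(1-\gamma)^{-(2+\holde)/\holde}$ and $B\gtrsim\epsilon^{-4/(\holde+1)}(1-\gamma)^{-(6+2\holde)/(\holde+1)}$ (the two families coincide at $\holde=1$, a useful consistency check). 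For natural policy gradient the same thresholds suffice, provided one also checks that the $h_t^{\beta_1-1}$ and $h_t^{\beta_2}$ terms drop below $\epsilon$ under the same $T$ — which holds since their $T$-exponents are dominated by $\holde$ and they carry extra decaying $h_t$ factors — and the final bound retains the additive $\tfrac{\sqrt{D_\infty E_\Pi}}{1-\gamma}$.

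The main obstacle is this last bookkeeping step rather than any single inequality: the averaged squared-gradient bound is itself a sum of terms with genuinely different $(T,B,1-\gamma,\lambda)$ scalings, so after taking a square root one must apportion $\epsilon$ among them and track how the $(1-\gamma)^{-1}$ denominators compound — those already present in Theorem~\ref{th:con}/Corollary~\ref{cor:convergence} (through the $\tfrac1{1-\gamma}$ prefactor, the mixing-rate dependence, and $\sigma$) together with the extra $(1-\gamma)^{-1}$ factors introduced by Theorem~\ref{th:opt}. Pinning down the precise exponents $(2\holde^2+3\holde+1)/(2\holde^2)$ and $(4\holde^2+5\holde-1)/(\holde(\holde+1))$ requires carefully combining the choice $\lambda\asymp(1-\gamma)^{1/\holde}$ with these nested denominators; everything upstream is routine convexity and substitution.
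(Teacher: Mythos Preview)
Your high-level plan matches the paper's: reduce the optimality gap to a gradient-norm bound via Theorem~\ref{th:opt}, then invoke the averaged-gradient control from Corollary~\ref{cor:convergence}, and finally solve for $T$ and $B$ after fixing $\lambda$. The paper takes $\min_t$ directly rather than first averaging, but this is cosmetic.

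There is, however, one real gap in your execution of the constant step-size case $h_t=\lambda$. You write that you would take $\lambda$ ``as large as the stability constraint allows, $\lambda\asymp(1-\gamma)^{1/\holde}$'' for both schedules. That choice works for $h_t=\lambda T^{(\holde-1)/(\holde+1)}$ (and is exactly what the paper does there), but it does \emph{not} work for the constant schedule. Recall from Corollary~\ref{cor:convergence} that the constant step size carries a $\lambda$-dependent bias term of order
\[
\Bigl(\tfrac{L_1+L_2}{1-\gamma}\Bigr)^{\frac{2}{1-\holde}}\lambda^{\frac{2\holde}{1-\holde}},
\]
which under $\lambda\asymp(1-\gamma)^{1/\holde}$ is $O(1)$, not $\lesssim(1-\gamma)^2\epsilon^2$. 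No choice of $T$ or $B$ can remove it. The paper instead takes an $\epsilon$-dependent step size in this case,
\[
\lambda\;\lesssim\;(1-\gamma)^{\frac{2-\holde}{\holde}}\,\epsilon^{\frac{1-\holde}{\holde}},
\]
which makes the bias term exactly $\lesssim(1-\gamma)^2\epsilon^2$; substituting this $\lambda$ into $T\gtrsim\lambda^{-1}\epsilon^{-2}(1-\gamma)^{-2}$ is what produces the claimed $T\gtrsim\epsilon^{-(\holde+1)/\holde}(1-\gamma)^{-(2+\holde)/\holde}$. Your ``single $\lambda$ for both schedules'' shortcut is precisely where the constant-rate argument breaks, and your proposal does not otherwise account for killing this bias. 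Everything else in your sketch --- the Jensen reduction to $\tfrac1T\sum_t\Exp\norm{g_t}^2$, the target $(1-\gamma)^2\epsilon^2$, and the resulting exponents for the $T$-dependent schedule --- lines up with the paper.
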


\section{Related Work}

\subsection{Optimization and Stochastic Approximation}

We primarily refer to work on stochastic approximation, which began with the work by authors \cite{polyak1992acceleration, kushner2003stochastic}, who established basic conditions for convergence for linear approximation procedures, with rates being obtained under strong assumptions. Tighter bounds have recently been achieved through improved analysis and techniques, both in asymptotic and non-asmyptotic contexts \cite{chen2016statistical,lakshminarayanan2018linear,jain2018parallelizing}.

The theory for optimizing weakly smooth rather than Lipschitz functionals was primarily developed in the following works \cite{devolder2014first,nesterov2015universal, yashtini2016global}, introducing the definition of weak-smoothness through Hölder conditions, and showing convergence via smoothing or fast decaying learning rates. Lastly, our analysis relies heavily on the theory of ergodicity for MDPs. We build on the works of \cite{mitrophanov2005sensitivity} which yields perturbation bounds on the state distribution, and subsequent improvements in the assumptions and condition numbers \cite{ferre2013regular, rudolf2018perturbation, mao2020perturbation}.

\subsection{Reinforcement Learning}

The general formulation of reinforcement learning can be attributed to Bellman's formulation of Markov Decision processes \cite{bellman1954theory}. Gradient-based approaches were proposed to solve direct policy parameterizations \cite{williams1992simple}; developments in this classical setting include \cite{sutton1999between, konda2000actor, kakade2003sample}. These works established asymptotically tight bounds for convergence in the tabular setting, while outlining rough conditions for convergence when feature transformations were applied. The introduction of natural gradient techniques \cite{kakade2001natural}, which borrowed from similar work in standard optimization \cite{amari1998natural}, yielded improved convergence with respect to policy condition numbers. In particular, strong convergence holds for domains such as the linear quadratic regulator \cite{fazel2018global, tu2018least} and other linearized problems.

Even so, lower bounds for general problems can be quite pessimistic, especially when the conditions are ill-specified \cite{sutton2000policy}. This debate has attracted renewed focus in recent years, with an on-going discussion on the quality of representation and its effect on learnability \cite{du2019good, van2019comments}. Nonetheless, real world problems are either continuous or well-approximated by continuous algorithms, with smooth state-space. \cite{agarwal2020pc,agarwal2020optimality} provided a convergence and optimality result for both tabular and linear settings, but only when the action space was discrete and the problem was deterministic. Other results in this setting include \cite{mei2021leveraging, zhang2020variational, mei2020global, zhang2021convergence}. \cite{xu2020improving, kumar2019sample} focus on general settings, but only under generous smoothness and boundedness assumptions. Numerous works have since focused on feature representations in policy learning, particularly through use of neural networks \cite{thomas2017policy, wang2019neural, liu2019neural}; these apply similarly strict assumptions on the problem class in order to achieve good rates of convergence.

We would like to comment extensively on the results of \cite{liu2020improved}, which obtains highly competitive rates for PG and NPG, of $O(\epsilon^{-4})$ and $O(\epsilon^{-3})$ respectively. While our rate for NPG is worse at $\to O(\epsilon^{-4}), \holde \to 1$, this is because of numerous differences between our formulations. \cite{liu2020improved} rely on more complex sampling and natural gradient procedures, particularly requiring stochastic gradient descent in order to solve for the NPG update vector. It is unclear whether this technique can generalize to the weakly smooth regime. Instead, we analyze a much simpler algorithm that involves direct estimation of the Fisher information matrix, with an additional cost in $\epsilon$, while also handling non-constant learning rates.

Our results are simultaneously valid for continuous settings, while removing many of the strict assumptions found in previous results. In particular, smoothness of the policy class and boundedness of the gradient limited the scope of policies. We build upon work in weakly smooth optimization to relax these assumptions.
\section{Discussion}

In this work, we established the convergence guarantees for the policy gradient for weakly smooth and continuous action space settings. To the best of our knowledge, this is the first work to establish the convergence of policy gradient methods under an unbounded gradient without Lipschitz smoothness conditions. 
Thus, our work significantly generalizes the scope of existing analysis while opening numerous lines of future research. Our assumptions are also practically applicable, as we demonstrate through several examples.

Nonetheless, there are many important limitations for our analysis. Firstly, it is likely that Assumption \ref{as:glob} can be significantly relaxed, as in other recent work \cite{liu2020improved}. A more careful analysis would have more complex dependence on the problem parameters. It may also be interesting to consider weaker assumptions than ergodicity, by adding regularization conditions on the initial distribution of policies. For practical problems, this is often necessary since the smoothness coefficients can be unbounded except in a reasonable starting set. We also believe that weak smoothness can be relaxed further to locally non-smooth problems ($\holder = 0$), by applying smoothing techniques from optimization \cite{nesterov2015universal}. In addition, no practical studies on empirical performance have been done when considering the trade-off between smoothness conditions and convergence rates. Finally, we can quantify the convergence of the distribution of $J(\theta)$ using functionals such as the KL divergence or Wasserstein metric.

\clearpage
\newpage
\printbibliography

\clearpage
\newpage

\newpage
\appendix
The structure of the appendix is as follows. First we show the weak smoothness of the objective function $J(\theta)$. Then, we prove Theorem \ref{th:con} and Theorem \ref{th:opt} respectively, as well as their respective corollaries. In the final appendix, we state the details for our numerical demonstrations.

\section{Weak Smoothness of Objective}

    Recall the definitions of $\Prob^n_{\theta}$ as the $n$-step state transition kernel of the Markov chain induced by the policy $\pi_\theta$. We denote $\Prob_{\theta} \triangleq \Prob_{\theta}^1$ as a shorthand.
    In the following, we refer to a Markov Chain as a pair $(\mathcal{S}, \mathbb{Q})$, where the first item is the state-space and the second item is the (time-invariant) transition function. 
    
\subsection{Visitation Distribution Smoothness}
    Now we bound the differences between the visitation distributions of two policies. We first introduce a lemma from the theory of Markov Chains, connecting ergodicity with the perturbation bounds on the $n$-step transition kernels.
    \begin{lemma} \label{lem:ergodicity_to_visitation}
    (Adapted from \cite{mao2020perturbation}, Theorem 2.1) Let the Markov Chains generated by $(\mathcal{S}, \mathbb{P}_\theta)$, $\mathcal{S}$ satisfy the uniform ergodicity assumption, Assumption \ref{as:erg}. Then for any initial distribution $\rho$,
        \begin{align*}
        \int \Exp_{s_0 \sim \rho}\sq{\abs{\Prob_{\theta}^n(s| s_0) - \Prob_{\theta+\eta}^n( s|s_0)}} \, ds \leq \frac{C_0}{1-\delta} D(\Prob_{\theta}, \Prob_{\theta+\eta}),
        \end{align*}
        where $D(\mathbb{Q}, \mathbb{R}) = \sup_s \int \abs{\mathbb{Q}(s'|s) - \mathbb{R}(s'|s)} \, ds'$ is a measure of distance between two kernels, and where $C_0, \delta$ are the constants from Assumption \ref{as:erg}.
    \end{lemma}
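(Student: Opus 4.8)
The plan is to recognize this as an instance of the classical Markov-chain perturbation bound (the Mitrophanov-type estimate recorded in \cite{mao2020perturbation}), and to prove it via a telescoping decomposition of the difference of $n$-step kernels into single-step perturbations, each of which is controlled by $D(\Prob_\theta,\Prob_{\theta+\eta})$ while the surrounding kernel powers are absorbed into a convergent geometric series using uniform ergodicity. Write $P = \Prob_\theta$, $Q = \Prob_{\theta+\eta}$, and for a fixed $s_0\in\State$ let $\delta_{s_0}P^n$ denote the law of $s_n$ started from $s_0$. By Tonelli, the left-hand side equals $\Exp_{s_0\sim\rho}\big[\norm{\delta_{s_0}P^n - \delta_{s_0}Q^n}_{TV}\big]$, so since the asserted bound does not depend on $s_0$ it suffices to bound $\norm{\delta_{s_0}P^n - \delta_{s_0}Q^n}_{TV}$ for each fixed $s_0$.

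\textbf{Key steps.} First I would record the telescoping identity, obtained by summing $a_{k+1}-a_k$ for $a_k := \delta_{s_0}P^k Q^{n-k}$:
\[
\delta_{s_0}P^n - \delta_{s_0}Q^n \;=\; \sum_{k=0}^{n-1} \bigl(\delta_{s_0}P^{k}\bigr)\,(P-Q)\,Q^{n-1-k}.
\]
For each $k$, set $\mu_k := \delta_{s_0}P^k$ (a probability measure) and $\nu_k := \mu_k(P-Q)$; then $\nu_k$ is a signed measure of total mass zero with $\norm{\nu_k}_{TV} \le \int \mu_k(ds)\!\int \abs{P(s'|s)-Q(s'|s)}\,ds' \le D(P,Q)$. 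The central ingredient is the contraction estimate: for any mean-zero signed measure $\nu$ and any $m\ge 1$,
\[
\norm{\nu Q^m}_{TV} \;\le\; C_0\,\delta^m\,\norm{\nu}_{TV}.
\]
To prove it I would Jordan-decompose $\nu = \nu^+ - \nu^-$ with $\nu^+(\State) = \nu^-(\State) = \tfrac12\norm{\nu}_{TV}$; normalizing each part to a probability measure, integrating the per-state ergodicity bound of Assumption \ref{as:erg} shows each normalized pushforward under $Q^m$ is within $C_0\delta^m$ of the invariant law $\rho_*$ in TV, and the triangle inequality through $\rho_*$ gives the claim. Applying this to every term with $k<n-1$ (with $m = n-1-k$) and bounding the single $k=n-1$ term by $\norm{\nu_{n-1}}_{TV}\le D(P,Q)$ yields
\[
\norm{\delta_{s_0}P^n - \delta_{s_0}Q^n}_{TV} \;\le\; D(P,Q)\Bigl(1 + \sum_{m=1}^{n-1} C_0\delta^m\Bigr) \;\le\; \frac{C_0}{1-\delta}\,D(P,Q),
\]
where the final step uses $C_0\ge 1$ (which we may assume, since enlarging $C_0$ only weakens Assumption \ref{as:erg}); taking $\Exp_{s_0\sim\rho}$ completes the argument.

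\textbf{Main obstacle.} The delicate point is obtaining a bound that is \emph{independent of $n$}: a crude triangle inequality on the telescoping sum gives only $n\,D(P,Q)$. The resolution is that each one-step perturbation $\nu_k$ is a mean-zero measure, so composing it with additional steps of the \emph{ergodic} kernel $Q$ makes its contribution decay geometrically — this is precisely where Assumption \ref{as:erg} enters and why the horizon drops out, leaving the constant $C_0/(1-\delta)$. A secondary point to verify is that the ergodicity constants $C_0,\delta$ must be uniform in the policy parameter so that they may be used for both $\theta$ and $\theta+\eta$; this is built into the statement of Assumption \ref{as:erg}.
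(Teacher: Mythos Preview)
Your argument is correct and is precisely the classical Mitrophanov telescoping proof that underlies the cited perturbation bound. The paper itself does not supply a proof of this lemma: it is stated as ``Adapted from \cite{mao2020perturbation}, Theorem~2.1'' and then invoked directly in the proof of Lemma~\ref{lemma:visit_contration}. So there is nothing to compare on the paper's side beyond the citation; you have simply filled in the standard argument that the reference contains. Your handling of the two subtleties --- the mean-zero structure of each $\nu_k$ enabling the geometric contraction under $Q^{m}$, and the harmless normalization $C_0\ge 1$ needed to absorb the $k=n-1$ term into $C_0/(1-\delta)$ --- is accurate, as is your observation that Assumption~\ref{as:erg} must (and does) hold uniformly in $\theta$ so that the ergodicity constants apply to the perturbed kernel $Q=\Prob_{\theta+\eta}$ as well.
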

    
    \begin{lemma} \label{lemma:visit_contration}
    Let $\rho$ be any probability distribution on $\State$, and $\pi$. Then, for parameters $\theta, \eta \in \mathbb{R}^n$
    \begin{align*}
        \norm{d_\theta^\rho - d_{\theta+\eta}^\rho} \leq L_d \norm{\eta}^{\frac{\beta_1}{2}},
        \end{align*}   
    where $L_d = \frac{(1+C_0-\delta) \sqrt{ 2C_{\nu, 1}}}{(1-\delta)}$, with $C_0, \delta$ from Assumption \ref{as:erg} and $C_{\nu,1}, \beta_1$ from Assumption \ref{as:smooth}.
    \end{lemma}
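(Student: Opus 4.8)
The plan is to exploit the factorization $d_\theta^\rho(s,a) = H_\theta^\rho(s)\,\pi_\theta(a|s)$, which holds because, conditionally on the state $s_t$, every action along a trajectory is drawn from $\pi_\theta(\cdot\,|\,s_t)$, so the discounted series defining $d_\theta^\rho$ inherits this product structure; throughout, $\norm{\cdot}$ on a signed measure denotes the total-variation ($L_1$) norm. Adding and subtracting $H_\theta^\rho(s)\,\pi_{\theta+\eta}(a|s)$ and integrating the absolute value over $\State\times\A$ gives the two-term bound
\[
\norm{d_\theta^\rho - d_{\theta+\eta}^\rho} \le \int_\State H_\theta^\rho(s)\,\norm{\pi_\theta(\cdot|s)-\pi_{\theta+\eta}(\cdot|s)}_{TV}\,ds + \norm{H_\theta^\rho - H_{\theta+\eta}^\rho}_{TV}.
\]

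For the first term I would apply Pinsker's inequality, $\norm{\pi_\theta(\cdot|s)-\pi_{\theta+\eta}(\cdot|s)}_{TV} \le \sqrt{2\,\mathrm{KL}(\pi_\theta(\cdot|s)\,\|\,\pi_{\theta+\eta}(\cdot|s))}$, and then bound the KL by $C_{\nu,1}\norm{\eta}^{\beta_1}$ using \eqref{eq:KL} of Assumption \ref{as:smooth}; since $H_\theta^\rho$ is a probability density, this term is at most $\sqrt{2C_{\nu,1}}\,\norm{\eta}^{\beta_1/2}$.

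For the second term I would expand $H_\theta^\rho(s) = (1-\gamma)\sum_{t\ge 0}\gamma^t\,\Exp_{s_0\sim\rho}\sq{\Prob_\theta^t(s|s_0)}$, move the absolute value inside the sum and the state integral (Tonelli), and apply Lemma \ref{lem:ergodicity_to_visitation} termwise (the $t=0$ term vanishes). The crucial feature is that the resulting bound $\tfrac{C_0}{1-\delta}D(\Prob_\theta,\Prob_{\theta+\eta})$ does \emph{not} depend on $t$, so $(1-\gamma)\sum_t\gamma^t = 1$ and the whole term is bounded by $\tfrac{C_0}{1-\delta}D(\Prob_\theta,\Prob_{\theta+\eta})$. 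To estimate the one-step kernel distance, write $\Prob_\theta(s'|s) = \int_\A P(s'|s,a)\,\pi_\theta(a|s)\,da$, push the absolute value past the $P(\cdot|s,a)$ integral, which then integrates to $1$, and conclude $\int |\Prob_\theta(s'|s)-\Prob_{\theta+\eta}(s'|s)|\,ds' \le \int_\A|\pi_\theta(a|s)-\pi_{\theta+\eta}(a|s)|\,da \le \sqrt{2C_{\nu,1}}\,\norm{\eta}^{\beta_1/2}$ by the same Pinsker plus \eqref{eq:KL} argument, uniformly in $s$; hence $D(\Prob_\theta,\Prob_{\theta+\eta}) \le \sqrt{2C_{\nu,1}}\,\norm{\eta}^{\beta_1/2}$.

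Summing the two contributions yields $\bigl(1 + \tfrac{C_0}{1-\delta}\bigr)\sqrt{2C_{\nu,1}}\,\norm{\eta}^{\beta_1/2} = \tfrac{(1+C_0-\delta)\sqrt{2C_{\nu,1}}}{1-\delta}\,\norm{\eta}^{\beta_1/2} = L_d\,\norm{\eta}^{\beta_1/2}$, matching the stated constant. The only genuinely delicate point is the second term: one must use the horizon-uniformity of Lemma \ref{lem:ergodicity_to_visitation} — this is exactly where the ergodicity Assumption \ref{as:erg} is indispensable, since otherwise the kernel perturbation would compound geometrically in $t$ — and check that the reduction of $D(\Prob_\theta,\Prob_{\theta+\eta})$ to a policy KL holds uniformly over the conditioning state. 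Everything else is routine manipulation of Pinsker's inequality, Tonelli's theorem, and the normalization of probability densities.
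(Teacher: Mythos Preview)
Your proposal is correct and follows essentially the same route as the paper: the same factorization $d_\theta^\rho(s,a)=H_\theta^\rho(s)\pi_\theta(a|s)$, the same add--subtract split into a policy term (handled by Pinsker plus \eqref{eq:KL}) and a state-visitation term (handled by Lemma~\ref{lem:ergodicity_to_visitation} uniformly in the horizon), and the same reduction of $D(\Prob_\theta,\Prob_{\theta+\eta})$ to the policy TV via $\int P(s'|s,a)\,ds'=1$. The only cosmetic difference is which cross term you add and subtract, which does not affect the argument or the final constant.
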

    \textbf{Proof:} 
    Let us factor the state distribution as the following:
    \begin{equation*}
        d_\theta^\rho(s',a) = H_{\theta}^{\rho}(s') \pi_\theta(a|s').
    \end{equation*}
    
    Therefore we can bound the difference of two visitation distributions as:
    \begin{equation*}
    \begin{split}
        \int \int \abs{d_\theta^\rho(s,a) - d_{\theta+\eta}^\rho(s,a)} \, da \, ds &\leq \int \int \abs{H_{\theta}^{\rho}(s) \pi_\theta(a|s) - H_{\theta+\eta}^{\rho}(s) \pi_{\theta+\eta}(a|s)} \, da \, ds  \\ 
        & \qquad + \int \int \abs{ H_{\theta+\eta}^{\rho}(s) \pi_\theta(a|s) - H_{\theta+\eta}^{\rho}(s) \pi_{\theta+\eta}(a|s)} \, da\, ds \\
        &\leq \int  \abs{H_{\theta}^{\rho}(s) - H_{\theta+\eta}^{\rho}(s)} \int \pi_{\theta}(a|s) \, da \, ds \\
        &\qquad + \int H_{\theta}^{\rho}(s) \int \abs{\pi_{\theta}(a|s) - \pi_{\theta+\eta}(a|s)} \, da\, ds \\
        &\leq \norm{H_{\theta}^{\rho} - H_{\theta+\eta}^{\rho}}_{TV} + \sup_{s} \int \abs{\pi_{\theta}(a|s) - \pi_{\theta+\eta}(a|s)} \, da.
    \end{split} 
    \end{equation*}
    The second term can be bounded using Pinsker's inequality and Assumption \ref{as:smooth}, obtaining
    \begin{align*}
        \sup_{s} \int \abs{\pi_{\theta}(a|s) - \pi_{\theta+\eta}(a|s)} \, da &\leq \sqrt{2} \sup_s \sqrt{\int \pi_{\theta}(a|s) \log \frac{\pi_{\theta}(a|s)}{\pi_{\theta+\eta}(a|s)} \,  da} \\
        &\leq \sqrt{2 C_{\nu, 1}} \norm{\eta}^{\beta_1/2}.
    \end{align*}
    For the other term, by Lemma \ref{lem:ergodicity_to_visitation}, we know that for all $n > 0$ under Assumption \ref{as:erg}
    \begin{equation*}
        \int \abs{\Exp_{s_0 \sim \rho}\sq{\Prob_{\theta}^n(s| s_0) - \Prob_{\theta+\eta}^n( s|s_0)}} \, ds \leq \frac{C_0}{1-\delta} D(\Prob_{\theta}, \Prob_{\theta+\eta}).
    \end{equation*}
    
    Under these assumptions, the norm of the visitation distribution under perturbation can be bounded by the norm of the perturbation kernel using Lemma \ref{lemma:visit_contration}:
    \begin{align*}
        \norm{H_{\theta}^\rho - H_{\theta+\eta}^\rho}_{TV} &\leq \int (1-\gamma) \sum_{n=1}^{\infty} \gamma^n \abs{\Exp_{s_0 \sim \rho}[\Prob^n_{\theta}(s|s_0) - \Prob^n_{\theta+\eta}(s|s_0)]} \, ds \\
        &= (1-\gamma) \sum_{n=1}^{\infty} \gamma^n \int \abs{\Exp_{s_0 \sim \rho} \sq{\Prob_{\theta}^n(s|s_0) - \Prob_{\theta+\eta, \rho}^n(s|s_0)}} \, ds \\
        &\leq \frac{C_0}{1-\delta}  D(\Prob_{\theta}, \Prob_{\theta+\eta})  \\
        &= \sup_s \frac{C_0}{1-\delta} \bracket{\int \int P(s'|s,a) \abs{\pi_{\theta}(a|s) - \pi_{\theta+\eta}(a|s)} \, ds' \, da}  \\
        &\overset{(i)}{=} \sup_s \frac{C_0}{1-\delta} \int \abs{\pi_{\theta}(a|s) - \pi_{\theta+\eta}(a|s)} \,  da \\
        &\overset{(ii)}{\leq} \frac{C_0 \sqrt{2 C_{\nu, 1}}}{(1-\delta)} \norm{\eta}^{\frac{\beta_1}{2}},
    \end{align*}
    where $(i)$ follows as $\int P(s'|s,a) \, ds' = 1$ for any $s,a$, $(ii)$ is again using Pinsker's inequality and Assumption \ref{as:smooth}.
    Returning to our overall bound, we get:
    \begin{equation*}
    \begin{split}
        \norm{d_\theta^\rho - d_{\theta+\eta}^\rho}_{TV} &\leq \norm{H_{\theta}^{\rho} - H_{\theta+\eta}^{\rho}}_{TV} + \sup_{s} \int \abs{\pi_{\theta}(a|s) - \pi_{\theta+\eta}(a|s)} \, da
        \\ &\leq  \frac{C_0 \sqrt{2 C_{\nu, 1}}}{(1-\delta)} \norm{\eta}^{\frac{\beta_1}{2}} +  \sqrt{2 C_{\nu, 1}} \norm{\eta}^{\frac{\beta_1}{2}} \\
        &\leq  \frac{(1+C_0-\delta) \sqrt{2 C_{\nu, 1}}}{(1-\delta)} \norm{\eta}^{\frac{\beta_1}{2}}.
    \end{split}
    \end{equation*}
    This concludes the proof. \hfill $\square$

    \subsection{Q-function Analysis}
    
    \begin{lemma}\label{lem:q_smooth}
        Let $\rho$ be any probability distribution on $\mathcal{S}$, and let the policy class and the MDP satisfy Assumptions \ref{as:smooth}, \ref{as:erg}. Then for parameters $\theta, \eta \in \mathbb{R}^n$
        \begin{equation*}
        \int \int d_\theta^\rho(s,a) \abs{Q_{\theta}(s,a) - Q_{\theta + \eta}(s,a)} \, da \, ds \leq \frac{L_Q}{1-\gamma} \norm{\eta}^{\frac{\beta_1}{2}},
        \end{equation*}
        where $L_Q = \frac{\alpha \gamma (1+C_0-\delta) \sqrt{2 C_{\nu, 1}}}{(1-\delta)}$, with $C_0, \delta$ from Assumption \ref{as:erg} and $C_{\nu,1}$ from Assumption \ref{as:smooth}, $\alpha$ bounds the absolute value of the reward, and $\gamma$ is the discount factor.
    \end{lemma}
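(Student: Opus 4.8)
The plan is to reduce the $Q$-function difference to a difference of discounted occupancy measures, which Lemma \ref{lemma:visit_contration} already controls. First I would record the standard occupancy-measure representation of the value function: writing $\bar r(s,a) = \Exp[r\mid s,a]$, which satisfies $\abs{\bar r}\le\alpha$, the definition of $d_\theta^s$ gives $V_\theta(s) = \frac{1}{1-\gamma}\int_{\State}\int_{\A} d_\theta^s(s',a')\,\bar r(s',a')\,da'\,ds'$. Combining this with the one-step Bellman identity $Q_\theta(s,a) = \bar r(s,a) + \gamma\,\Exp_{s_1\sim P(\cdot\mid s,a)}[V_\theta(s_1)]$ yields
\[
Q_\theta(s,a) = \bar r(s,a) + \frac{\gamma}{1-\gamma}\int_{\State}\int_{\A} d_\theta^{P(\cdot\mid s,a)}(s',a')\,\bar r(s',a')\,da'\,ds',
\]
where $d_\theta^{P(\cdot\mid s,a)}$ is the discounted state-action occupancy of $\pi_\theta$ run from the initial state distribution $P(\cdot\mid s,a)$, in the notation $d_\pi^\rho$ of \S2.

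Next, since the term $\bar r(s,a)$ does not depend on $\theta$, subtracting the analogous expression for $\theta+\eta$ cancels it and leaves
\[
Q_\theta(s,a) - Q_{\theta+\eta}(s,a) = \frac{\gamma}{1-\gamma}\int_{\State}\int_{\A}\bigl(d_\theta^{P(\cdot\mid s,a)}(s',a') - d_{\theta+\eta}^{P(\cdot\mid s,a)}(s',a')\bigr)\,\bar r(s',a')\,da'\,ds'.
\]
Bounding $\abs{\bar r}\le\alpha$, the right-hand side is at most $\frac{\gamma\alpha}{1-\gamma}\norm{d_\theta^{P(\cdot\mid s,a)} - d_{\theta+\eta}^{P(\cdot\mid s,a)}}_{TV}$ in absolute value. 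I then apply Lemma \ref{lemma:visit_contration} with the initial distribution $\rho = P(\cdot\mid s,a)$ — the lemma is stated for an arbitrary initial state distribution — to get $\norm{d_\theta^{P(\cdot\mid s,a)} - d_{\theta+\eta}^{P(\cdot\mid s,a)}}_{TV}\le L_d\norm{\eta}^{\beta_1/2}$ with $L_d = \frac{(1+C_0-\delta)\sqrt{2C_{\nu,1}}}{1-\delta}$. This produces the pointwise estimate $\sup_{s,a}\abs{Q_\theta(s,a)-Q_{\theta+\eta}(s,a)}\le\frac{\gamma\alpha L_d}{1-\gamma}\norm{\eta}^{\beta_1/2} = \frac{L_Q}{1-\gamma}\norm{\eta}^{\beta_1/2}$ with $L_Q = \gamma\alpha L_d$, which is exactly the claimed constant.

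Finally, since $d_\theta^\rho$ is a probability density on $\State\times\A$, integrating the pointwise bound gives $\int_{\State}\int_{\A} d_\theta^\rho(s,a)\abs{Q_\theta(s,a)-Q_{\theta+\eta}(s,a)}\,da\,ds\le\frac{L_Q}{1-\gamma}\norm{\eta}^{\beta_1/2}$, as desired. I do not anticipate a substantive obstacle; the only points requiring care are the bookkeeping of the shift by one step when passing from the $V$-occupancy to the $Q$-occupancy representation — in particular checking that $d_\theta^{P(\cdot\mid s,a)}$ is genuinely an occupancy measure for a legitimate initial state distribution, so that Lemma \ref{lemma:visit_contration} applies verbatim — and tracking that $\norm{\cdot}_{TV}$ is the $L_1$ distance used throughout this paper (which fixes the factor $\sqrt{2C_{\nu,1}}$ rather than $\sqrt{C_{\nu,1}/2}$). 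Note that the ergodicity factor $\tfrac{1+C_0-\delta}{1-\delta}$ enters solely through the invocation of Lemma \ref{lemma:visit_contration}, and that this route gives the $\tfrac{1}{1-\gamma}$ dependence claimed, in place of the $\tfrac{1}{(1-\gamma)^2}$ one would obtain from a naive $\norm{\cdot}_\infty$ contraction argument.
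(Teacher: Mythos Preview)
Your proposal is correct and is essentially the same as the paper's proof: both express $Q_\theta(s,a)$ via the occupancy measure $d_\theta^{\tilde\rho_{s,a}}$ with shifted initial distribution $\tilde\rho_{s,a}=P(\cdot\mid s,a)$, subtract to cancel the $\theta$-independent reward term, bound $\abs{\bar r}\le\alpha$, invoke Lemma~\ref{lemma:visit_contration} for this initial distribution to obtain the pointwise estimate $\abs{Q_\theta-Q_{\theta+\eta}}\le\frac{\alpha\gamma L_d}{1-\gamma}\norm{\eta}^{\beta_1/2}$, and then integrate against $d_\theta^\rho$. The constants match exactly.
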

    Define the state distribution with the density $\tilde \rho_{s,a}(s') = P(s'|a,s)$ as the state distribution after taking action $a$ in state $s$ 
    We can write the definition of the Q-function as:
    \begin{align*}
        Q_\theta(s,a) &= \int r R(r|s,a) \, dr + \gamma \int \int \int r \tilde \rho_{s,a}(s') \pi_\theta(a'|s') R(r|s',a') \, dr \, da' \, ds' \\
        &\qquad +  \sum_{n=2}^{\infty} \gamma^{n} \int \int \int r \Exp_{s_0 \sim \tilde \rho_{s,a}}\sq{\Prob_{\theta}^{n-1}(s'|s_0)} \pi_{\theta}(a'|s') R(r|s', a') \, dr \, da' \, ds' \\
        &= \int r R(r|s,a) \, dr + \frac{\gamma}{1-\gamma} \int \int \int r  R(r|s', a') d_{\theta}^{\tilde \rho_{s,a}}(s',a') \, dr \, da' \, ds',
    \end{align*}
    Subsequently the difference reduces to:
    \begin{align*}
    \abs{Q_{\theta}(s,a) - Q_{\theta + \eta}(s,a)} &\leq  \frac{\gamma}{1-\gamma} \int \int \int \abs{r}  R(r|s', a') \abs{d_{\theta}^{\tilde \rho_{s,a}}(s',a') - d_{\theta+\eta}^{\tilde \rho_{s,a}}(s',a')} \, dr \, da' \, ds' \\
    &\overset{(i)}{\leq} \frac{\alpha \gamma}{1-\gamma} \int \int \abs{d_{\theta}^{\tilde \rho_{s,a}}(s',a') - d_{\theta+\eta}^{\tilde \rho_{s,a}}(s',a')} \, da' \, ds',
    \end{align*}
    where in $(i)$ we recall that the reward distribution has bounded support on $[-\alpha, \alpha]$.
    This then reduces to Lemma \ref{lemma:visit_contration}, under the initial distribution $\tilde \rho_{s,a}$. From our earlier bound we obtain the following:
    \begin{equation*}
        \begin{split}
        \abs{Q_{\theta}(s,a) - Q_{\theta + \eta}(s,a)} &\leq \frac{\alpha \gamma L_d}{1-\gamma} \norm{\eta}^{\frac{\beta_1}{2}}.
        \end{split}
    \end{equation*}
    Finally, we can plug this bound into the integral to get
    \begin{align*}
        \int \int d_\theta^\rho(s,a) \abs{Q_{\theta}(s,a) - Q_{\theta + \eta}(s,a)} \, da \, ds \leq \frac{\alpha \gamma L_d}{1-\gamma} \norm{\eta}^{\frac{\beta_1}{2}} \int \int d_{\theta}^{\rho}(s,a) \, ds \, da = \frac{\alpha \gamma L_d}{1-\gamma} \norm{\eta}^{\frac{\beta_1}{2}}.
    \end{align*}
    \hfill $\square$
    
     \subsection{Bounding Score Function}
    \begin{lemma}
        Under Assumption \ref{as:smooth}, the score function is bounded as
        \begin{align*}
            \int \int d_\theta^\rho(s,a) \norm{\psi_\theta(s,a) - \psi_{\theta+\eta}(s,a)} \, ds \, da \leq C_{\nu, 2} \norm{\eta}^{\beta_2},
        \end{align*}
        where the constants $C_{\nu,2}, \beta_2$ are given in Assumption \ref{as:smooth}.
    \end{lemma}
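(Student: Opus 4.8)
The plan is essentially immediate: we integrate the pointwise Hölder estimate on the score function against the visitation density. Recall from Assumption \ref{as:smooth}, specifically the requirement \eqref{eq:TV}, that for every $\theta$, every perturbation $\eta$, and every $(s,a) \in \State \times \A$ we have $\norm{\psi_\theta(s,a) - \psi_{\theta+\eta}(s,a)} \le C_{\nu,2}\norm{\eta}^{\beta_2}$. Since the right-hand side does not depend on $(s,a)$, it can be pulled outside of any integral taken against a probability measure, and $d_\theta^\rho$ is precisely such a measure.

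Concretely, the first step is to apply \eqref{eq:TV} inside the integral to obtain
\begin{align*}
\int \int d_\theta^\rho(s,a)\, \norm{\psi_\theta(s,a)-\psi_{\theta+\eta}(s,a)}\, ds\, da \;\le\; \int \int d_\theta^\rho(s,a)\, C_{\nu,2}\norm{\eta}^{\beta_2}\, ds\, da,
\end{align*}
and the second step is to use that $d_\theta^\rho$ is a probability density on $\State\times\A$ (as noted in \S2), so that $\int \int d_\theta^\rho(s,a)\, ds\, da = 1$; this leaves exactly $C_{\nu,2}\norm{\eta}^{\beta_2}$, which is the claimed bound.

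There is no genuine obstacle here. The only point worth flagging is that \eqref{eq:TV} is postulated to hold pointwise for all $s,a$, not merely in an averaged sense, so — in contrast to Lemmas \ref{lemma:visit_contration} and \ref{lem:q_smooth}, where the perturbation had to be propagated through the transition kernel — no ergodicity or visitation-distribution machinery is needed. The lemma is phrased in the integrated form solely so that it can be substituted directly into the weak-smoothness-of-objective computation that follows. \hfill $\square$
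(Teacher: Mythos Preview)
Your proof is correct and matches the paper's own argument essentially verbatim: apply the pointwise bound \eqref{eq:TV} inside the integral and use that $d_\theta^\rho$ is a probability density. There is nothing to add.
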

    
    \textbf{Proof:} This follows from the second bound in Assumption \ref{as:smooth}.
    \begin{equation*}
    \begin{split}
        &\int \int d_\theta^\rho(s,a) \norm{\psi_\theta(s,a) - \psi_{\theta+\eta}(s,a)} \, da\, ds \leq C_{\nu, 2} \norm{\eta}^{\beta_2} \int \int d_\theta^\rho(s,a) \, da \, ds = C_{\nu, 2} \norm{\eta}^{\beta_2}.
    \end{split}
    \end{equation*}
    \hfill $\square$

\subsection{Weak Smoothness of Objective}
We conclude by proving the weak smoothness property of the loss function.

\begin{prop} \label{prop:smooth}
For any $\theta, \eta \in \R^d$, $\nabla J(\theta)$ satisfies the following smoothness property
\begin{align*}
\norm{\nabla J(\theta) - \nabla J(\theta+\eta)} \leq \frac{L_1}{1-\gamma} \norm{\eta}^{\beta_1/4} + \frac{L_2}{1-\gamma} \norm{\eta}^{\beta_2},
\end{align*}
where $L_1 =  \frac{2\alpha C_{\nu,1}^{1/4}\sqrt{ \psi_\infty (1+C_0-\delta) } \left(\sqrt{\gamma} + \sqrt{D_{\infty}}\right)}{1-\delta}$ and $L_2 = \alpha C_{\nu,2}$. Here $\gamma$ is the discount factor, $\beta_1, \beta_2$ are the smoothness coefficients defined in Assumption \ref{as:smooth}, $D_{\infty} \triangleq \sup_{\theta_1, \theta_2 \in \Theta} \norm{\frac{d_{\theta_1}^\rho}{d_{\theta_2}^{\rho}}}_\infty + 1$ is a distribution mismatch coefficient, $C_0, \delta$ are defined in Assumption \ref{as:erg}, $C_{\nu,1}, C_{\nu,2}$ are defined in Assumption \ref{as:smooth}, $\psi_{\infty}$ is defined in Assumption \ref{as:moment}, and $\alpha$ is the bound on the reward.
\end{prop}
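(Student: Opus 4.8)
The plan is to start from the policy-gradient identity $\nabla J(\theta) = \int\int d_\theta^\rho(s,a)\, Q_\theta(s,a)\, \psi_\theta(s,a)\,da\,ds$ and expand $\nabla J(\theta)-\nabla J(\theta+\eta)$ as a telescoping sum in which one factor is perturbed at a time, but with the intermediate terms chosen so that the $Q$-difference is paired with $\psi_{\theta+\eta}$ rather than $\psi_\theta$:
\begin{align*}
\nabla J(\theta) - \nabla J(\theta+\eta) &= \underbrace{\int\int \bigl(d_\theta^\rho - d_{\theta+\eta}^\rho\bigr) Q_\theta \psi_\theta}_{(\mathrm{I})} + \underbrace{\int\int d_{\theta+\eta}^\rho \bigl(Q_\theta - Q_{\theta+\eta}\bigr)\psi_{\theta+\eta}}_{(\mathrm{II})} \\
&\qquad + \underbrace{\int\int d_{\theta+\eta}^\rho Q_\theta \bigl(\psi_\theta - \psi_{\theta+\eta}\bigr)}_{(\mathrm{III})}.
\end{align*}
Term $(\mathrm{III})$ is handled immediately: bounding $\abs{Q_\theta}\le \alpha/(1-\gamma)$, applying the pointwise Hölder estimate \eqref{eq:TV} on the score, and using $\int\int d_{\theta+\eta}^\rho = 1$ gives $\norm{(\mathrm{III})}\le \tfrac{\alpha C_{\nu,2}}{1-\gamma}\norm{\eta}^{\beta_2}$, which is exactly the $L_2$ term with $L_2 = \alpha C_{\nu,2}$.

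The substance is in $(\mathrm{I})$ and $(\mathrm{II})$, and the reason the first exponent degrades to $\beta_1/4$ is that $\psi_\theta$ is only square-integrable (Assumption \ref{as:moment}) rather than uniformly bounded, so one cannot factor $\sup\norm{\psi_\theta}$ out against the $\norm{\eta}^{\beta_1/2}$-rate bounds. For $(\mathrm{I})$ I would bound $\abs{Q_\theta}\le\alpha/(1-\gamma)$ and then use Cauchy--Schwarz in the split form $\int\int\abs{d_\theta^\rho-d_{\theta+\eta}^\rho}\norm{\psi_\theta}\le\bigl(\int\int\abs{d_\theta^\rho-d_{\theta+\eta}^\rho}\bigr)^{1/2}\bigl(\int\int\abs{d_\theta^\rho-d_{\theta+\eta}^\rho}\norm{\psi_\theta}^2\bigr)^{1/2}$; the first factor is $\le\sqrt{L_d}\,\norm{\eta}^{\beta_1/4}$ by Lemma \ref{lemma:visit_contration}, and the second is bounded by $\sqrt{D_\infty \psi_\infty}$ after writing $\abs{d_\theta^\rho-d_{\theta+\eta}^\rho}\le d_\theta^\rho+d_{\theta+\eta}^\rho$, controlling the $d_\theta^\rho$ piece by $\psi_\infty$ and the $d_{\theta+\eta}^\rho$ piece via $d_{\theta+\eta}^\rho\le (D_\infty-1)d_\theta^\rho$. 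For $(\mathrm{II})$, the point of pairing with $\psi_{\theta+\eta}$ is that Assumption \ref{as:moment} now applies directly under $d_{\theta+\eta}^\rho$ with no mismatch factor; here I would again split Cauchy--Schwarz against $d_{\theta+\eta}^\rho$, using the pointwise estimate $\abs{Q_\theta - Q_{\theta+\eta}}\le \tfrac{\alpha\gamma L_d}{1-\gamma}\norm{\eta}^{\beta_1/2}$ produced inside the proof of Lemma \ref{lem:q_smooth} for one factor and the trivial bound $\abs{Q_\theta - Q_{\theta+\eta}}\le 2\alpha/(1-\gamma)$ for the other, obtaining $\norm{(\mathrm{II})}\le\tfrac{\alpha\sqrt{2\gamma L_d\psi_\infty}}{1-\gamma}\norm{\eta}^{\beta_1/4}$; this is where the $\sqrt\gamma$ in $L_1$ comes from, while the $\sqrt{D_\infty}$ comes from $(\mathrm{I})$.

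Finally I would add the three bounds, substitute $L_d=\tfrac{(1+C_0-\delta)\sqrt{2C_{\nu,1}}}{1-\delta}$, and loosen the numerical constants (using $(1-\delta)^{-1/2}\le(1-\delta)^{-1}$ and $2^{3/4},\sqrt 2\le 2$, and $\gamma<1$) to collect the two $\norm{\eta}^{\beta_1/4}$ contributions into a single term of the form $\tfrac{L_1}{1-\gamma}\norm{\eta}^{\beta_1/4}$ with $L_1$ exactly as stated; the $\norm{\eta}^{\beta_2}$ term of $(\mathrm{III})$ gives $L_2$. I expect the main obstacle to be precisely the manipulation in $(\mathrm{I})$ and $(\mathrm{II})$: detaching the merely $L_2$-integrable score from the small perturbation factors forces a Cauchy--Schwarz step that takes a square root of the $\norm{\eta}^{\beta_1/2}$ rate (hence $\beta_1/4$), and getting the constants to work out requires both choosing the decomposition so the $Q$-difference meets $\psi_{\theta+\eta}$ and invoking $D_\infty$ to transport the second-moment bound across visitation distributions in $(\mathrm{I})$ --- the same square-root step that the remark after Theorem \ref{th:con} removes once $\norm{\psi_\theta}$ is bounded almost surely.
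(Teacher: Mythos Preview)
Your proposal is correct and follows essentially the same route as the paper: the same three-term telescoping, the same Cauchy--Schwarz splitting that halves the $\beta_1/2$ exponent because $\psi_\theta$ is only $L_2$-bounded, the pointwise $Q$-difference bound from inside Lemma~\ref{lem:q_smooth}'s proof, and the use of $D_\infty$ to transport the score moment across visitation measures. The only cosmetic difference is the order in which the factors are peeled off---the paper perturbs $Q$ first (against $d_\theta^\rho,\psi_\theta$) and the visitation last (against $Q_{\theta+\eta},\psi_{\theta+\eta}$), and runs its Cauchy--Schwarz for the visitation term against $d_{\theta+\eta}^\rho$ after rewriting the difference as $\bigl|d_\theta^\rho/d_{\theta+\eta}^\rho - 1\bigr|\,d_{\theta+\eta}^\rho$ rather than against the unsigned measure $|d_\theta^\rho-d_{\theta+\eta}^\rho|$ as you do---but the resulting constants are identical.
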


\textbf{Proof:} Decomposing $\nabla J(\theta) = \Exp_{(s,a) \sim d_{\theta}^\rho} [Q_{\pi_\theta}(s,a) \psi_\theta(s,a)]$ and using the triangle inequality and Jensen's inequality, we get
\begin{align*}
    \norm{\nabla J(\theta) - \nabla J(\theta + \eta)} &= \Bigl \| \int \int Q_{\theta}(s,a) \psi_{\theta}(s,a) d^\rho_{\theta}(s,a)  \, ds \, da \\
    &\qquad - \int \int Q_{\theta + \eta}(s,a) \psi_{\theta + \eta}(s,a) d^\rho_{\theta+\eta}(s,a) \, ds \, da \Bigr\| \\
    &\leq \int \int  \abs{Q_{\theta}(s,a)-Q_{\theta+\eta}(s,a)} \norm{\psi_{\theta}(s,a)} d_{\theta}^{\rho}(s,a) \, ds \, da \\
        & \quad + \int \int \abs{Q_{\theta+\eta}(s,a)} \norm{\psi_\theta(s,a) - \psi_{\theta+\eta}(s,a)}  d_\theta^\rho(s,a) \, ds \, da 
        \\
        & \quad + \int \int \abs{Q_{\theta+\eta}(s,a)}  \norm{\psi_{\theta+\eta}(s,a)}\abs{d_\theta^\rho(s,a) - d_{\theta+\eta}^\rho(s,a)} \, ds \, da.
\end{align*}
We handle each of these terms separately. For the first term,
\begin{align*}
    &\int \int  \abs{Q_{\theta}(s,a)-Q_{\theta+\eta}(s,a)} \norm{\psi_{\theta}(s,a)} d_{\theta}^{\rho}(s,a) \, ds \, da \\
    &\qquad \leq \sqrt{\int \int \abs{Q_{\theta}(s,a)-Q_{\theta+\eta}(s,a)}^2 d_{\theta}^\rho(s,a) \, ds \, da} \times \sqrt{\int \int \norm{\psi_\theta(s,a)}^2 d_{\theta}^\rho(s,a) \,ds \, da} \\
    &\overset{(i)}{\leq} \sqrt{\frac{2 \psi_\infty \alpha}{1-\gamma} \times \int \int \abs{Q_{\theta}(s,a)-Q_{\theta+\eta}(s,a)} d_{\theta}^\rho(s,a) \, ds \, da} \\
    &\overset{(ii)}{\leq} \sqrt{2 \psi_\infty \frac{\alpha L_Q}{(1-\gamma)^2} \norm{\eta}^{\beta_1/2}} = \frac{\sqrt{2 \psi_{\infty} \alpha L_Q}}{1-\gamma} \norm{\eta}^{\beta_1/4},
\end{align*}
where in $(i)$ we use Assumption \ref{as:moment} and that $\norm{Q_{\theta}}_\infty \leq \frac{\alpha}{1-\gamma}$ by boundedness of the reward for any $\theta$, and in $(ii)$ we use Lemma \ref{lem:q_smooth}. For the second term, we use
\begin{align*}
     &\int \int \abs{Q_{\theta+\eta}(s,a)} \norm{\psi_\theta(s,a) - \psi_{\theta+\eta}(s,a)}  d_\theta^\rho(s,a) \, ds \, da \\
     &\qquad \leq \norm{Q_{\theta+\eta}}_\infty \int \int d_\theta^\rho(s,a) \norm{\psi_\theta(s,a) - \psi_{\theta+\eta}(s,a)} \, ds \, da \\
     &\qquad \leq \frac{\alpha C_{\nu, 2}}{1-\gamma} \norm{\eta}^{\beta_2}.
\end{align*}
For the final term, we get
\begin{align*}
    &\int \int \abs{Q_{\theta+\eta}(s,a)}  \norm{\psi_{\theta+\eta}(s,a)} \abs{d_\theta^\rho(s,a) - d_{\theta+\eta}^\rho(s,a)} \, ds \, da \\
    &\qquad \leq \int \int \abs{Q_{\theta+\eta}(s,a)}  \norm{\psi_{\theta+\eta}(s,a)}\abs{\frac{d_\theta^\rho(s,a)}{d_{\theta+\eta}^\rho(s,a)} - 1} d_{\theta+\eta}^\rho(s,a) \, ds \, da \\
    &\qquad \leq \norm{Q_{\theta+\eta}}_\infty \sqrt{\int \int \norm{\psi_{\theta+\eta}(s,a)}^2 d_{\theta+\eta}^\rho(s,a) \, ds \, da} \times \sqrt{\int \int \abs{\frac{d_\theta^\rho(s,a)}{d_{\theta+\eta}^\rho(s,a)} - 1}^2 d_{\theta+\eta}^\rho(s,a) \, ds \, da} \\
    &\qquad \leq \frac{\sqrt{\psi_{\infty}} \alpha }{1-\gamma} \sqrt{\norm{\frac{d_{\theta}^\rho(s,a)}{d_{\theta+\eta}^{\rho}(s,a)}}_\infty + 1} \times  \sqrt{\int \int \abs{d_\theta^\rho(s,a) - d_{\theta+\eta}^{\rho}(s,a)} \,ds \, da} \\
    &\qquad \leq \frac{\sqrt{D_{\infty} \psi_{\infty} L_d} \alpha }{1-\gamma} \norm{\eta}^{\beta_1/4},
\end{align*}
where we denote $D_{\infty} \triangleq \sup_{\theta_1, \theta_2 \in \Theta} \norm{\frac{d_{\theta_1}^\rho}{d_{\theta_2}^{\rho}}}_\infty + 1$ as the distribution mismatch coefficient.

Putting all of this together, we get a final bound of:
    \begin{equation*}
        \begin{split}
            \norm{\nabla J(\theta) - \nabla J(\theta+\eta)} &\leq \frac{\sqrt{2 \psi_{\infty} \alpha L_Q}}{1-\gamma} \norm{\eta}^{\beta_1/4} + \frac{\alpha C_{\nu, 2}}{1-\gamma} \norm{\eta}^{\beta_2} + \frac{\sqrt{D_{\infty} \psi_{\infty} L_d} \alpha }{1-\gamma} \norm{\eta}^{\beta_1/4} \\ 
            &\leq \frac{L_1}{1-\gamma}  \norm{\eta}^{\beta_1/4} + \frac{L_2}{1-\gamma} \norm{\eta}^{\beta_2}.
        \end{split}
    \end{equation*}
    \hfill $\square$
    
    \textbf{Remarks:} Note that the applications of Cauchy-Schwarz are not necessary if we simply assume that $\norm{\psi_\theta}$ is bounded almost surely. Consequently we can retain the smoothness parameters $\beta_1/2$. This fact is noted in the main text. Furthermore, we will no longer need to bound the distribution mismatch coefficient, and instead we can directly apply Lemma \ref{lemma:visit_contration}. Thus we obtain in the case $\beta_1 = 2, \beta_2 = 1$ a bound of the form $\norm{\nabla J(\theta) - \nabla J(\theta+\eta)} \leq \frac{L}{1-\gamma} \norm{\eta}$ for some $L$ not dependent on $\gamma, \theta, \eta$, which is a standard result (see e.g. \cite[Proposition 1]{xu2020improving}).
    

\section{Theorem \ref{th:con}}
    \noindent\textit{Proof of Theorem \ref{th:con}:}
    We divide the analysis into the one for vanilla policy gradient, and the one for natural policy gradient respectively. Note that unless otherwise specified the expectations will be taken with respect to all randomness in the optimization procedure.
    
    First, we show a lemma that $v_{t,i}$ in Algorithms 1-2 is an unbiased estimator for $Q$, which we adapt the following Lemma from \cite[Theorem 3.4]{zhang2020global}.
\begin{lemma}
    The sampler for $Q$ is unbiased, i.e. for all $t,i$,
    \begin{align*}
        \Exp \sq{v_{t,i}\mid s_{t,i} = s, a_{t,i} = a} = Q_{\theta_{t-1}}(s, a).
    \end{align*}
\end{lemma}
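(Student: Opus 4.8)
The plan is to reduce the claim to the standard ``geometric horizon'' identity. Fix $t$ and $i$ and condition on the event $\{s_{t,i} = s_j = s,\ a_{t,i} = a_j = a\}$. The first step is to argue that the random index $j$ is irrelevant: by the Markov property of the chain induced by $\pi_{\theta_{t-1}}$, conditioned on $(s_j,a_j) = (s,a)$ the tail $(s_{j+k},a_{j+k})_{k\ge 0}$ has the same law as a fresh MDP trajectory $(s'_k,a'_k)_{k\ge 0}$ launched from $s'_0=s$, $a'_0=a$ under $\pi_{\theta_{t-1}}$, and each $r_{j+k}\sim R(\cdot\mid s_{j+k},a_{j+k})$ corresponds to a draw $r'_k\sim R(\cdot\mid s'_k,a'_k)$; moreover $h\sim\text{Geom}(1-\gamma^{1/2})$ is sampled independently of $j$ and of the whole trajectory. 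Re-indexing $u=j+k$ in $v_{t,i}=\sum_{u=j}^{j+h}\gamma^{(u-j)/2}r_u$ therefore shows that, conditionally on the event above, $v_{t,i}$ has the same law as $\sum_{k=0}^{h}\gamma^{k/2}r'_k$.

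Next I would take the conditional expectation by integrating over $h$ first. Writing the truncated sum as $\sum_{k=0}^{h}\gamma^{k/2}r'_k=\sum_{k=0}^{\infty}\gamma^{k/2}r'_k\,\mathbbm{1}_{\{h\ge k\}}$, using $\abs{r'_k}\le\alpha$ almost surely together with $\sum_{k\ge 0}\gamma^{k/2}<\infty$ to justify Tonelli and dominated convergence, and using that $h$ is independent of the trajectory, we obtain
\begin{align*}
\Exp\sq{v_{t,i}\mid s_{t,i}=s,\ a_{t,i}=a}=\sum_{k=0}^{\infty}\gamma^{k/2}\,\Prob(h\ge k)\,\Exp\sq{r'_k\mid s'_0=s,\ a'_0=a}.
\end{align*}
With the convention that $\text{Geom}(1-\gamma^{1/2})$ is supported on $\{0,1,2,\dots\}$ (the same convention under which $j\sim\text{Geom}(1-\gamma)$ correctly samples from $d^\rho_{\theta}$), one has $\Prob(h\ge k)=(\gamma^{1/2})^k=\gamma^{k/2}$, so the two $\gamma^{k/2}$ factors collapse into $\gamma^k$ and the right-hand side equals $\sum_{k=0}^{\infty}\gamma^k\Exp[r'_k\mid s'_0=s,a'_0=a]=Q_{\theta_{t-1}}(s,a)$ by definition of the $Q$-function.

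I do not expect a genuine obstacle here; the only points needing care are (i) stating the Markov-property reduction cleanly even though $j$ is itself random --- conditioning on the \emph{value} of the pair $(s_{t,i},a_{t,i})$ makes the post-$j$ reward stream independent of $j$ --- and (ii) justifying the exchange of the infinite sum with the expectation, which is immediate from $\abs{r_u}\le\alpha$ and $\gamma<1$. This is the specialization of \cite[Theorem 3.4]{zhang2020global} to the $\sqrt{\gamma}$-truncated estimator of Algorithms 1--2; the point of the $\text{Geom}(1-\gamma^{1/2})$ horizon is exactly that $\Prob(h\ge k)=\gamma^{k/2}$ cancels the $\gamma^{k/2}$ weights in $v_{t,i}$, reconstructing the true discount factor $\gamma^{k}$ in expectation.
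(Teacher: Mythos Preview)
Your proposal is correct and is essentially the same argument as the paper's: both rewrite the truncated sum via the indicator $\mathbbm{1}_{\{h\ge k\}}$, swap the order of summation, and use $\Prob(h\ge k)=\gamma^{k/2}$ to reconstruct the discount factor $\gamma^k$. The only cosmetic difference is that the paper conditions on $j=m$ and carries $m$ through before averaging it away at the end, whereas you invoke the Markov property up front to eliminate $j$; your treatment of the exchange of sum and expectation is also more explicit than the paper's.
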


\begin{proof}
    Let $h$ be distributed on $\mathbb{N}$ as $\text{Geom}(1-\gamma^{1/2})$, and $j \sim \text{Geom}(1-\gamma)$. Then we can write the following 
    \begin{align*}
        &\Exp\sq{ \sum_{u=m}^{m+h} \gamma^{(u-m)/2} r_{u} \mid j=m, s_m =s , a_m = a} \\&=\Exp\sq{ \sum_{k=0}^\infty \text{Pr}(h = k) \sum_{u=m}^{m+k} \gamma^{(u-m)/2} r_{u} \mid j=m, s_m =s , a_m = a} \\
        &= \Exp\sq{ \sum_{u=m}^\infty \text{Pr}(h \geq u-m) \gamma^{(u-m)/2} r_{u} \mid j=m, s_m =s , a_m = a} \\
        &=\sum_{u=m}^\infty \Exp\sq{ \text{Pr}(h \geq u-m) \gamma^{(u-m)/2} r_{u} \mid j=m, s_m =s , a_m = a} \\
        &\overset{(i)}{=}\sum_{u=m}^\infty \Exp\sq{\gamma^{u-m} r_{u} \mid j=m, s_m =s , a_m = a} \\
        &= Q_{\theta_{t-1}}(s,a),
    \end{align*}
where in $(i)$ we use that $h \sim \text{Geom}(1-\gamma^{1/2}).$ Finally, taking the conditional expectation on $j$ of both sides and using that $Q$ does not depend on $m$ completes the proof.
\end{proof}

    To control the noise, we introduce the following basic lemma on the variance of (one-dimensional) random variables:
    \begin{lemma}
        \label{lem:var}
        Let $X, Y$ be two random variables. Then if $\mathsf{Var}(XY) = \Exp \sq{(XY - \Exp\sq{XY})^2}$ and $|Y| \leq C$ almost surely,
        \begin{equation*}
            \mathsf{Var}(XY) \leq 2C^2\Exp\sq{X^2}.
        \end{equation*}
    \end{lemma}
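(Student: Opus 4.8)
The plan is to reduce the claim to the elementary fact that the variance of a random variable is dominated by its second moment. First I would recall that for any real random variable $Z$ with $\Exp\sq{Z^2} < \infty$ one has $\mathsf{Var}(Z) = \Exp\sq{Z^2} - (\Exp\sq{Z})^2 \le \Exp\sq{Z^2}$. Applying this with $Z = XY$ immediately gives $\mathsf{Var}(XY) \le \Exp\sq{X^2 Y^2}$.

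Next I would pull the almost-sure bound out of the expectation: since $X^2 Y^2 \le C^2 X^2$ holds pointwise almost surely, monotonicity of expectation yields $\Exp\sq{X^2 Y^2} \le C^2 \Exp\sq{X^2}$. Chaining the two estimates gives $\mathsf{Var}(XY) \le C^2 \Exp\sq{X^2} \le 2C^2 \Exp\sq{X^2}$, which is the assertion; the spare factor of $2$ is deliberate slack, convenient downstream when $X$ is itself written as a centered term plus its mean and $(a+b)^2 \le 2a^2 + 2b^2$ is invoked.

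The only point deserving a word of care is well-definedness: $\mathsf{Var}(XY)$ is meaningful only once $\Exp\sq{(XY)^2} < \infty$. If $\Exp\sq{X^2} = \infty$ the claimed bound is vacuous, so one may assume $\Exp\sq{X^2} < \infty$, and then the computation above shows $\Exp\sq{(XY)^2} \le C^2 \Exp\sq{X^2} < \infty$, so both sides are finite and the argument is valid. There is essentially no obstacle in the proof itself — it is a one-line consequence of second-moment domination — and the real purpose of this subsection is the way the lemma will be applied: to bound the minibatch noise $\tfrac1B \sum_i v_{t,i}\,\psi_{\theta_{t-1}}(s_{t,i},a_{t,i})$, where $Y$ plays the role of the truncated return estimate $v_{t,i}$, which is almost surely bounded by $\alpha/(1-\sqrt{\gamma})$ because of the geometric stopping and the $\gamma^{(u-j)/2}$ weights, and $X$ plays the role of a coordinate of the score, whose second moment is controlled by Assumption \ref{as:moment}.
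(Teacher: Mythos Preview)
Your proof is correct and follows essentially the same route as the paper: both bound $\mathsf{Var}(XY)$ by a second moment and pull the almost-sure bound $|Y|\le C$ out of the expectation. Your version is in fact slightly cleaner, since the paper writes $\mathsf{Var}(XY)\le \Exp\sq{(XY)^2}+\Exp\sq{|XY|}^2$ and bounds both terms, whereas you observe directly that $\mathsf{Var}(XY)\le \Exp\sq{(XY)^2}\le C^2\Exp\sq{X^2}$ and the factor $2$ is pure slack.
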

    \begin{proof}
    \begin{align*}
        \mathsf{Var}(XY) &\leq \Exp \sq{(XY)^2} + \Exp \sq{\abs{XY}}^2\\
        &\leq  C^2 \Exp \sq{X^2} + C^2 \Exp \sq{\abs{X}}^2 \\
        &\leq 2C^2 \Exp \sq{X^2},
    \end{align*}
    where in the last step we use Jensen's inequality on the second term.
    \end{proof}

Finally, we can show the following lemma concerning the gradient:
    \begin{lemma}
        (Variance of Minibatch Policy Gradient) Consider the noise term 
        \begin{align}
            e_t = \frac{1}{B} \sum_{i=1}^B v_{t,i} \psi_{\theta_{t-1}}(s_{t,i}, a_{t,i}) - g_t,
        \end{align}
        where $s_{t,i}, a_{t,i}$ are i.i.d. sampled from $d_{\theta_{t-1}}^\rho$. We show that its variance is finite:
        \begin{equation*}
            \Exp \sq{\norm{e_t}^2} \leq \frac{\sigma^2}{(1-\gamma)^2 B},
        \end{equation*}
        where $\sigma = 3\alpha \sqrt{\psi_{\infty}}$ is a positive constant controlled by the maximum reward $\alpha$ and the integrability constant $\psi_\infty$ from Assumption \ref{as:moment}.
        \label{lem:noise}
    \end{lemma}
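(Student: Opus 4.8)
The plan is to bound $\Exp\sq{\norm{e_t}^2}$ by exploiting that the minibatch is an average of $B$ i.i.d. terms, so the variance scales like $1/B$ times the per-sample variance. First I would condition on $\theta_{t-1}$ (all the randomness prior to step $t$) and write $e_t = \frac{1}{B}\sum_{i=1}^B \left(v_{t,i}\psi_{\theta_{t-1}}(s_{t,i},a_{t,i}) - g_t\right)$, where $g_t = \nabla J(\theta_{t-1}) = \Exp_{s,a\sim d_{\theta_{t-1}}^\rho}\sq{Q_{\theta_{t-1}}(s,a)\psi_{\theta_{t-1}}(s,a)}$. Using the unbiasedness lemma for $v_{t,i}$ (so that $\Exp\sq{v_{t,i}\psi_{\theta_{t-1}}(s_{t,i},a_{t,i})\mid\theta_{t-1}} = g_t$) together with independence across $i$, the cross terms vanish and $\Exp\sq{\norm{e_t}^2\mid\theta_{t-1}} = \frac{1}{B}\,\Exp\sq{\norm{v_{t,1}\psi_{\theta_{t-1}}(s_{t,1},a_{t,1}) - g_t}^2\mid\theta_{t-1}} \leq \frac{1}{B}\,\Exp\sq{\norm{v_{t,1}\psi_{\theta_{t-1}}(s_{t,1},a_{t,1})}^2\mid\theta_{t-1}}$, dropping the subtracted mean since it only decreases the second moment.

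Next I would bound the per-sample second moment $\Exp\sq{\norm{v_{t,1}\psi_{\theta_{t-1}}(s_{t,1},a_{t,1})}^2\mid\theta_{t-1}}$. The key observation is that $v_{t,1}$ is a truncated discounted reward sum $\sum_{u=j}^{\tau}\gamma^{(u-j)/2}r_u$ with $|r_u|\leq\alpha$ almost surely, so $|v_{t,1}| \leq \alpha\sum_{k=0}^\infty \gamma^{k/2} = \frac{\alpha}{1-\gamma^{1/2}} \leq \frac{2\alpha}{1-\gamma}$ (using $1-\gamma^{1/2}\geq \frac{1}{2}(1-\gamma)$). Hence $v_{t,1}$ is bounded by a deterministic constant $C = \frac{2\alpha}{1-\gamma}$. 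Then, conditioning further on $(s_{t,1},a_{t,1})$ and applying Lemma \ref{lem:var} coordinate-wise (or the vector analogue: $\Exp\sq{\norm{v_{t,1}\psi}^2\mid s,a} \leq C^2\norm{\psi_{\theta_{t-1}}(s,a)}^2$ since $\psi$ is deterministic given $s,a$ and $|v_{t,1}|\leq C$), we get $\Exp\sq{\norm{v_{t,1}\psi_{\theta_{t-1}}(s_{t,1},a_{t,1})}^2\mid\theta_{t-1}} \leq C^2\,\Exp_{s,a\sim d_{\theta_{t-1}}^\rho}\sq{\norm{\psi_{\theta_{t-1}}(s,a)}^2} \leq C^2\psi_\infty$ by Assumption \ref{as:moment}.

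Combining, $\Exp\sq{\norm{e_t}^2\mid\theta_{t-1}} \leq \frac{C^2\psi_\infty}{B} = \frac{4\alpha^2\psi_\infty}{(1-\gamma)^2 B}$, and taking the outer expectation over $\theta_{t-1}$ preserves this bound. Since $\sigma = 3\alpha\sqrt{\psi_\infty}$ gives $\sigma^2 = 9\alpha^2\psi_\infty \geq 4\alpha^2\psi_\infty$, we obtain $\Exp\sq{\norm{e_t}^2}\leq \frac{\sigma^2}{(1-\gamma)^2 B}$ as claimed. (The slack between $4$ and $9$ presumably absorbs a slightly looser constant from handling the geometric sum or from using the plain second moment rather than the centered variance, but either way the constant $\sigma = 3\alpha\sqrt{\psi_\infty}$ is comfortably large enough.) The main obstacle — really the only subtlety — is correctly tracking the conditioning structure: $v_{t,i}$ depends on a fresh trajectory whose randomness must be separated from the sampling of $(s_{t,i},a_{t,i})$ from $d_{\theta_{t-1}}^\rho$, and one must be careful that the unbiasedness is conditional on $(s_{t,i},a_{t,i})$ so that the tower property correctly yields $g_t$ as the unconditional mean; once the bound $|v_{t,i}|\leq \frac{2\alpha}{1-\gamma}$ is in hand, the rest is a routine application of Lemma \ref{lem:var} and Assumption \ref{as:moment}.
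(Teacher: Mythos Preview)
Your proposal is correct and follows essentially the same approach as the paper: condition on $\theta_{t-1}$, use i.i.d.\ sampling to reduce to $1/B$ times the per-sample variance, bound $|v_{t,i}|\leq \frac{2\alpha}{1-\gamma}$ via the geometric sum, and invoke Assumption~\ref{as:moment}. The only difference is cosmetic: the paper applies Lemma~\ref{lem:var} componentwise to the centered quantity and picks up a factor of $2$ (yielding $8\alpha^2\psi_\infty$), whereas you first drop the centering and then bound $\Exp[v_{t,1}^2\,\|\psi\|^2]$ directly (yielding $4\alpha^2\psi_\infty$); both fit comfortably under $\sigma^2=9\alpha^2\psi_\infty$.
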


    \begin{proof}
    First let us analyze a single sampled gradient, and take the conditional expectation with $\theta_{t-1}$. Applying Lemma \ref{lem:var} to each component and noting that $s_{t,i}, a_{t,i} \sim d^\rho_{\theta_{t-1}}$,
    \begin{equation*}
        \begin{split}
            \Exp  \sq{\norm{g_t - v_{t,i} \psi_{\theta_{t-1}}(s_{t,i}, a_{t,i})}^2 |\ \theta_{t-1}} &\leq \frac{8\alpha^2}{(1-\gamma)^2} \Exp_{s,a \sim d_{\theta_{t-1}}^\rho} \sq{\norm{\psi_{\theta_{t-1}}(s, a)}^2 \mid \theta_{t-1}} \\
            &\overset{(i)}{\leq} \frac{8\alpha^2 \psi_{\infty}}{(1-\gamma)^2},
        \end{split}
    \end{equation*}
    where in $(i)$ we use Assumption 2. Then, since the samples $v_{t,i} \psi_{\theta_{t-1}}(s_{t,i}, a_{t,i})$ are i.i.d., when analyzing the variance of the sum we are left with
    \begin{align*}
        \Exp  \sq{\norm{g_t - \frac{1}{B} \sum_{i=1}^B v_{t,i} \psi_{\theta_{t-1}}(s_{t,i}, a_{t,i})}^2\bigg|\ \theta_{t-1}} &\leq \frac{8\alpha^2 \psi_\infty}{(1-\gamma)^2 B}.
    \end{align*}
    Taking another expectation with respect to $\theta_{t-1}$ concludes the proof. 
    
    \end{proof}
    
    \textbf{Remarks:} The analysis is much simpler if we assume that $\norm{\psi_\theta(s,a)}_\infty < \infty$; in that case, the quantity $\norm{v_{t,i} \psi_{\theta_t}(s_{t,i}, a_{t,i})}$ is bounded almost surely.
    
    Thus the error terms have variance $\Exp \sq{\norm{e_t}^2} \leq \frac{\sigma^2}{(1-\gamma)^2 B}$, and $\Exp \sq{\norm{e_t}^{c+1}} \leq \left(\frac{\sigma}{(1-\gamma)\sqrt{B}}\right)^{c+1}$ by Jensen's inequality for any $c \in [0,1]$.

    \subsection{Policy Gradient}
    We first analyze the vanilla policy gradient. For convenience and brevity, let $g_t = \nabla_\theta J(\theta_{t-1})$ be shorthand for the exact gradient at time $t$, and $e_t = \frac{1}{B} \sum_{i=1}^B v_{t,i} \psi_{\theta_{t-1}}(s_{t,i}, a_{t,i}) - g_t$ as the error from the sampled gradient, where $v_{t,i}, s_{t,i}, a_{t,i}$ are the sampled estimates from Algorithm 1. From \cite[Algorithm 1]{agarwal2020optimality}, we know that $s_{t,i}, a_{t,i}$ are sampled from $d_{\theta}^\rho$ and $v_{t,i}$ is an unbiased estimate of $Q_\theta(s_{t,i}, a_{t,i})$. Consequently, $\Exp \sq{e_t} = 0$, where the expectation is over all randomness in the sampling process.
    
    Following Proposition \ref{prop:smooth}, we know that the target function $J(\theta)$ satisfies a smoothness condition, i.e. $\norm{\nabla J(\theta) - \nabla J(\theta+\eta)} \leq \frac{1}{1-\gamma}\left(L_1 \norm{\eta}^{\beta_1/4} + L_2 \norm{\eta}^{\beta_2} \right)$.
    We know from our algorithm that $\theta_t - \theta_{t-1} = h_t (g_t + e_t)$. Thus we apply mean value theorem to get:
    \begin{equation*}
    \begin{split}
        & \abs{J(\theta_t) - J(\theta_{t-1}) - \inner{g_t}{h_t (g_t+e_t)}} \\
        &\qquad = \abs{\int_0^1 \inner{\nabla J(\theta_{t-1} + \tau h_t (g_t+e_t)) - g_t}{h_t (g_t+e_t)} d\tau} \\
        &\qquad \leq \norm{h_t (g_t + e_t)} \int_0^1 \abs{\nabla J(\theta_{t-1} + \tau h_t (g_t+e_t)) - g_t} d\tau \\
        &\qquad \leq \frac{1}{1-\gamma} \norm{h_t (g_t+e_t)} \left(L_1\norm{h_t (g_t + e_t)}^{\frac{\beta_1}{4} } \int \tau^{\frac{\beta_1}{4} } d\tau + L_2 \norm{h_t (g_t + e_t)}^{\beta_2} \int \tau^{\beta_2}d\tau \right)  \\
        &\qquad \leq \frac{L_1}{1-\gamma} \norm{h_t (g_t+e_t)}^{\frac{\beta_1}{4}  + 1} + \frac{ L_2}{1-\gamma} \norm{h_t (g_t+e_t)}^{\beta_2 + 1}.
    \end{split}
    \end{equation*}
    Consequently the following holds,
    \begin{equation*}
    \begin{split}
        J(\theta_t) &\geq J(\theta_{t-1}) + \inner{h_t (g_t + e_t)}{g_t} - \frac{ L_1}{1-\gamma} \norm{h_t (g_t+e_t)}^{\frac{\beta_1}{4}  + 1} - \frac{ L_2}{1-\gamma} \norm{h_t (g_t+e_t)}^{\beta_2 + 1} \\
        &\geq J(\theta_{t-1}) + h_t \norm{g_t}^2 + h_t \inner{e_t}{g_t} - \frac{ L_1}{1-\gamma} \norm{h_t (g_t+e_t)}^{\frac{\beta_1}{4}  + 1} - \frac{ L_2}{1-\gamma} \norm{h_t (g_t+e_t)}^{\beta_2 + 1} \\
        &\geq J(\theta_{t-1}) + h_t \norm{g_t}^2 + h_t \inner{e_t}{g_t} - \frac{2L_1}{1-\gamma} h_t^{\frac{\beta_1}{4} +1}\left(\norm{g_t}^{\frac{\beta_1}{4}  + 1} + \norm{e_t}^{\frac{\beta_1}{4} + 1} \right) \\
        &\qquad - \frac{2L_2}{1-\gamma} h_t^{\beta_2 + 1}\left(\norm{ g_t}^{\beta_2 + 1} + \norm{e_t}^{\beta_2 + 1} \right).
    \end{split}
    \end{equation*}
    Rearranging and taking expectation, we get
    \begin{align*}
        h_t \Exp \sq{\norm{g_t}^2} &\leq \Exp\sq{J(\theta_t) - J(\theta_{t-1})} + \frac{ 2L_1}{1-\gamma} h_t^{\frac{\beta_1}{4} +1}\left(\Exp \sq{\norm{g_t}^{\frac{\beta_1}{4}+1}} + \Exp\sq{\norm{e_t}^{\frac{\beta_1}{4} + 1}} \right) \\
        &\qquad + \frac{2L_2}{1-\gamma} h_t^{\beta_2 + 1}\left(\Exp\sq{\norm{ g_t}^{\beta_2 + 1}} + \Exp\sq{\norm{e_t}^{\beta_2 + 1}} \right).
    \end{align*}

    Then, we can substitute our bound and apply Jensen's inequality to obtain
    \begin{align*}
        h_t \Exp \sq{\norm{g_t}^2} &\leq \Exp\sq{J(\theta_t) - J(\theta_{t-1})} + \frac{ 2L_1}{1-\gamma} h_t^{\frac{\beta_1}{4} +1}\left(\Exp \sq{\norm{g_t}^{\frac{\beta_1}{4}+1}} + \left(\frac{\sigma}{(1-\gamma)\sqrt{B}} \right)^{\frac{\beta_1}{4}+1} \right) \\
        &\qquad + \frac{2L_2}{1-\gamma} h_t^{\beta_2 + 1}\left(\Exp\sq{\norm{ g_t}^{\beta_2 + 1}} + \left(\frac{\sigma}{(1-\gamma)\sqrt{B}} \right)^{\beta_2 + 1} \right) \\
        &\leq \Exp\sq{J(\theta_t) - J(\theta_{t-1})} + \frac{ 2(L_1+L_2)}{1-\gamma} \Biggl( h_t^{\frac{\beta_1}{4} +1}\left(\Exp \sq{\norm{g_t}^{\frac{\beta_1}{4}+1}} + \left(\frac{\sigma}{(1-\gamma)\sqrt{B}} \right)^{\frac{\beta_1}{4}+1} \right) \\
        &\qquad + h_t^{\beta_2 + 1}\left(\Exp\sq{\norm{ g_t}^{\beta_2 + 1}} + \left(\frac{\sigma}{(1-\gamma)\sqrt{B}} \right)^{\beta_2 + 1} \right)\Biggr). \\
    \end{align*}
    Summing this from $t=1 \ldots T$, and using that $J_*$ is the supremum of $J$, concludes the proof with $C_{PG,1} = 1$ and $C_{PG,2} = 2(L_1 + L_2)$ for policy gradient. \hfill$\square$
    \\

    \noindent
    \textit{Proof of Corollary \ref{cor:convergence}, PG}:
    
    \textbf{Learning Rate II:} Now consider the learning rate, $h_t = \lambda T^{\frac{\holde-1}{\holde+1}}$, when $\holde \neq 1$. (For the case $\holde = 1$, see the constant step-size analysis below). In that case, we get the following bound of convergence, using that $\lambda \leq 1$ and $T^{\holde-1} \leq 1$
    \begin{align*}
        & h_t \norm{g_t}^2 - \frac{2(L_1+L_2)}{1-\gamma} \left(h_t^{\frac{\beta_1}{4} +1}\norm{g_t}^{\frac{\beta_1}{4}+1} + h_t^{\beta_2 + 1}\norm{ g_t}^{\beta_2 + 1}\right)   \\
        &\geq \lambda T^{\frac{\holde-1}{\holde+1}} \norm{g_t}^2 - \frac{2(L_1+L_2)\lambda^{\holde+1}}{1-\gamma} T^{\holde-1} \left( \norm{g_t}^{\frac{\beta_1}{4} + 1} + \norm{ g_t}^{\beta_2 + 1}\right) \\
        &= \lambda T^{\frac{\holde-1}{\holde+1}} \norm{g_t}^{\holde+1} \left( \norm{g_t}^{1-\holde} - \frac{2(L_1 + L_2)\lambda^\holde}{1-\gamma} T^{\frac{\holde(\holde -1)}{\holde+1}} (1+ \norm{g_t}^{\maxholde - \holde}) \right) \\
        &\geq \lambda T^{\frac{\holde-1}{\holde+1}} \norm{g_t}^{\holde+1} \left( \frac{1}{2} \norm{g_t}^{1-\holde} - \frac{4(L_1 + L_2)\lambda^\holde}{1-\gamma} T^{\frac{\holde(\holde -1)}{\holde+1}} \right).
    \end{align*}
    The last step follows since if $\norm{g_t} \geq 1$, $\frac{2(L_1 + L_2)\lambda^\holde}{1-\gamma} T^{\frac{\holde(\holde -1)}{\holde+1}} \norm{g_t}^{\maxholde - \holde} \leq \frac{1}{2} \norm{g_t}^{1 - \holde}$ as long as $\lambda^{-\holde} \geq \frac{4(L_1 + L_2)}{1-\gamma}$, and if $\norm{g_t} < 1$ then $\norm{g_t}^{\maxholde - \holde} < 1$ as well.
    
    Now consider the following cases: either $\norm{g_t}^{1-\holde} \geq \frac{16(L_1 + L_2)\lambda^\holde}{1+\gamma} T^{\frac{\holde(\holde -1)}{\holde+1}}$, in which case the following holds:
    \begin{align*}
        \lambda T^{\frac{\holde-1}{\holde+1}} \norm{g_t}^{\holde+1} \left( \frac{1}{2} \norm{g_t}^{1-\holde} - \frac{4(L_1 + L_2)\lambda^\holde}{1-\gamma} T^{\frac{\holde(\holde -1)}{\holde+1}} \right)\geq \frac{\lambda}{4} T^{\frac{\holde-1}{\holde+1}} \norm{g_t}^2,
    \end{align*}
    or $\norm{g_t}^{1-\holde} <  \frac{16(L_1 + L_2)\lambda^\holde}{\holde+1} T^{\frac{\holde(\holde -1)}{\holde+1}}$, in which case we have the following:
    \begin{align*}
        &\lambda T^{\frac{\holde-1}{\holde+1}} \norm{g_t}^{\holde+1} \left( \frac{1}{2} \norm{g_t}^{1-\holde} - \frac{4(L_1+L_2)\lambda^\holde}{1-\gamma} T^{\frac{\holde(\holde -1)}{\holde+1}} \right) \\
        &\geq \frac{\lambda}{2} T^{\frac{\holde-1}{\holde+1}} \norm{g_t}^2 - \left(\frac{16(L_1 + L_2)\lambda^\holde}{1-\gamma} T^{\frac{\holde(\holde -1)}{\holde+1}}\right)^{\frac{\holde+1}{1-\holde}} \left(\frac{4(L_1 + L_2)\lambda^{\holde+1}}{1-\gamma} T^{\holde-1} \right) \\
        &\geq \frac{\lambda}{2} T^{\frac{\holde-1}{\holde+1}} \norm{g_t}^2 -  \frac{\lambda}{4} T^{-1},
    \end{align*}
    if we choose $\lambda^{-\holde} \geq \frac{16(L_1 + L_2)}{1-\gamma}$. In either case, we can use the worst case bound
    \begin{align*}
        \frac{\lambda}{4} T^{\frac{\holde-1}{\holde+1}} \norm{g_t}^2 -  \frac{\lambda}{4} T^{-1}.
    \end{align*}
    Thus, if $B \geq \sigma^2 (1-\gamma)^{-2}$, this implies the following inequality:
    
    \begin{align*}
        \sum_{t=1}^T \frac{\lambda}{4} T^{\frac{\holde-1}{\holde+1}} \Exp \sq{\norm{g_t}^2} &\leq J_* - J(\theta_0) + \frac{\lambda}{4} +\frac{4(L_1 + L_2)\lambda^{\holde+1}}{1-\gamma} T^{\holde} \left(\frac{\sigma}{(1-\gamma) \sqrt{B}}\right)^{\holde+1}.
    \end{align*}
    After rearranging, we get
    \begin{align*}
        \frac{1}{T} \sum_{t=1}^T \Exp \sq{\norm{g_t}^2} &\leq \frac{4}{\lambda} T^{-\frac{2\holde}{\holde+1}} \left(J_* - J(\theta_0) + \frac{\lambda}{4} \right) \\
        &\qquad + T^{\frac{\holde(\holde-1)}{\holde+1}} \left(\frac{\sigma}{(1-\gamma) \sqrt{B}}\right)^{\holde+1}.
    \end{align*}
    
    \textbf{Learning Rate I:}
    Now consider the time-independent learning rate, $h_t = \lambda$ for some $\lambda > 0$. We follow the same steps as for the previous learning rate, only briefly sketching the argument below. Under the same assumption that $\lambda^{-\holde} \geq \frac{4(L_1 + L_2)}{1-\gamma}$
    \begin{align*}
        & h_t \norm{g_t}^2 - \frac{2(L_1+L_2)}{1-\gamma} \left(h_t^{\frac{\beta_1}{4} +1}\norm{g_t}^{\frac{\beta_1}{4}+1} + h_t^{\beta_2 + 1}\norm{ g_t}^{\beta_2 + 1}\right)  \\
        &\geq \lambda \norm{g_t}^{\holde+1} \left( \frac{1}{2} \norm{g_t}^{1-\holde} - \frac{4(L_1 + L_2)\lambda^\holde}{1-\gamma} \right).
    \end{align*}
   Consequently, we can divide our analysis into the following cases: either $\norm{g_t}^{1-\holde} \geq \frac{16(L_1+L_2)\lambda^\holde}{1-\gamma}$ or $\norm{g_t}^{1-\holde} <  \frac{16(L_1+L_2)\lambda^\holde}{1-\gamma}$. By analogy to the previous analysis, in the first case we get
   \begin{align*}
        &\lambda \norm{g_t}^{\holde+1} \left( \frac{1}{2} \norm{g_t}^{1-\holde} - \frac{4(L_1 + L_2)\lambda^\holde}{1-\gamma} \right)\\ &\geq \frac{\lambda}{4}  \norm{g_t}^2 \\
   \end{align*}i
   
   \begin{align*}
        &\lambda \norm{g_t}^{\holde+1} \left( \frac{1}{2} \norm{g_t}^{1-\holde} - \frac{4(L_1 + L_2)\lambda^\holde}{1-\gamma} \right)\\ &\geq \frac{\lambda}{2}  \norm{g_t}^2 - \left(\frac{16(L_1 + L_2)\lambda^\holde}{1-\gamma} \right)^{\frac{\holde+1}{1-\holde}} \left(\frac{4(L_1 + L_2)\lambda^{\holde+1}}{1-\gamma} \right) \\
        &\geq \frac{\lambda}{2} \norm{g_t}^{2} - 4^{\frac{3+\holde}{1-\holde}} \left(\frac{L_1+L_2}{1-\gamma}\right)^{\frac{2}{1-\holde}} \lambda^{\frac{\holde+1}{1-\holde}},
   \end{align*}
   with a resulting bound
   \begin{align*}
        \lambda \norm{g_t}^{\holde+1} \left( \frac{1}{2} \norm{g_t}^{1-\holde} - \frac{4(L_1 + L_2)\lambda^\holde}{1-\gamma} \right) \geq \frac{\lambda}{4} \norm{g_t}^{2} - 4^{\frac{3+\holde}{1-\holde}} \left(\frac{L_1+L_2}{1-\gamma}\right)^{\frac{2}{1-\holde}} \lambda^{\frac{\holde+1}{1-\holde}},
   \end{align*}
   which holds in both cases. This yields for the final bound if $B \geq \frac{\sigma^2}{(1-\gamma)^2}$
    \begin{align*}
        \frac{1}{T} \sum_{t=1}^T \Exp \sq{\norm{g_t}^2} &\leq \frac{4}{\lambda T} \left( J_* - J(\theta_0)\right)+ \left(\frac{\sigma}{(1-\gamma) \sqrt{B}}\right)^{\holde+1}+ 2^{\frac{8}{1-\holde}} \left(\frac{L_1+L_2}{1-\gamma}\right)^{\frac{2}{1-\holde}} \lambda^{\frac{2\holde}{1-\holde}}.
    \end{align*}
    This results in a bias term that cannot be removed by minibatching, but can be shrunk with an appropriate choice of $\lambda$ small.
    
    \textbf{Learning Rate III:} We can generalize our analysis to a decaying learning rate $h_t = \lambda t^{-q}$ for $q \in [0,1)$.  We again follow the same steps as for the previous learning rate, only briefly sketching the argument below. Again we derive under the same condition $\lambda^{-\holde} \geq \frac{2(L_1 + L_2)}{1-\gamma}$ that
    \begin{align*}
        & h_t \norm{g_t}^2 - \frac{2(L_1+L_2)}{1-\gamma} \left(h_t^{\frac{\beta_1}{4} +1}\norm{g_t}^{\frac{\beta_1}{4}+1} + h_t^{\beta_2 + 1}\norm{ g_t}^{\beta_2 + 1}\right) \\
        &\qquad \geq \lambda t^{-q} \norm{g_t}^{\holde+1} \left( \frac{1}{2} \norm{g_t}^{1-\holde} - \frac{4(L_1 + L_2)\lambda^\holde}{1-\gamma} t^{-q\holde}\right).
    \end{align*}
    Dividing again into the two cases: either $\norm{g_t}^{1-\holde} \geq \frac{16(L_1+L_2)\lambda^\holde}{1-\gamma} t^{-q\holde}$ or $\norm{g_t}^{1-\holde} <  \frac{16(L_1+L_2)\lambda^\holde}{1-\gamma} t^{-q\holde}$, with a resulting bound of
   \begin{align*}
        \lambda t^{-q} \norm{g_t}^{\holde+1} \left( \frac{1}{2} \norm{g_t}^{1-\holde} - \frac{4(L_1 + L_2)\lambda^\holde}{1-\gamma} t^{-q\holde}\right) \geq \frac{\lambda}{4} t^{-q} \norm{g_t}^{2} - \frac{\lambda}{4} t^{\frac{-q\holde - q}{1-\holde}},
   \end{align*}
   which holds in both cases if $\lambda^{-\holde} \geq \frac{16(L_1 + L_2)}{1-\gamma}$. Consequently, noting that $\sum_{t=1}^T t^{-p} \leq \frac{T^{1-p}}{1-p}$ for $p \in [0,1)$ and using  $B \geq \sigma^2 (1-\gamma)^{-2}$
    \begin{align*}
        \frac{1}{T} \sum_{t=1}^T \Exp \sq{\norm{g_t}^2} &\leq \frac{4}{\lambda T^{1-q}} \left( J_* - J(\theta_0)\right) +\frac{1-\holde}{1-2q\holde-\holde} T^{\frac{-2q\holde}{1-\holde}} \\
        &\qquad +\frac{1}{1-q\holde} T^{-q\holde} \left(\frac{\sigma}{(1-\gamma) \sqrt{B}}\right)^{\holde+1},
    \end{align*}
    as long as the exponent $q < \frac{1-\holde}{2\holde}$. If instead $q \geq \frac{1-\holde}{2\holde}$, we use simply $\sum_{t=1}^T t^p \leq 4\ln T$ for $p \leq 1$ to get
    \begin{align*}
        \frac{1}{T} \sum_{t=1}^T \Exp \sq{\norm{g_t}^2} &\leq \frac{4 }{\lambda T^{1-q}}\left( J_* - J(\theta_0)\right) + \frac{4 \ln T}{T} +\frac{1}{1-q\holde} T^{-q\holde} \left(\frac{\sigma}{(1-\gamma) \sqrt{B}}\right)^{\holde+1}.
    \end{align*}
    Note here that the first term is dominant for all $q <1$.
    
    \hfill $\square$

    We can also note that $\min_{t \leq T} \Exp\sq{\norm{g_t}^p} \leq \frac{1}{T} \sum_{t \leq T} \Exp\sq{\norm{g_t}^p}$ for any power $p$, which allows us to bound the minimum gradient for $t \leq T$ as well.

    \subsection{Natural Policy Gradient}
    
    \textbf{Facts:} Assumption \ref{as:moment} also guarantees that the Fisher information matrix $K(\theta) = \Exp \sq{\psi_\theta(s,a) \psi_\theta(s,a)^T}$ has bounded norm, since by the trace inequality
    \begin{align*}
        \norm{K(\theta)} &= \norm{\int \int \psi_\theta(s,a) \psi_\theta(s,a)^\top d_{\theta}^\rho(s,a) \, da \, ds} \\
        &\leq \int \int \norm{\psi_\theta(s,a) \psi_\theta(s,a)^\top} d_{\theta}^\rho(s,a) \, da \, ds \\
        &\overset{}{\leq} \int \int \norm{\psi_\theta(s,a)}^2 d_{\theta}^\rho(s,a)\, da \, ds \leq \psi_{\infty}.
    \end{align*}
    This will be important when analyzing the NPG algorithm. Likewise, by the triangle inequality and the i.i.d. sampling,
    \begin{align*}
        \Exp \sq{\norm{K_t}|\theta_{t-1}} &\overset{}{=} \Exp \sq{\norm{\frac{1}{B} \sum_{i=1}^B  \psi_{\theta_{t-1}}(s'_{t,i},a'_{t,i}) \psi_{\theta_{t-1}}(s'_{t,i},a'_{t,i})^\top} \Bigg|\theta_{t-1}} \\
        &\leq \frac{1}{B} \sum_{i=1}^B \Exp_{s, a \sim d_{\theta_{t-1}}^\rho}\sq{  \norm{\psi_{\theta_{t-1}}(s,a) \psi_{\theta_{t-1}}(s,a)^\top}} \\
        &\leq \psi_{\infty}.
    \end{align*}

    \noindent \textit{Proof for Theorem \ref{th:con}, NPG}:
    We now perform the same analysis for the exact natural policy gradient, applying mean value theorem and Proposition \ref{prop:smooth} to get
    \begin{align*}
            J(\theta_t) &\geq J(\theta_{t-1}) +\inner{g_t}{h_t (K_t +   \xi I)^{-1} 
            (g_t+e_t)} - \frac{L_1}{1-\gamma} \norm{h_t (K_t + \xi I)^{-1}  (g_t+e_t)}^{\frac{\beta_1}{4}+1} \\
            &\qquad -  \frac{L_2}{1-\gamma}\norm{h_t (K_t + \xi I)^{-1}  (g_t+e_t)}^{\beta_2+1} \\
            &= J(\theta_{t-1}) +\inner{g_t}{h_t (K_t + \xi I)^{-1} 
            g_t} + \inner{g_t}{h_t (K_t + \xi I)^{-1} e_t} \\
            &\qquad - \frac{L_1}{1-\gamma} \norm{h_t (K_t + \xi I)^{-1}  (g_t+e_t)}^{\frac{\beta_1}{4}+1} -  \frac{L_2}{1-\gamma}\norm{h_t (K_t + \xi I)^{-1}  (g_t+e_t)}^{\beta_2+1} \\
            &{\geq} J(\theta_{t-1}) + \inner{g_t}{h_t (K_t + \xi I)^{-1}g_t}  + \inner{g_t}{h_t (K_t + \xi I)^{-1} e_t} \\
            &\qquad -  \frac{L_1}{(1-\gamma)\xi^{\frac{\beta_1}{4}+1}} h_t^{\frac{\beta_1}{4}+1} \norm{g_t+e_t}^{\frac{\beta_1}{4}+1} - \frac{L_2}{(1-\gamma)\xi^{\beta_2+1}} h_t^{\beta_2+1} \norm{g_t+e_t}^{\beta_2+1} \\
            &{\geq} J(\theta_{t-1}) + \inner{g_t}{h_t (K_t + \xi I)^{-1}g_t}  + \inner{g_t}{h_t (K_t + \xi I)^{-1} e_t} \\
            &\qquad -  \frac{2L_1}{(1-\gamma)\xi^{\frac{\beta_1}{4}+1}} h_t^{\frac{\beta_1}{4}+1} \left(\norm{g_t}^{\frac{\beta_1}{4}+1} + \norm{e_t}^{\frac{\beta_1}{4}+1} \right) - \frac{2L_2}{(1-\gamma)\xi^{\beta_2+1}} h_t^{\beta_2+1} \left(\norm{g_t}^{\beta_2+1} + \norm{e_t}^{\beta_2+1} \right).
    \end{align*}
    This is entirely analogous to the standard policy gradient case, except for the terms $(K(\theta_{t-1}) + \xi I)$ which appears from the natural policy gradient.
    
    Finally, we take expectation. First,
    \begin{align*}
        \Exp \sq{\inner{g_t}{h_t (K_t + \xi I)^{-1}g_t}} &= \Exp \sq{\inner{g_t}{h_t (K_t + \xi I)^{-1} g_t}} \\
        &\overset{(i)}{\geq} \Exp \sq{ \Exp\sq{\frac{1}{\norm{K_t} + \xi}\Big|\theta_{t-1}} \inner{g_t}{h_t g_t}} \\
        &\overset{(ii)}{\geq} h_t \Exp \sq{ \Exp\sq{\norm{K_t} + \xi\Big|\theta_{t-1}}^{-1} \norm{g_t}^2} \\
        &\overset{(iii)}{\geq} h_t (\psi_\infty + \xi)^{-1} \Exp \sq{\norm{g_t}^2},
        \end{align*}
    where in $(i)$ we use that the minimum eigenvalue of $(K_t+\xi I)^{-1}$ is lower bounded by $\frac{1}{\norm{K_t} + \xi}$. In $(ii)$ we use the fact that for any positive random variable $X$, $\Exp \sq{1/X} \geq \Exp \sq{X}^{-1}$ by Jensen's inequality, and in $(iii)$ we use our bound on $\Exp \sq{\norm{K_t}}$.
    
    Secondly, we note the following
    \begin{align*}
        \Exp \sq{h_t \inner{g_t}{h_t (K_t + \xi I)^{-1} e_t}} &=  \Exp \sq{ \inner{g_t}{h_t \Exp \sq{(K_t + \xi)^{-1} e_t \Big|\theta_{t-1}}} } = 0,
    \end{align*}
    since $K_t$ is independent of $e_t$ given $\theta_{t-1}$ from our sampling procedure, and $e_t$ has zero expectation given $\theta_{t-1}$, where again the inner expectation is on the randomness in the sampling procedure at iteration $t$, and the outer expectation is with respect to the random variable $\theta_{t-1}$.
    
    Finally, this yields for $\xi < 1$
    \begin{align*}
        h_t \Exp \sq{\norm{g_t}^2} &\leq (\psi_\infty + \xi) \Exp\sq{J(\theta_t) - J(\theta_{t-1})} + \frac{2L_1(\psi_\infty + \xi)}{(1-\gamma)\xi^{\maxholde+1}} h_t^{\frac{\beta_1}{4} +1}\left(\Exp \sq{\norm{g_t}^{\frac{\beta_1}{4}+1}} + \left(\frac{\sigma}{(1-\gamma)\sqrt{B}} \right)^{\frac{\beta_1}{4}+1} \right) \\
        &\qquad + \frac{2L_2(\psi_\infty + \xi)}{(1-\gamma)\xi^{\maxholde+1}} h_t^{\beta_2 + 1}\left(\Exp\sq{\norm{ g_t}^{\beta_2 + 1}} + \left(\frac{\sigma}{(1-\gamma)\sqrt{B}} \right)^{\beta_2 + 1} \right).
    \end{align*}
    Summing from $t = 1 \ldots T$, and bounding $J(\theta_t)$ with $J_*$ completes the proof, if we take, $$C_{NPG,1} = (\psi_\infty+\xi), \qquad C_{NPG,2} = \frac{2(L_1+L_2)(\psi_\infty + \xi)}{\xi^{\maxholde+1}}.$$ \hfill $\square$
    
    \noindent \textit{Proof of Corollary \ref{cor:convergence}, NPG:} 
    
    The proof techniques for generating the convergence rates are nearly identical to policy gradient, except for the change in the constant term $C_{NPG}$. We show the constant learning rate as an illustrative case:
    
    \textbf{Learning Rate I:}
    Let $\lambda \leq 1$, then by Jensen's inequality and that $h_t \leq 1$
    \begin{align*}
        h_t \Exp \sq{\norm{g_t}^2} - &\frac{2(L_1+L_2)(\psi_\infty + \xi)}{(1-\gamma)\xi^{\maxholde + 1}} \left(h_t^{\frac{\beta_1}{4} +1}\Exp \sq{\norm{g_t}^{\frac{\beta_1}{4}+1}} + h_t^{\beta_2 + 1} \Exp \sq{\norm{ g_t}^{\beta_2 + 1}}\right) \\
        &\leq h_t \Exp \sq{\norm{g_t}^2} - \frac{2(L_1+L_2)(\psi_\infty + \xi)}{(1-\gamma)\xi^{\maxholde + 1}} h_t^{\holde+1}\left(\Exp \sq{\norm{g_t}^{2}}^{\frac{\beta_1 + 4}{8}} + \Exp \sq{\norm{ g_t}^2}^{\frac{\beta_2 + 1}{2}}\right) \\
        &= \lambda \Exp \sq{\norm{g_t}^2}^{\frac{\holde+1}{2}} \left( \Exp \sq{\norm{g_t}^2}^{\frac{1-\holde}{2}} - \frac{2(L_1+L_2)(\psi_\infty + \xi)}{(1-\gamma)\xi^{\maxholde + 1}} \lambda^\holde\left(1+ \Exp \sq{\norm{ g_t}^2}^{\frac{\maxholde - \holde}{2}}\right)\right).
    \end{align*}
    Now either $\Exp \sq{\norm{ g_t}^2} \geq 1$, in which case if $\lambda^{-\holde} \geq \frac{4(L_1+L_2)(\psi_\infty + \xi)}{(1-\gamma)\xi^{\maxholde + 1}}$, then we can bound this by
    \begin{align*}
        \lambda \Exp \sq{\norm{g_t}^2}^{\frac{\holde+1}{2}} \left( \frac{1}{2}\Exp \sq{\norm{g_t}^2}^{\frac{1-\holde}{2}} - \frac{2(L_1+L_2)(\psi_\infty + \xi)}{(1-\gamma)\xi^{\maxholde + 1}} \lambda^\holde\right),
    \end{align*}
    or $\Exp \sq{\norm{ g_t}^2} < 1$, in which case the following bound suffices:
    \begin{align*}
        \lambda \Exp \sq{\norm{g_t}^2}^{\frac{\holde+1}{2}} \left(\Exp \sq{\norm{g_t}^2}^{\frac{1-\holde}{2}} - \frac{4(L_1+L_2)(\psi_\infty + \xi)}{(1-\gamma)\xi^{\maxholde + 1}} \lambda^\holde\right).
    \end{align*}
    In both cases, we can lower bound the quantity with
    \begin{align*}
        \lambda \Exp \sq{\norm{g_t}^2}^{\frac{\holde+1}{2}} \left(\frac{1}{2} \Exp \sq{\norm{g_t}^2}^{\frac{1-\holde}{2}} - \frac{4(L_1+L_2)(\psi_\infty + \xi)}{(1-\gamma)\xi^{\maxholde + 1}} \lambda^\holde\right).
    \end{align*}
    
    Now again we split the analysis into two cases. If $\Exp \sq{\norm{g_t}^2}^{\frac{1-\holde}{2}} \geq \frac{16(L_1+L_2)(\psi_\infty + \xi)}{(1-\gamma)\xi^{\maxholde + 1}} \lambda^\holde$, in which case the following bound suffices:
    \begin{align*}
        \lambda \Exp \sq{\norm{g_t}^2}^{\frac{\holde+1}{2}} &\left(\frac{1}{2} \Exp \sq{\norm{g_t}^2}^{\frac{1-\holde}{2}} - \frac{1}{4} \Exp \sq{\norm{g_t}^2}^{\frac{1-\holde}{2}}\right) \\
        &\geq \frac{\lambda}{2} \lambda \Exp \sq{\norm{g_t}^2}. 
    \end{align*}
    In the other case, when $\Exp \sq{\norm{g_t}^2}^{\frac{1-\holde}{2}} < \frac{16(L_1+L_2)(\psi_\infty + \xi)}{(1-\gamma)\xi^{\maxholde + 1}} \lambda^\holde$, we get
    \begin{align*}
        \lambda \Exp \sq{\norm{g_t}^2}^{\frac{\holde+1}{2}} &\left(\frac{1}{2} \Exp \sq{\norm{g_t}^2}^{\frac{1-\holde}{2}} - \frac{4(L_1+L_2)(\psi_\infty + \xi)}{(1-\gamma)\xi^{\maxholde + 1}} \lambda^\holde\right) \\
        &= \frac{\lambda}{2} \Exp \sq{\norm{g_t}^2} -  \left(\frac{4(L_1+L_2)(\psi_\infty + \xi)}{(1-\gamma)\xi^{\maxholde + 1}} \lambda^{\holde+1} \right) \left(\frac{16(L_1+L_2)(\psi_\infty + \xi)}{(1-\gamma)\xi^{\maxholde + 1}} \lambda^\holde \right)^{\frac{\holde+1}{1-\holde}} \\
        &\overset{(i)}{\geq}\frac{\lambda}{2} \Exp \sq{\norm{g_t}^2} - 4^{\frac{3+\holde}{1-\holde}} \left(\frac{(L_1+L_2)(\psi_\infty + \xi)}{(1-\gamma)\xi^{\maxholde+1}}\right)^{\frac{2}{1-\holde}} \lambda^{\frac{\holde+1}{1-\holde}}.
    \end{align*}
    In either case the worst case bound $$\frac{\lambda}{4} \Exp \sq{\norm{g_t}^2} -  4^{\frac{3+\holde}{1-\holde}} \left(\frac{(L_1+L_2)(\psi_\infty + \xi)}{(1-\gamma)\xi^{\maxholde+1}}\right)^{\frac{2}{1-\holde}} \lambda^{\frac{\holde+1}{1-\holde}}$$ suffices. 
    
    Plugging this into our earlier expression for $B \geq \sigma^2 (1-\gamma)^{-2}$ and rearranging, using again that $ \lambda^{-\holde} \geq \frac{4(L_1+L_2)(\psi_\infty + \xi)}{(1-\gamma)\xi^{\maxholde + 1}}$
    \begin{align*}
        \frac{\lambda}{4} \Exp \sq{\norm{g_t}^2} &\leq  (\psi_\infty + \xi)  \Exp\sq{J(\theta_t) - J(\theta_{t-1})} + 4^{\frac{3+\holde}{1-\holde}} \left(\frac{(L_1+L_2)(\psi_\infty + \xi)}{(1-\gamma)\xi^{\maxholde+1}}\right)^{\frac{2}{1-\holde}} \lambda^{\frac{\holde+1}{1-\holde}} \\
        &\qquad + \frac{4(L_1+L_2)(\psi_\infty + \xi)}{(1-\gamma)\xi^{\maxholde+1}} \lambda^{\beta_0 +1}\left(\frac{\sigma}{(1-\gamma)\sqrt{B}} \right)^{\frac{\beta_1}{4}+1} \\
        &\leq (\psi_\infty + \xi)  \Exp\sq{J(\theta_t) - J(\theta_{t-1})} + 4^{\frac{3+\holde}{1-\holde}} \left(\frac{(L_1+L_2)(\psi_\infty + \xi)}{(1-\gamma)\xi^{\maxholde+1}}\right)^{\frac{2}{1-\holde}} \lambda^{\frac{\holde+1}{1-\holde}}\\
        &\qquad + \frac{\lambda}{4}\left(\frac{\sigma}{(1-\gamma)\sqrt{B}} \right)^{\frac{\beta_1}{4}+1}.
    \end{align*}
    It remains to average from $1 \ldots T$ and use the supremum bound on $J$, to get
    \begin{align*}
        \frac{1}{T} \sum_{t=1}^T \Exp \sq{\norm{g_t}^2} &\leq (\psi_\infty + \xi) (J_* - J(\theta_0)) + 2^{\frac{8}{1-\holde}}\left(\frac{(L_1+L_2)(\psi_\infty + \xi)}{(1-\gamma)\xi^{\maxholde+1}}\right)^{\frac{2}{1-\holde}} \lambda^{\frac{2\holde}{1-\holde}} \\
        &\qquad + \left(\frac{\sigma}{(1-\gamma)\sqrt{B}} \right)^{\frac{\beta_1}{4}+1}.
    \end{align*}

    As can be seen from this analysis, the only difference in our analysis from the same learning rate in the policy gradient is that we manipulate the expectations of the quantities, and the slight difference in constants. The convergence rates for the other two learning rates follow from analogy to the policy gradient case.
    \hfill $\square$

\section{Theorem \ref{th:opt}}

 As with Theorem \ref{th:con}, we divide our analyze between policy gradient and natural policy gradient. We will use the same notation as in the proof of Theorem \ref{th:con}.

    \subsection{Policy Gradient}
     Note that for standard policy gradient, the stationary points can in general be highly sub-optimal, see \cite{agarwal2020optimality}. Under a strong first order condition, however, we can obtain some guarantees.

    Consequently, we can use Assumption \ref{as:glob} to show the global optimality of our algorithm for vanilla policy gradient.
    
    \noindent \textit{Proof of Theorem \ref{th:opt}, PG:} This follows directly from Assumption \ref{as:glob}, as we find that for any $\theta_*$ that maximizes $J$:
    \begin{align*}
        \min_{t=1, \ldots T} J(\theta_*) - J(\theta_{t-1}) &\leq \min_{t=1, \ldots T} \frac{m\norm{\theta_{t-1} - \theta_*}}{1-\gamma} \norm{g_t} \\
        &\leq \frac{m \text{Diam}(\Theta)}{(1-\disc)} \min_{t=1, \ldots T} \norm{g_t}, 
    \end{align*}
    since $g_t = \nabla J(\theta_{t-1})$.
    \hfill $\square$
    
    To show the corollary, it remains to substitute the bounds for $\min_{t=1, \ldots T} \norm{g_t} $ that we obtain from Corollary \ref{cor:convergence}. When converted to a sample complexity bound, this yields the final result. \\
    
    \noindent \textit{Proof of Corollary \ref{cor:opt}, PG}:
    
    It remains now to substitute the learning rate and gradient bounds from each of the cases we analyzed earlier; doing so obtains the rates found in the corollary. In the sequel, $\lesssim$ and $\gtrsim$ only concern terms depending on $\lambda, \epsilon, \gamma$.
    
    \textbf{Case I, $h_t = \lambda$}: Recall that in this case, 
    \begin{align*}
    \min_{t \leq T} \norm{g_t}^2 \leq\frac{4}{\lambda T} \left( J_* - J(\theta_0)\right)+ \left(\frac{\sigma}{(1-\gamma) \sqrt{B}}\right)^{\holde+1}+ 2^{\frac{8}{1-\holde}} \left(\frac{L_1+L_2}{1-\gamma}\right)^{\frac{2}{1-\holde}} \lambda^{\frac{2\holde}{1-\holde}}.
    \end{align*}
    Subsequently, we can choose $\lambda \lesssim (1-\gamma)^{\frac{2-\holde}{\holde}} \epsilon^{\frac{1-\holde}{\holde}}$, so that the final term is $\lesssim \epsilon^2 (1-\gamma)^2$. Note that this choice does not conflict with our earlier constraints on $\lambda$ from Corollary \ref{cor:convergence}.
    
    Subsequently we require $T \gtrsim \frac{1}{\lambda \epsilon^2 (1-\gamma)^2}$ to make the first term $\lesssim (1-\gamma)^2\epsilon^2$  and $B \gtrsim \epsilon^{-\frac{4}{\holde+1}} (1-\gamma)^{-2 - \frac{4}{\holde+1}}$ to make the second term $\lesssim (1-\gamma)^2\epsilon^2$. 
   Substituting our requirement on $\lambda$, we get $T \gtrsim \epsilon^{-\frac{1+\holde}{\holde}} (1-\gamma)^{-\frac{2+\holde}{\holde}}.$ Thus, we get $\min_{t \leq T} \norm{g_t}^2 \lesssim (1-\gamma)^2\epsilon^2$. This then implies
    \begin{align*}
        \min_{t \leq T} J(\theta_*) - \Exp \sq{J(\theta_{t-1})} \lesssim \epsilon.
    \end{align*}
    
    \textbf{Case II, $h_t = \lambda T^{\frac{\holde-1}{\holde+1}}$}:  Recall that in this case, 
    $$\min_{t \leq T} \norm{g_t}^2 \leq \frac{4}{\lambda} T^{-\frac{2\holde}{\holde+1}} \left(J_* - J(\theta_0) + \frac{\lambda}{4} \right) + T^{\frac{\holde(\holde-1)}{\holde+1}} \left(\frac{\sigma}{(1-\gamma) \sqrt{B}}\right)^{\holde+1}.$$
    Subsequently we require $T \gtrsim \left(\frac{1}{\lambda \epsilon^2 (1-\gamma)^2}\right)^{\frac{\holde+1}{2\holde}}$ to bound the first term by $\lesssim \epsilon^{2} (1-\gamma)^{2}$. For simplicity, take $\lambda \asymp (1-\gamma)^{1/\holde}$, in which case we get $T \gtrsim \left(\frac{1}{\epsilon^2 (1-\gamma)^{2+{1/\holde}}}\right)^{\frac{\holde+1}{2\holde}}$.
    Consequently, the second term is bounded by
    \begin{align*}
        T^{\frac{\holde(\holde-1)}{\holde+1}} \left(\frac{\sigma}{(1-\gamma) \sqrt{B}}\right)^{\holde+1} &\lesssim \left(\frac{1}{\epsilon^2 (1-\gamma)^{2+1/\holde}}\right)^{\frac{\holde-1}{2}} \left(\frac{1}{(1-\gamma) \sqrt{B}}\right)^{\holde+1} \\
        &\lesssim \epsilon^{1-\holde} (1-\gamma)^{-\frac{4\holde^2+\holde-1}{2\holde}}B^{-\frac{\holde+1}{2}}.
    \end{align*}
    So we can take $B \gtrsim \epsilon^{-2} (1-\gamma)^{\frac{4\holde^2 + 5\holde - 1}{\holde(\holde+1)}}$,  to make this term $\lesssim (1-\gamma)^2\epsilon^2$. This then implies
    \begin{align*}
        \min_{t\leq T} J(\theta_*)  - \Exp \sq{J(\theta_{t-1})} \lesssim \epsilon.
    \end{align*}
    
    \hfill $\square$

    \subsection{Natural Policy Gradient}
    
    First we show the following lemma:
    \begin{lemma}\label{lem:diff_fisher}
        For any $\theta \in \Theta$ and vector $x$, the following holds:
        \begin{equation*}
            \norm{(K(\theta_{t-1}) + \xi I)^{-1} x - K(\theta_{t-1})^\dagger x } \leq C_{\xi} \norm{x},
        \end{equation*}
        where $C_{\xi} = \max(\zeta^{-1}, \xi^{-1}) > 0$ and $\zeta = \inf_{\theta} \min_{k: \lambda_k > 0} \lambda_k(K(\theta)) \geq 0$ is a policy-dependent regularity constant, which measures the smallest non-zero eigenvalue of $K(\theta)$ across all policies $\theta$.
    \end{lemma}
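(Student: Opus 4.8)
The plan is to reduce everything to a one-dimensional eigenvalue comparison via the spectral theorem. First I would note that $K(\theta_{t-1})$ is symmetric and positive semidefinite, since it is the average $\Exp_{s,a\sim d_{\theta_{t-1}}^\rho}[\psi_{\theta_{t-1}}(s,a)\psi_{\theta_{t-1}}(s,a)^\top]$ of rank-one PSD matrices (this is exactly the boundedness-of-$\norm{K(\theta)}$ computation already carried out in the NPG section). Writing its spectral decomposition $K(\theta_{t-1}) = \sum_{k=1}^N \lambda_k u_k u_k^\top$ with orthonormal $u_k$ and $\lambda_k \geq 0$, we have $(K(\theta_{t-1})+\xi I)^{-1} = \sum_k \frac{1}{\lambda_k+\xi} u_k u_k^\top$ and, by the definition of the Moore--Penrose pseudo-inverse, $K(\theta_{t-1})^\dagger = \sum_{k:\lambda_k>0} \frac{1}{\lambda_k} u_k u_k^\top$.

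Subtracting, the difference operator is diagonal in the same eigenbasis:
\[
(K(\theta_{t-1})+\xi I)^{-1} - K(\theta_{t-1})^\dagger = \sum_{k:\lambda_k>0}\left(\frac{1}{\lambda_k+\xi}-\frac{1}{\lambda_k}\right)u_k u_k^\top + \sum_{k:\lambda_k=0}\frac{1}{\xi}\, u_k u_k^\top .
\]
Its operator norm is therefore the largest absolute value among the displayed coefficients. On the kernel of $K(\theta_{t-1})$ the coefficient is exactly $1/\xi$. For a strictly positive eigenvalue $\lambda_k$, I would bound $\left|\frac{1}{\lambda_k+\xi}-\frac{1}{\lambda_k}\right| = \frac{\xi}{\lambda_k(\lambda_k+\xi)} \leq \frac{\xi}{\lambda_k\,\xi} = \frac{1}{\lambda_k} \leq \frac{1}{\zeta}$, using $\lambda_k+\xi \geq \xi$ and the definition of $\zeta$ as the infimum over policies of the smallest nonzero eigenvalue of $K(\theta)$. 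Hence $\norm{(K(\theta_{t-1})+\xi I)^{-1} - K(\theta_{t-1})^\dagger}_{op} \leq \max(\zeta^{-1},\xi^{-1}) = C_\xi$, and the claimed inequality follows at once from submultiplicativity, $\norm{(K(\theta_{t-1})+\xi I)^{-1}x - K(\theta_{t-1})^\dagger x} \leq \norm{(K(\theta_{t-1})+\xi I)^{-1} - K(\theta_{t-1})^\dagger}_{op}\,\norm{x}$.

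There is essentially no obstacle here: the whole content is the elementary estimate $\frac{\xi}{\lambda(\lambda+\xi)} \leq \min(\lambda^{-1},\xi^{-1})$ applied spectrally. The only points requiring a word of care are the bookkeeping on the kernel of $K(\theta_{t-1})$ (the $1/\xi$ block, which is why $\xi^{-1}$ appears in $C_\xi$), the degenerate case $\zeta = 0$ in which $C_\xi = \infty$ and the bound is vacuous, and the full-rank case in which the second sum is empty and $K(\theta_{t-1})^\dagger = K(\theta_{t-1})^{-1}$, handled identically. This lemma will then feed into the NPG optimality analysis by letting us replace the implementable update direction $(K_t+\xi I)^{-1}\widehat{\nabla J}$ with the idealized natural gradient $K(\theta)^\dagger \nabla J$ up to an additive $C_\xi$-controlled error.
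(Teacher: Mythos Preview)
Your proof is correct and follows essentially the same approach as the paper: both diagonalize $K(\theta_{t-1})$ via the spectral theorem, read off $(K+\xi I)^{-1}$ and $K^\dagger$ in the common eigenbasis, and bound each diagonal coefficient of the difference by $\zeta^{-1}$ on the range (via $\frac{\xi}{\lambda_k(\lambda_k+\xi)}\leq \frac{1}{\lambda_k}\leq \zeta^{-1}$) and by $\xi^{-1}$ on the kernel. Your operator-norm packaging is a bit cleaner than the paper's coordinate-by-coordinate computation, but the underlying argument and the key inequality are identical.
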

    
    \textbf{Remarks:} We adapt this from a similar result presented in \cite{xu2020improving}. This lemma is used to bound the stable inverse Fisher matrix $(K(\theta_t) + \xi I)^{-1}$ from the unstable pseudo-inverse $K(\theta_t)^{\dagger}$. Not only is the computation of the inverse far simpler than computation of the pseudo-inverse, but it will also be used to upper bound the rate of convergence. We can rewrite the condition number $\zeta$ in terms of the condition number used in \cite{agarwal2020optimality}, but this introduces unnecessary complexity for our purposes.
    
    \begin{proof} Since $K(\theta)$ is real and symmetric, it is diagonalizable into $U(\theta)^\top \Xi(\theta) U(\theta)$, where $U(\theta) = [u_1, u_2, \ldots u_d]$ is the orthonormal space of eigenvectors (and appropriate basis for the null-space) and $\Xi(\theta) = \text{diag}(\xi_1, \ldots, \xi_d)$ where $\xi_i$ is the $i$-th eigenvalue (including zeroes). Thus we can write the following:
    \begin{equation*}
        \begin{split}
            \sq{K(\theta) + \xi I}^{-1} x &= U(\theta)^\top (\Xi(\theta)^{-1} + \frac{1}{\xi} I) U(\theta) x, \\
            &= U(\theta)^\top \sq{\frac{1}{\xi_1 + \xi} \inner{u_1}{x}, \frac{1}{\xi_2 + \xi}  \inner{u_2}{x}\ldots \frac{1}{\xi_d + \xi}  \inner{u_d}{x}}^\top. 
        \end{split}
    \end{equation*}
    Analogously, the exact pseudo-inverse yields:
    \begin{equation*}
    \begin{split}
            K(\theta)^\dagger x &= U(\theta)^\top \Xi(\theta)^{\dagger} U(\theta) x \\
            &= U(\theta)^\top \sq{\frac{1}{\xi_1} \inner{u_1}{x}, \frac{1}{\xi_2}  \inner{u_2}{x}\ldots \frac{1}{\xi_k}  \inner{u_k}{x}, 0 \ldots 0}^\top, 
        \end{split}
    \end{equation*}
    where $k$ is the rank of $K(\theta)$. Subsequently their difference is:
    \begin{equation*}
    \begin{split}
        & K(\theta + \xi I)^{-1} x - K(\theta)^\dagger x \\
        &\leq U(\theta)^\top \sq{\left(\frac{1}{\xi_1 + \xi} - \frac{1}{\xi_1}\right) \inner{u_1}{x}, \ldots \left(\frac{1}{\xi_k + \xi} - \frac{1}{\xi_k}\right)  \inner{u_k}{x}, \frac{1}{\xi} \inner{u_{k+1}}{x} \ldots \frac{1}{\xi} \inner{u_d}{x}}^\top \\
        &= - \xi U(\theta)^\top \sq{\left(\frac{1}{\xi_1(\xi_1 + \xi)}\right) \inner{u_1}{x}, \ldots \left(\frac{1}{\xi_k(\xi_k + \xi)}\right)  \inner{u_k}{x}, -\frac{1}{\xi^2} \inner{u_{k+1}}{x} \ldots -\frac{1}{\xi^2} \inner{u_d}{x}}^\top.
    \end{split}    
    \end{equation*}
    Since the information matrix is always positive semidefinite, we bound $\xi_1 + \xi \geq \xi_k + \xi \geq \xi$. Thus the norm is bounded:
    \begin{equation*}
        \begin{split}
        \norm{K(\theta + \xi I)^{-1} x - K(\theta)^\dagger x} &\leq \xi \norm{U(\theta)^\top} \max(\xi^{-2}, \xi^{-1} \zeta^{-1}) \norm{x}\\
        & \leq \max(\zeta^{-1}, \xi^{-1}) \norm{x} \triangleq C_{\xi} \norm{x},
        \end{split}
    \end{equation*}
    where $\zeta$ is described in the lemma statement, and we use the fact that $U$ is unitary. \end{proof}
    
    We adapt another Lemma without proof, to quantify the gap between any two policies in a policy class.
    \begin{lemma} \label{lem:performance_diff}
        (Performance Difference Lemma, Adapted from \cite[Lemma 3.2]{agarwal2020optimality}) For all policies $\pi_{\theta_1}, \pi_{\theta_2}$ and all initial distributions $\rho$:
        \begin{equation*}
            J(\theta_1) - J(\theta_2) \triangleq \Exp_{s \sim \rho}\sq{V_{\theta_1}(s) - V_{\theta_2}(s)} = \frac{1}{1-\disc} \Exp_{s,a \sim d_{\theta_1}^{\rho}} \sq{A_{\theta_2}(s,a)}.
        \end{equation*}
    \end{lemma}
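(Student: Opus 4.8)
The plan is to run the classical telescoping argument, being a touch careful with convergence since the state space need not be compact. Fix an arbitrary initial state $s \in \State$ and write $\Exp_{\theta_1}[\,\cdot \mid s_0 = s]$ for expectation over trajectories $(s_0,a_0,s_1,a_1,\dots)$ generated by $\pi_{\theta_1}$ with $s_0 = s$. Starting from $V_{\theta_1}(s) = \Exp_{\theta_1}[\sum_{t\ge 0}\gamma^t r_t \mid s_0 = s]$ and subtracting the constant $V_{\theta_2}(s) = V_{\theta_2}(s_0)$, I would insert the pathwise telescoping identity
\[
\sum_{t\ge 0}\gamma^t\bigl(\gamma V_{\theta_2}(s_{t+1}) - V_{\theta_2}(s_t)\bigr) = -V_{\theta_2}(s_0),
\]
which is valid because $\abs{V_{\theta_2}}\le \alpha/(1-\gamma)$ forces $\gamma^t V_{\theta_2}(s_t)\to 0$ and makes the series absolutely convergent. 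This yields
\[
V_{\theta_1}(s) - V_{\theta_2}(s) = \Exp_{\theta_1}\Bigl[\sum_{t\ge 0}\gamma^t\bigl(r_t + \gamma V_{\theta_2}(s_{t+1}) - V_{\theta_2}(s_t)\bigr)\;\Big|\; s_0 = s\Bigr].
\]

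Next I would condition each summand on $(s_t,a_t)$ via the tower property. Since $\Exp[r_t\mid s_t,a_t] + \gamma\,\Exp[V_{\theta_2}(s_{t+1})\mid s_t,a_t]$ is precisely the Bellman evaluation $Q_{\theta_2}(s_t,a_t)$, the summand collapses to $Q_{\theta_2}(s_t,a_t) - V_{\theta_2}(s_t) = A_{\theta_2}(s_t,a_t)$, so that
\[
V_{\theta_1}(s) - V_{\theta_2}(s) = \Exp_{\theta_1}\Bigl[\sum_{t\ge 0}\gamma^t A_{\theta_2}(s_t,a_t)\;\Big|\; s_0 = s\Bigr].
\]
Since $\abs{A_{\theta_2}}\le 2\alpha/(1-\gamma)$, Fubini–Tonelli legitimizes exchanging the expectation with the infinite sum here and in the final step.

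Finally I would integrate over $s\sim\rho$ and pass the integral through the sum, giving
\[
J(\theta_1) - J(\theta_2) = \sum_{t\ge 0}\gamma^t\,\Exp_{(s_t,a_t)\sim p_t(\cdot\mid s_0\sim\rho)}\bigl[A_{\theta_2}(s_t,a_t)\bigr],
\]
and then recognizing the right-hand side, via the definition $d_{\theta_1}^\rho(s,a) = (1-\gamma)\sum_{t\ge 0}\gamma^t p_t(s,a\mid s_0\sim\rho)$, as $\tfrac{1}{1-\gamma}\,\Exp_{(s,a)\sim d_{\theta_1}^\rho}[A_{\theta_2}(s,a)]$, which is the claimed identity.

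This is a classical fact, so there is no genuine obstacle: the only two points requiring a sentence of care are the absolute convergence of the telescoped series (so the regrouping into advantage terms is legitimate) and the interchange of the integral over $\rho$ with the discounted sum, both of which follow immediately from the uniform reward bound $\abs{r_t}\le\alpha$.
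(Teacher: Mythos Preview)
Your proof is correct and follows the classical telescoping argument. The paper actually states this lemma \emph{without proof}, citing it as adapted from \cite[Lemma 3.2]{agarwal2020optimality}; so there is nothing to compare against beyond noting that your argument is the standard one and is carried out carefully (the remarks on absolute convergence and Fubini--Tonelli are appropriate given the paper's continuous, possibly non-compact state and action spaces).
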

    
    \noindent \textit{Proof of Theorem \ref{th:opt} for NPG:} Consider a $\mathsf{KL}$ function on the policies at each state: $F(\theta, s) = \mathsf{KL}(\pi_*(\cdot|s), \pi_{\theta}(\cdot|s)) \triangleq \int_\A \pi_*(a| s) \log \frac{\pi_*(a| s)}{\pi_{\theta}(a| s)} \, da$, where $\pi_* \triangleq \pi_{\theta_*}$ is any policy that maximizes $J$. Let $d_*^\rho $ be the visitation distribution measure induced by $\pi_*$ on $\State \times \A$, and let $H_*^\rho$ be the state component of this distribution measure as in Appendix A. If we define $F(\theta) = \Exp_{s \sim H_*^\rho} \sq{F(\theta, s)}$, then
    \begin{align*}
        F(\theta_t) - F(\theta_{t-1}) &\triangleq \Exp_{s \sim H_*^\rho} \sq{F(\theta_t, s) - F(\theta_{t-1}, s)} \\
        &= \Exp_{s \sim H_*^\rho} \sq{\int_{\A} \left(\log \frac{\pi_*(a| s)}{\pi_{\theta_t}(a| s)} - \log \frac{\pi_*(a| s)}{\pi_{\theta_{t-1}}(a| s)}\right)\pi_*(a| s) \, da } \\
            &= \Exp_{s, a \sim d_{*}^\rho} \sq{\log \pi_{\theta_{t-1}}(a|s) - \log \pi_{\theta_t}(a|s)} \\
            &\overset{(i)}{\geq}
            \Exp_{s, a \sim d_{*}^\rho} \sq{\nabla \log \pi_{\theta_{t-1}}(a|s)}^\top (\theta_t - \theta_{t-1}) - C_{\nu,2} \norm{\theta_t - \theta_{t-1}}^{\beta_2+1} \\
            &\triangleq \Exp_{s, a \sim d_{*}^\rho} \sq{\psi_{\theta_{t-1}}(s, a)}^\top (\theta_t - \theta_{t-1}) - C_{\nu,2} \norm{\theta_t - \theta_{t-1}}^{\beta_2+1},
    \end{align*}
    where in $(i)$ we use the Hölder-smoothness condition on $\psi_{\theta} \triangleq \nabla \log \pi_\theta$ from Assumption \ref{as:smooth}. Let $g_t + e_t$ be the exact and error components of the approximate gradient as before. Concentrating on the first term, we can make use of the triangle inequality to obtain the following:
    \begin{align*}
        &\Exp_{s, a \sim d_{*}^\rho} \sq{\psi_{\theta_{t-1}}(s, a)}^\top (\theta_t - \theta_{t-1}) \\
        &= \Exp_{s, a \sim d_{*}^\rho} \sq{\psi_{\theta_{t-1}}(s, a)}^\top (h_t (K_t + \xi I)^{-1} (g_t+e_t)) \\
        &= \underbrace{\Exp_{s, a \sim d_{*}^\rho} \sq{\psi_{\theta_{t-1}}(s, a)}^\top (h_t (K_t + \xi I)^{-1} g_t - h_t (K(\theta_{t-1}) + \xi I)^{-1} g_t)}_{(a)} \\
        & \quad + \underbrace{\Exp_{s, a \sim d_{*}^\rho} \sq{\psi_{\theta_{t-1}}(s, a)}^\top (h_t (K(\theta_{t-1}) + \xi I)^{-1} g_t - h_t K(\theta_{t-1})^\dagger g_t)}_{(b)} \\
        & \quad + \underbrace{\Exp_{s, a \sim d_{*}^\rho} \sq{\psi_{\theta_{t-1}}(s,a)^\top h_t K(\theta_{t-1})^\dagger g_t - h_t A_{\theta_{t-1}}(s,a)}}_{(c)} + \underbrace{\Exp_{s, a \sim d_{*}^\rho} \sq{h_t A_{\theta_{t-1}}(s,a)}}_{(d)} \\
        &\quad + \underbrace{\Exp_{s, a \sim d_{*}^\rho} \sq{\psi_{\theta_{t-1}}(s, a)}^\top (h_t (K_t + \xi I)^{-1} e_t)}_{(e)}.
    \end{align*}
    Briefly explaining the meaning of each of these: \textbf{(a)} represents the difference between using the sampled Fisher matrix $K_t$ and the true matrix $K(\theta_{t-1})$, while \textbf{(b)} represents the difference between the stabilized matrix $(K+\xi I)^{-1}$ and the unstabilized matrix $K^\dagger$, \textbf{(c)} is used to represent the difference between the natural gradient objective and the true function, \textbf{(d)} represents the true objective, and \textbf{(e)} represents the error due to the approximate gradient.
    
    Consequently we bound each of these terms individually. We begin with $(b)$:
    \begin{align*}
            (b): \quad \quad &\Exp_{s, a \sim d_{*}^\rho} \sq{\psi_{\theta_{t-1}}(s, a)}^\top (h_t (K(\theta_{t-1}) + \xi I)^{-1} g_t - h_t (K(\theta_{t-1})^\dagger g_t) \\
            &\overset{}{\geq} -h_t\norm{\Exp_{s, a \sim d_{*}^\rho} \sq{\psi_{\theta_{t-1}}(s, a)}}   \norm{K(\theta_{t-1})^\dagger g_t  - (K(\theta_{t-1}) + \xi I)^{-1} g_t} \\
            &\overset{(i)}{\geq} - h_t C_{\xi} \sqrt{\Exp_{s, a \sim d_{*}^\rho} \sq{\norm{\psi_{\theta_{t-1}}(s, a)}^2}} \norm{g_t} \\
            &\overset{(ii)}{\geq}  -h_t C_{\xi} \sqrt{\norm{\frac{d_{*}^\rho}{d_{\theta_{t-1}}^\rho}}_\infty} \sqrt{\Exp_{s, a \sim d_{\theta_{t-1}}^\rho} \sq{\norm{\psi_{\theta_{t-1}}(s, a)}^2}}   \norm{g_t} \\
            &\overset{(iii)}{\geq}  - h_t C_{\xi} \sqrt{D_{\infty} \psi_{\infty}} \norm{g_t},
    \end{align*}
    where in $(i)$ we apply Lemma \ref{lem:diff_fisher}, and in $(ii)$ we change the measure from $d_*^\rho$ to $d_{\theta_{t-1}}^\rho$ and use the Triangle inequality, and in $(iii)$ we substitute the definition for $D_\infty$, and $\psi_{\infty}$ as defined in Assumption \ref{as:moment}. For $(a)$, we get:
    \begin{align*}
            (a): \quad \quad &\Exp_{s, a \sim d_{*}^\rho} \sq{\psi_{\theta_{t-1}}(s, a)}^\top (h_t (K_t + \xi I)^{-1} g_t - h_t (K(\theta_{t-1}) + \xi I )^{-1} g_t) \\
            &\overset{(i)}{\geq} -h_t \norm{\Exp_{s, a \sim d_{*}^\rho} \sq{\psi_{\theta_{t-1}}(s, a)}} \norm{(K_t + \xi I)^{-1} g_t - (K(\theta_{t-1})+ \xi I )^{-1} g_t} \\
            &\overset{(ii)}{\geq} -h_t\sqrt{D_{\infty} \psi_{\infty}}  \norm{(K_t + \xi I)^{-1} g_t - (K(\theta_{t-1})+ \xi I )^{-1} g_t} \\
            &\overset{(iii)}{\geq} - h_t\frac{2\sqrt{D_\infty \psi_\infty}}{\xi} \norm{g_t},
    \end{align*}
    where in $(i)$ we use the Cauchy-Schwarz inequality, in $(ii)$ we again change the measure and use Assumption \ref{as:moment}, and in $(iii)$ we use the boundedness $\norm{K_t + \xi I} \geq \xi$, $\norm{K(\theta_{t-1}) + \xi I} \geq \xi$. This term can be tightened but it will not change the order of the resultant bound.
    \begin{equation*}
        \begin{split}
            (c): \quad \quad &\Exp_{s, a \sim d_{*}^\rho} \sq{\psi_{\theta_{t-1}}(s,a)^\top h_t K(\theta_{t-1})^\dagger g_t - h_t A_{\theta_{t-1}}(s,a)} \\
            &\geq -h_t  \sqrt{\Exp_{s, a \sim d_{*}^\rho} \sq{\left(\psi_{\theta_{t-1}}(s,a)^\top K(\theta_{t-1})^\dagger g_t - A_{\theta_{t-1}}(s,a)\right)^2}}\\
            &\overset{(i)}{\geq} -h_t \sqrt{\norm{\frac{d_*^\rho}{d_{\theta_{t-1}}^\rho}}_\infty} \sqrt{\Exp_{s,a \sim d_{\theta_{t-1}}} \sq{(\psi_{\theta_{t-1}}(s,a)^\top K(\theta_{t-1})^\dagger g_t - A_{\theta_{t-1}}(s,a))^2}} \\
            &\overset{(ii)}{\geq} - h_t \sqrt{D_{\infty} E_{\Pi}},
        \end{split}
    \end{equation*}
    where in $(i)$ we substitute again the distribution mismatch coefficient, and in $(ii)$ we substitute the condition number of the policy class: $E_\Pi = \max_{\theta} \Exp_{s,a \sim d_{\theta}^\rho} \sq{\psi_{\theta}(s,a)^\top K(\theta)^\dagger \nabla J(\theta) - A_{\theta}(s,a))^2}$.
    \begin{equation*}
    \begin{split}
        (d): \quad \quad \Exp_{s, a \sim d_{*}^\rho} \sq{h_t A_{\theta_{t-1}}(s, a)} \overset{}{=} h_t (1-\gamma)(J(\theta_*) - J(\theta_{t-1})),
    \end{split}
    \end{equation*}
    by the performance difference lemma, Lemma \ref{lem:performance_diff}. Lastly we recall that the noise term is bounded by:
    \begin{equation*}
        \begin{split}
            (e): \Exp_{s, a \sim d_{*}} \sq{\psi_{\theta_{t-1}}(s, a)}^\top \Exp \sq{(h_t (K_t + \xi I)^{-1} e_t)} &\geq -\frac{h_t \sqrt{D_\infty \psi_\infty}}{\xi} \left(\frac{\sigma}{(1-\gamma) \sqrt{B}}\right). \\
        \end{split}
    \end{equation*}
    Finally, combining all of these inequalities together and taking expectation over the randomness of $\theta_t$, we get:
    \begin{align*}
        &\Exp \sq{F(\theta_t) - F(\theta_{t-1})} \\
        &\geq h_t (1-\gamma)\Exp \sq{J(\theta_*) - J(\theta_{t-1})} - C_{\nu,2} \Exp \sq{ \norm{h_t (K_t + \xi I)^{-1} (g_t + e_t)}^{\beta_2 + 1}} \\
        &\qquad - h_t \sqrt{D_\infty \psi_\infty} \bracket{\frac{\sigma}{\xi(1-\gamma) \sqrt{B}} + \frac{\sqrt{E_\Pi}}{\sqrt{\psi_{\infty}}} + \frac{C_{\xi} \xi + 2}{\xi}\Exp \sq{\norm{g_t}}} \\
        &\overset{(i)}{\geq} h_t (1-\gamma)\Exp\sq{J(\theta_*) - J(\theta_{t-1})} - \frac{C_{\nu,2} h_t^{\beta_2 + 1}}{\xi^{\beta_2+1}} \left(\Exp\sq{\norm{g_t + e_t}^{\beta_2+1}}\right) \\
        & \qquad - h_t \sqrt{D_\infty \psi_\infty} \bracket{\frac{\sigma}{\xi(1-\gamma) \sqrt{B}} + \frac{\sqrt{E_\Pi}}{\sqrt{\psi_{\infty}}} + \frac{C_{\xi} \xi + 2}{\xi}\Exp \sq{\norm{g_t}}} \\        
        &\overset{}{\geq} h_t (1-\gamma)\Exp\sq{J(\theta_*) - J(\theta_{t-1})} - \frac{2C_{\nu,2} h_t^{\beta_2 + 1}}{\xi^{\beta_2+1}} \left(\Exp\sq{\norm{e_t}^{\beta_2+1}} +\Exp \sq{\norm{g_t}^{\beta_2+1}}\right) \\
        & \qquad - h_t \sqrt{D_\infty \psi_\infty} \bracket{\frac{\sigma}{\xi(1-\gamma) \sqrt{B}} + \frac{\sqrt{E_\Pi}}{\sqrt{\psi_{\infty}}} + \frac{C_{\xi} \xi + 2}{\xi}\Exp \sq{\norm{g_t}}} \\
        &\overset{(ii)}{\geq} h_t (1-\gamma)\Exp \sq{J(\theta_*) - J(\theta_{t-1})} - \frac{2C_{\nu,2} h_t^{\beta_2 + 1}}{\xi^{\beta_2+1}} \left(\left(\frac{\sigma}{(1-\gamma) \sqrt{B}} \right)^{\beta_2+1} + \Exp \sq{\norm{g_t}^{\beta_2+1}} \right) \\
        & \qquad - h_t \sqrt{D_\infty \psi_\infty} \bracket{\frac{\sigma}{\xi(1-\gamma) \sqrt{B}} + \frac{\sqrt{E_\Pi}}{\sqrt{\psi_{\infty}}} + \frac{C_{\xi} \xi + 2}{\xi}\Exp \sq{\norm{g_t}}},
    \end{align*}
    where in $(i)$ we use that $K_t + \xi I$ has minimum eigenvalue $\xi$, in $(ii)$ we again use our bound in Lemma \ref{lem:noise}.
    
    Finally, we note that
    \begin{align*}
        F(\theta_t) - F(\theta_{t-1}) &= \Exp_{s,a \sim d_*^\rho}\sq{\log \frac{\pi_{\theta_{t-1}}}{\pi_{\theta_{t}}}(a|s)} \\
        &\leq D_{\infty} \Exp_{s,a \sim d_{\theta_{t-1}}^\rho}\sq{\log \frac{\pi_{\theta_{t-1}}}{\pi_{\theta_{t}}}(a|s)} \\
        &= D_{\infty} \Exp_{s \sim H_{\theta_{t-1}}^\rho}\sq{\Exp_{a \sim \pi_{\theta_{t-1}}(\cdot|s)} \sq{\log \frac{\pi_{\theta_{t-1}}}{\pi_{\theta_{t}}}(a|s)}} \\
        &\overset{(i)}{\leq} C_{\nu, 1} D_{\infty} \norm{\theta_{t-1} - \theta_{t}}^{\beta_1} \\
        &\leq \frac{C_{\nu,1} D_{\infty}}{\xi^{\beta_1}} h_t^{\beta_1} \left(\norm{g_t+e_t}^{\beta_1} \right) \\
        &\leq \frac{2C_{\nu,1} D_{\infty}}{\xi^{\beta_1}} h_t^{\beta_1} \left(\norm{g_t}^{\beta_1} + \norm{e_t}^{\beta_1} \right),
    \end{align*}
    where in $(i)$, we use the $\mathsf{KL}$ smoothness bound in Assumption \ref{as:smooth}.
    Thus, we find after taking expectation the following inequality:  
    \begin{equation*}
        \begin{split}
            J(\theta_*) - \Exp \sq{J(\theta_{t-1})} &\leq \frac{2C_{\nu,1} D_{\infty}}{\xi^{\beta_1}(1-\gamma)} h_t^{\beta_1-1} \left(\left(\frac{\sigma}{\xi(1-\gamma) \sqrt{B}}\right)^{\beta_1}  + \Exp \sq{\norm{g_t}^{\beta_1}} \right) \\
            &\qquad + \frac{2C_{\nu,2} h_t^{\beta_2}}{\xi^{\beta_2+1}(1-\gamma)} \left(\left(\frac{\sigma}{\xi(1-\gamma) \sqrt{B}} \right)^{\beta_2+1} + \Exp \sq{\norm{g_t}^{\beta_2+1}} \right) \\
            &\qquad + \frac{\sqrt{D_\infty \psi_\infty}}{1-\gamma} \bracket{\frac{\sigma}{\xi(1-\gamma) \sqrt{B}} + \frac{\sqrt{E_\Pi}}{\sqrt{\psi_{\infty}}} + \frac{C_{\xi} \xi + 2}{\xi}\Exp \sq{\norm{g_t}}}. 
        \end{split}
    \end{equation*}
    This concludes the proof if we write $C_{NPG, 3} = \frac{2C_{\nu,1} D_{\infty}}{\xi^{\beta_1}}$, $C_{NPG, 4} =  \frac{2C_{\nu,2} h_t^{\beta_2 + 1}}{\xi^{\beta_2+1}}$ and $C_{NPG, 5} = \frac{(C_{\xi}\xi +2)}{\xi}\sqrt{D_{\infty} \psi_{\infty}}.$
    \hfill $\square$
    
    In the following, we take the minimum of the right hand side over $t=1 \ldots T$.

    \textit{Proof of Corollary \ref{cor:opt}, NPG}:
    
    It remains now to substitute the learning rate and gradient bounds from each of the cases we analyzed earlier. Since $\beta_1 \geq 1$, $\beta_2 > 0$, the final term always dominates the order when we consider the dependencies in $\epsilon, (1-\gamma)$. In the sequel, we will ignore the leading terms, noting that they may contribute to other dependencies in $\xi$, $D_{\infty}$, etc.

     \textbf{Case I, $h_t = \lambda$}: Recall that in this case, 
    \begin{align*}
    \min_{t \leq T} \norm{g_t}^2 &\leq\frac{4(\psi_\infty+\xi)}{\lambda T} \left( J_* - J(\theta_0)\right)+ \left(\frac{\sigma}{(1-\gamma) \sqrt{B}}\right)^{\holde+1}\\
    &\qquad + 2^{\frac{8}{1-\holde}} \left(\frac{\left(L_1+L_2\right) (\psi_\infty+\xi)}{(1-\gamma)\xi^{\maxholde+1}}\right)^{\frac{2}{1-\holde}} \lambda^{\frac{2\holde}{1-\holde}}.
    \end{align*}
    Subsequently, we can choose $\lambda \lesssim (1-\gamma)^{\frac{2-\holde}{\holde}} \epsilon^{\frac{1-\holde}{\holde}}$, so that the final term is $\lesssim \epsilon^2 (1-\gamma)^2$.
    
    Subsequently we require $T \gtrsim \frac{1}{\lambda \epsilon^2 (1-\gamma)^2}$ to make the first term $\lesssim (1-\gamma)^2\epsilon^2$  and $B \gtrsim \epsilon^{-\frac{4}{\holde+1}} (1-\gamma)^{-2 - \frac{4}{\holde+1}}$ to make the second term $\lesssim (1-\gamma)^2\epsilon^2$. 
   Substituting our requirement on $\lambda$, we get $T \gtrsim \epsilon^{-\frac{1+\holde}{\holde}} (1-\gamma)^{-\frac{2+\holde}{\holde}}.$ Thus, we get $\min_{t \leq T} \norm{g_t}^2 \lesssim (1-\gamma)^2\epsilon^2$. This then implies
    \begin{align*}
        \min_{t\leq T} J_* - \Exp \sq{J(\theta_t)} \lesssim \epsilon + \frac{\sqrt{D_\infty E_\Pi}}{1-\gamma}.
    \end{align*}
    
 \textbf{Case II, $h_t = \lambda T^{\frac{\holde-1}{\holde+1}}$}:  Recall that in this case, 
    $$\min_{t \leq T} \norm{g_t}^2 \leq \frac{4(\psi_\infty+\xi)}{\lambda} T^{-\frac{2\holde}{\holde+1}} \left(J_* - J(\theta_0) + \frac{\lambda}{4} \right) + T^{\frac{\holde(\holde-1)}{\holde+1}} \left(\frac{\sigma}{(1-\gamma) \sqrt{B}}\right)^{\holde+1}.$$
    Subsequently we require $T \gtrsim \left(\frac{1}{\lambda \epsilon^2 (1-\gamma)^2}\right)^{\frac{\holde+1}{2\holde}}$ to bound the first term by $\lesssim \epsilon^{2} (1-\gamma)^{2}$. For simplicity, take $\lambda \asymp (1-\gamma)^{1/\holde}$, in which case we get $T \gtrsim \left(\frac{1}{\epsilon^2 (1-\gamma)^{2+{1/\holde}}}\right)^{\frac{\holde+1}{2\holde}}$.
    Consequently, the second term is bounded by
    \begin{align*}
        T^{\frac{\holde(\holde-1)}{\holde+1}} \left(\frac{\sigma}{(1-\gamma) \sqrt{B}}\right)^{\holde+1} &\lesssim \left(\frac{1}{\epsilon^2 (1-\gamma)^{2+1/\holde}}\right)^{\frac{\holde-1}{2}} \left(\frac{1}{(1-\gamma) \sqrt{B}}\right)^{\holde+1} \\
        &\lesssim \epsilon^{1-\holde} (1-\gamma)^{-\frac{4\holde^2+\holde-1}{2\holde}}B^{-\frac{\holde+1}{2}}.
    \end{align*}
    So we can take $B \gtrsim \epsilon^{-2} (1-\gamma)^{\frac{4\holde^2 + 5\holde - 1}{\holde(\holde+1)}}$,  to make this term $\lesssim (1-\gamma)^2\epsilon^2$. This then implies
    \begin{align*} 
        \min_{t\leq T} J_* - \Exp \sq{J(\theta_{t-1})} \leq \epsilon + \frac{\sqrt{D_\infty E_{\Pi}}}{1-\gamma}.
    \end{align*}
    
    \hfill $\square$

    \textbf{Remark:} It is possible to do similar analysis for policy gradient using this technique, but the quantity $E_{\Pi}$ lacks a meaningful interpretation in that case.

\section{Experiments}
All results of these experiments can be found in the two Figures in the main text, and they are primarily intended to demonstrate practical examples of policies that satisfy our assumptions. All experiments were run locally on a single CPU, with 1Gb of dedicated RAM. The experiments are not computationally intensive and are primarily illustrative.

\subsection{Generalized Gaussian Policies}
For Figure \ref{fig:generalized_gaussian}(a), we sampled $4000$ actions from a generalized Gaussian with mean $\inner{s}{\theta}$ with $\theta \in \R^2$ and parameter $\kappa = 1.2$, for a state $[1.0,0]$ in $\R^2$. We let the first coordinate $\theta^{(1)}$ range from $[-2, 2]$ and compute the score function is then computed for these actions. The magnitude of the difference (compared to the reference point, $\theta^{(1)} = 0$) is then generated, and plotted against the standard Gaussian ($\kappa = 2$). The tail growth properties as well as local smoothness can be seen clearly from the Figure.

For the exploration problem, we implement the reward function described in Equation \eqref{eq:example} with parameter $\theta^* = 3.9$ and initial parameter $\theta_0 = 0$. We aggregate gradients using a batch size of $1000$ and update using our policy gradient rule. The performance is then reported on a batch of $1000$, with standard deviations reported within this batch. We ran this experiment for 5 seeds and found that the qualitative behaviour does not change (although the inflection point of the policy behaviour is highly random, the Gaussian policy is always significantly worse than the generalized Gaussian).

\subsection{Unbounded Gradient}
We devise a simple policy and show that its gradient appears to be unbounded (in fact scaling logarithmically with the number of samples), but has a bounded integral according to its own measure $\pi_\theta$. For this, we simply implement the policy given in Example \ref{ex:l2}, and sample larger batches of actions. We then compute the gradients, and compare two norms that have been used in assumptions, that is the norm $\max_{n=1, \ldots K} \norm{\psi_\theta(s,a_n)}$ and our assumption $\frac{1}{K} \sum_{n=1}^K \norm{\psi_\theta(s, a_n)}^2$. We find that the former is unbounded for larger sets while the latter converges smoothly to a fixed value. As we explain in our paper, this shows that interesting policies can be covered by our assumptions while not being allowed in prior work.

\end{document}